\theoremstyle{plain}
\newtheorem{lem}{Lemma}
\newtheorem{thm}{Theorem}
\newtheorem*{thm*}{Theorem}
\newtheorem{coro}{Corollary}
\newtheorem{rem}{Remark}
\newtheorem{assum}{Assumption}
\newcommand{\Ex}{\mathbb{E}}
\newcommand{\Eg}[2]{\mathbb{E}_{#1}\left[{#2}\right]}
\newcommand{\E}[1]{\mathbb{E}\left[{#1}\right]}
\newcommand{\norm}[1]{\left\lVert#1\right\rVert^{2}}
\newcommand{\bigO}[1]{\mathcal{O}\left({#1}\right)}
\newcommand{\brac}[1]{\left({#1}\right)}
\crefname{equation}{}{}
\Crefname{equation}{}{}
\crefname{thm}{theorem}{theorems}
\Crefname{thm}{Theorem}{Theorems}
\crefname{clm}{claim}{claims}
\Crefname{clm}{Claim}{Claims}
\Crefname{coro}{Corollary}{Corollaries}
\Crefname{lem}{Lemma}{Lemmas}
\Crefname{sec}{Section}{Sections}
\crefname{app}{appendix}{appendices}
\Crefname{app}{Appendix}{Appendices}
\crefname{prop}{proposition}{propositions}
\Crefname{prop}{Proposition}{Propositions}
\Crefname{propty}{Property}{Properties}
\crefname{figure}{fig.}{figures}
\Crefname{figure}{Fig.}{Figures}
\crefname{defn}{definition}{definitions}
\Crefname{defn}{Definition}{Definitions}
\crefname{fact}{fact}{facts}
\Crefname{fact}{Fact}{Facts}
\crefname{appendix}{appendix}{appendices}
\Crefname{appendix}{Appendix}{Appendices}
\crefname{algo}{algorithm}{algorithms}
\Crefname{algo}{Algorithm}{Algorithms}
\crefname{algorithm}{algorithm}{algorithms}
\Crefname{algorithm}{Algorithm}{Algorithms}
\crefname{tbl}{table}{table}
\Crefname{tbl}{Table}{Table}
\crefname{table}{table}{table}
\Crefname{table}{Table}{Table}
\crefname{algorithm}{algorithm}{algorithms}
\Crefname{algorithm}{Algorithm}{Algorithms}
\crefname{conj}{conjecture}{conjectures}
\Crefname{conj}{Conjecture}{Conjectures}
\crefname{obs}{observation}{observations}
\Crefname{obs}{Observation}{Observations}
\newcommand{\bx}{{\bf x}}
\newcommand{\by}{{\bf y}}
\newcommand{\bv}{{\bf v}}
\newcommand{\bh}{{\bf h}}
\newcommand{\bw}{{\bf w}}
\newcommand{\bwt}{\bw^{(t)}}
\newcommand{\bvt}{\bv^{(t)}}
\newcommand{\bwitk}{\bw_i^{(t,k)}}
\newcommand{\bwitj}{\bw_i^{(t,j)}}
\newcommand{\xiitk}{\xi_i^{(t,k)}}
\newcommand{\xiitj}{\xi_i^{(t,j)}}
\newcommand{\bwtp}{\bw^{(t+1)}}
\newcommand{\byt}{\by^{(t)}}
\newcommand{\byit}{\by_i^{(t)}}
\newcommand{\byjt}{\by_j^{(t)}}
\newcommand{\byjtp}{\by_j^{(t+1)}}
\newcommand{\bhit}{\bh_i^{(t)}}
\newcommand{\bhjt}{\bh_j^{(t)}}
\newcommand{\Cc}{{\mathcal{C}}}
\newcommand{\set}{{\mathcal{S}}}
\newcommand{\deltai}{\Delta^{(t)}_i}
\newcommand{\deltaj}{\Delta^{(t)}_j}
\newcommand{\mc}{\mathcal}
\newcommand{\mco}{\mathcal O}
\newcommand{\mbb}{\mathbb}
\newcommand{\mbf}{\mathbf}
\newcommand{\mbe}{\mathbb E}
\newcommand{\lp}{\left(}
\newcommand{\rp}{\right)}
\newcommand{\lcb}{\left\{}
\newcommand{\rcb}{\right\}}
\newcommand{\lb}{\left[}
\newcommand{\rb}{\right]}
\newcommand{\lnr}{\left\|}
\newcommand{\rnr}{\right\|}
\newcommand{\lan}{\left\langle}
\newcommand{\ran}{\right\rangle}
\newcommand{\G}{\nabla}
\newcommand{\na}{N}
\newcommand{\sumin}{\sum_{i=1}^\na}
\newcommand{\sumjn}{\sum_{j=1}^\na}
\newcommand{\sumkt}{\sum_{k=0}^{\tau-1}}
\newcommand{\sumjt}{\sum_{j=0}^{\tau-1}}
\newcommand{\sumjk}{\sum_{j=0}^{k-1}}
\newcommand{\sumtT}{\sum_{t=0}^{T-1}}
\newcommand{\lrc}{\eta_c}
\newcommand{\lrs}{\tilde{\eta}_s}
\newcommand{\lrss}{\eta_s}
\newcommand{\nn}{\nonumber}
\newcommand{\numclients}{N}
\newcommand{\selclients}{M}
\newcommand{\numclusters}{K}
\newcommand{\algoname}{FedVARP}
\newcommand{\clusteralgoname}{ClusterFedVARP}
\DeclareMathOperator*{\argmin}{arg\,min}
\title{FedVARP: Tackling the Variance Due to Partial Client Participation\\ in Federated Learning}
\author[1]{Divyansh Jhunjhunwala}
\author[1]{Pranay Sharma}
\author[1]{Aushim Nagarkatti}
\author[1]{Gauri Joshi}
\affil[1]{%
    Carnegie Mellon University\\
    Pittsburgh, Pennsylvania, USA
}
\begin{document}
\maketitle

\begin{abstract}
Data-heterogeneous federated learning (FL) systems suffer from two significant sources of convergence error: 1) client drift error caused by performing multiple local optimization steps at clients, and 2) partial client participation error caused by the fact that only a small subset of the edge clients participate in every training round. We find that among these, only the former has received significant attention in the literature. To remedy this, we propose \texttt{FedVARP}, a novel variance reduction algorithm applied at the server that eliminates error due to partial client participation. 
To do so, the server simply maintains in memory the most recent update for each client and uses these as surrogate updates for the non-participating clients in every round. Further, to alleviate the memory requirement at the server, we propose a novel clustering-based variance reduction algorithm \texttt{ClusterFedVARP}. Unlike previously proposed methods, both \texttt{FedVARP} and \texttt{ClusterFedVARP} do not require additional computation at clients or communication of additional optimization parameters. Through extensive experiments, we show that \texttt{FedVARP} outperforms state-of-the-art methods, and \texttt{ClusterFedVARP} achieves performance comparable to \texttt{FedVARP} with much less memory requirements.
\end{abstract}

\section{Introduction}

Large-scale machine learning applications rely on numerous edge-devices to contribute their data, to learn better performing models. Federated Learning (FL) is a recent paradigm \citep{konecny2016federated, mcmahan2017communication} for distributed learning in which a \textit{central server} offloads some of the computation to the edge-devices or \textit{clients}, and the clients in return get to retain their private data, while only communicating the locally learned model to the server. For instance, when training a next-word prediction model \citep{hard2018federated}, FL allows a client to enjoy suggestions supplied by thousands of other clients in the same federation without ever explicitly revealing its own personal text history.

Typical FL applications are targeted towards low-power mobile phones that have severely limited uplink (client to server) bandwidth. This necessitates the need for novel algorithms to reduce the \textit{frequency} of communication required to train FL models. The first and the most popular algorithm in this setting is \texttt{FedAvg}
\citep{mcmahan2017communication}, which reduces communication frequency by requiring clients to perform \textit{multiple} local computations in each round. In each round of \texttt{FedAvg}, clients first download the current global model, and run several steps of SGD on their private data before sending back their local updates to the server. The server then updates the global model using the average of the local updates sent by the clients.

A subtle yet important feature that distinguishes FL systems from traditional data-center settings is the presence of \textit{heterogeneity} in local data across clients. While \texttt{FedAvg} improves communication-efficiency at the clients, it also leads to an additional error caused by this heterogeneity, colloquially known as \textit{client drift} error \citep{karimireddy2019scaffold}. Informally, allowing clients to perform multiple local steps causes local models to drift towards their individual local minimizers, which is inconsistent with the server objective of minimizing the global empirical loss \citep{khaled2020tighter, wang2018cooperative, stich2018local}. Despite recent advances \citep{pathak2020fedsplit, woodworth2020local}, a comprehensive theory regarding the usefulness of local steps remains elusive. Nonetheless, performing multiple local steps remains the most popular option for clients participating in FL due to its superior performance in practice.

Another defining characteristic of FL systems is \textit{partial client participation}. Given the scale of FL \citep{kairouz2019advances}, it is unrealistic to expect \textit{all} the clients to participate in every single round of FL training. For instance, clients may participate only when they are plugged into a power source and have access to a reliable wifi connection \citep{mcmahan2017communication}. 
In practice, we observe that only a small fraction of the total number of clients participate in any given round. This variance in client participation gives rise to what we term as \textit{partial client participation error}. This error further compounds the effect of data heterogeneity as the global model is consistently skewed towards the data distributions of the participating clients in every round. 

While error due to client drift has been well-established \citep{karimireddy2019scaffold, acar2021federated, khaled2020tighter}, we find that partial client participation error has not received similar attention. This is seen by the fact that several methods for mitigating client drift such as \citep{pathak2020fedsplit, zhang2020fedpd} cannot be directly extended to the partial client participation case. 
This is surprising, as our results indicate that error due to partial participation, rather than client drift, \textit{dominates} the convergence rate of \texttt{FedAvg} (Theorem \ref{thm:FedAvg}). For smooth non-convex functions, we quantify the effect of the various noise sources (stochastic gradient noise, partial client participation, and data heterogeneity across clients) on the error floor of \texttt{FedAvg}, and observe that the dominant error is contributed by partial client participation.

\paragraph{Our Contributions.} Keeping in mind the observation that partial client participation is the dominant source of error, we design a novel aggregation strategy at the server that completely eliminates partial client participation error. \emph{Our algorithm keeps the local SGD procedure unchanged and only modifies the server aggregation strategy}. As a result, our approach does not introduce any extra computation at the clients or lead to any additional communication between the clients and the aggregating server. Furthermore, we also design a more server-friendly approach to our algorithm that allows the server to flexibly choose the amount of error reduction based on its system constraints. We summarize our main contributions below.

\begin{itemize}[leftmargin=*]
    \item We analyze the convergence of \texttt{FedAvg} and highlight that the dominant term in the asymptotic error floor comes from the partial participation of clients.
    \item In \Cref{sec:\algoname}, we propose \texttt{\algoname} (Federated \underline{VA}riance \underline{R}eduction for \underline{P}artial Client participation), a novel aggregation strategy applied at the server to eliminate partial participation variance. \texttt{\algoname} uses the fact that the server can store and reuse the \textit{most recent update} for each client as an approximation of its current update. This allows the server to factor in contributions even from the non-participating clients when updating the global model.
    \item To relax the storage requirements of \texttt{\algoname}, we devise a novel clustering based aggregation strategy called \texttt{\clusteralgoname} in \Cref{sec:cluster}. \texttt{\clusteralgoname} in based on the observation that instead of storing unique latest updates for each client, we can cluster clients and store a single unified update that applies to \textit{all} the clients in that cluster. We show that as long as the heterogeneity within a cluster is sufficiently bounded, \texttt{\clusteralgoname} can significantly reduce partial client participation error, while being more storage-efficient.
    \item We conduct extensive experiments on vision and language modeling FL tasks that demonstrate the superior performance of \texttt{\algoname} over existing state-of-the-art methods. Further, we show that \texttt{\clusteralgoname} performs comparably to \texttt{\algoname}, with much less storage requirements in practice. 
\end{itemize}

For the purpose of theoretical analysis, throughout this paper we assume that in each round, the server uniformly selects a subset of clients from the total pool of clients. In practice, our algorithms can also be combined with non-uniform and biased client sampling strategies \citep{cho2020client,chen2020optimal} for greater empirical benefits. Furthermore we note that the idea of reusing client updates has also been considered in a recent work \texttt{MIFA} \citep{gu21mifa_neurips}, albeit in the context of dealing with arbitrary client participation. Owing to this similarity, we have a detailed comparison of our algorithm with \texttt{MIFA} in Section \ref{mifa_comp}. While outside the scope of this work, we believe designing server aggregation strategies to deal with arbitrary client participation is an open and challenging direction for future work. 

\section{Problem Setup}
We use the following notations in the remainder of the paper. Given a positive integer $m$, the set of numbers $\{ 1, 2, \hdots, m \}$ is denoted by $[m]$. Lowercase bold letters, for e.g., $\bx, \by$, are used for vectors. Vectors at client $i$ are denoted with subscript $i$, for e.g., $\bx_i$. Vectors at time $t$ are denoted with superscript $t$, for e.g., $\by^{(t)}$.

We consider optimizing the following finite sum of functions in a Federated Learning (FL) setting.
{\small
\begin{align}
    \min_{\bw \in \mathbb{R}^d} f(\bw) = \frac{1}{\numclients}\sum_{i=1}^\numclients f_i(\bw)
\label{eq:prob_form}
\end{align}}%
where $f_i(\bw) \triangleq \Eg{\xi_i \sim \mathcal{D}_i}{\ell(\bw,\xi_i)}$ is the local objective of the $i$-th client. Here $\ell(\cdot, \cdot)$ is the loss function, and $\xi_i$ represents a random data sample from the local data distribution $\mathcal{D}_i$.
$\numclients$ is the total number of clients in the FL system. Note that our formulation can be easily extended to the case where client objectives $\{ f_i(\cdot) \}$ are unequally weighted.

We begin by recalling the \texttt{FedAvg} algorithm. At round $t$, the server selects a random subset of clients $\set^{(t)}$ and sends the global model $\bw^{(t)}$ to these clients. The selected clients run \texttt{LocalSGD} (Algorithm \ref{FedAvg}) for $\tau$ steps.
These clients then send back their updates $\Delta_i^{(t)} = (\bw^{(t)} - \bw_i^{(t,\tau)})/\lrc \tau$ to the server ($\lrc$ is the client learning rate), which aggregates them to update the global model as follows:
{\small
\begin{align}
\label{fedavg_update}
    \bw^{(t+1)} = \bw^{(t)} - \lrs \frac{1}{|\set^{(t)}|}\sum_{i \in \set^{(t)}} \Delta_i^{(t)}
\end{align}}%
where $\lrs = \lrss \lrc \tau$, with $\lrss$ being the server learning rate.

\begin{algorithm}[h]
\caption{\texttt{LocalSGD}$(i,\bw^{(t)},\tau, \lrc)$}
\label{FedAvg}
\begin{algorithmic}[1]

% \Procedure{\texttt{LocalSGD}}{$i,\bw^{(t)},\tau, \lrc$}
\State Set $\bw_i^{(t,0)} = \bw^{(t)}$
\For{$k = 0,1\dots,\tau-1$}
\State Compute stochastic gradient $\nabla f_i(\bw_i^{(t,k)},\xi_i^{(t,k)})$
\State $\bw_i^{(t,k+1)} = \bw_i^{(t,k)} - \lrc \nabla f_i(\bw_i^{(t,k)},\xi_i^{(t,k)})$
\EndFor
\State Return $(\bw^{(t)} - \bw_i^{(t,\tau)})/\lrc \tau$
% \EndProcedure
\end{algorithmic}
\end{algorithm}

Note that due to the data heterogeneity, randomly sampling $\set^{(t)}$ inherently introduces some variance within our FL system, which we term as the \textit{partial participation error}. We characterize the effect of this partial participation error on the convergence bound of \texttt{FedAvg} in the next section.

\subsection{Convergence Analysis of FedAvg}

Before stating our convergence bound, we make the following standard assumptions.

\begin{assum}
\label{smooth_assump}
(Smoothness). Each local objective function is $L$-Lipshitz smooth, that is, $\lnr \nabla f_i(\bx)-\nabla f_i(\by) \rnr \leq L\lnr \bx-\by \rnr$, for all $i \in [\numclients]$.
\end{assum}

\begin{assum}
\label{stochastic}
(Unbiased gradient and bounded local variance). The stochastic gradient at each client is an unbiased estimator of the local gradient, i.e., $\Eg{\xi_i \sim \mathcal{D}_i}{ \nabla f_i(\bw,\xi_i)} = \nabla f_i(\bw)$ and its variance is bounded $\mbe_{\xi_i \sim \mathcal{D}_i} \norm{ \nabla f_i(\bw,\xi_i)-\nabla f_i(\bw)} \leq \sigma^2$, for all $i \in [\numclients]$.

\end{assum}

\begin{assum}
\label{global_var_assum}
(Bounded global variance). There exists a constant $\sigma_g>0$ such that the difference between the local gradient at the $i$-th client and the global gradient is bounded as follows: $\norm{\nabla f_i(\bw)-\nabla f(\bw)} \leq \sigma_g^2$, for all $i \in [\numclients]$.
\end{assum}

Following previous work \citep{mcmahan2017communication,karimireddy2019scaffold, wang2020tackling}, we model partial client participation as uniformly sampling a subset of clients \textit{without replacement} from the total pool of clients.

\begin{thm}[FedAvg Error Decomposition]
\label{thm:FedAvg}
Under Assumptions \ref{smooth_assump}, \ref{stochastic}, \ref{global_var_assum}, suppose in each round the server randomly selects $\selclients$ out of $\numclients$ clients without replacement to perform $\tau$ steps of local SGD. If the client learning rate $\lrc$, and the server learning rate $\lrss$ are chosen such that $\lrc \leq \frac{1}{8 L \tau}$, $\lrss \lrc \leq \frac{1}{24 \tau L}$, then the iterates $\{ \bwt \}$ generated by \texttt{FedAvg} satisfy
{\small\begin{align*}
    & \min_{t \in \{0, \hdots, T-1 \}} \mbe \norm{\nabla  f(\bw^{(t)})} \\
    & \leq \bigO{\frac{f(\bw^{(0)})-f^*}{\lrss \lrc \tau T}}+
    \underbrace{\bigO{\frac{\lrss \lrc L \sigma^2}{\selclients} + \lrc^2 L^2 (\tau-1)\sigma^2}}_{{\text{stochastic gradient error}}}\\
    & + \underbrace{\bigO{\frac{\lrss \lrc \tau L (\numclients-\selclients)\sigma_g^2}{\selclients(\numclients-1)}}}_{\text{partial participation error}} + \underbrace{\bigO{\lrc^2 L^2 \tau(\tau-1)\sigma_g^2}}_{\text{client drift error}},
\end{align*}}
where $f^* = \argmin_\mathbf{x} f(\mathbf{x})$.
\label{fedavgconv}
\end{thm}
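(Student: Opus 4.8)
The plan is a descent-lemma argument adapted to partial participation, organized so that the sampling variance is isolated cleanly. Write $\eta \triangleq \lrss\lrc\tau$ for the effective step size, so the server step is $\bwtp = \bwt - \eta\,\bd^{(t)}$ with $\bd^{(t)} \triangleq \tfrac{1}{\selclients\tau}\sum_{i\in\set^{(t)}}\sum_{k=0}^{\tau-1}\nabla f_i(\bwitk,\xiitk)$. First I would apply Assumption~\ref{smooth_assump} to consecutive iterates to obtain $f(\bwtp) \le f(\bwt) - \eta\lan\nabla f(\bwt),\bd^{(t)}\ran + \tfrac{L\eta^2}{2}\norm{\bd^{(t)}}$ and take the conditional expectation $\mbe_t[\cdot]$ given $\bwt$. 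Since $\set^{(t)}$ is uniform without replacement and the stochastic gradients are unbiased (Assumption~\ref{stochastic}), $\mbe_t[\bd^{(t)}] = \bar{\bg}^{(t)} \triangleq \tfrac{1}{\numclients\tau}\sum_{i=1}^{\numclients}\sum_{k=0}^{\tau-1}\nabla f_i(\bwitk)$; the inner-product term is then split with $-\lan a,b\ran = \tfrac{1}{2}(\norm{a-b}-\norm{a}-\norm{b})$ and $L$-smoothness into a leading $-\tfrac{\eta}{2}\norm{\nabla f(\bwt)}$, a negative $-\tfrac{\eta}{2}\norm{\bar{\bg}^{(t)}}$ kept for later, and a drift remainder at most $\tfrac{\eta L^2}{2\numclients\tau}\sum_{i,k}\mbe_t\norm{\bwitk-\bwt}$.

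Next I would bound $\tfrac{L\eta^2}{2}\mbe_t\norm{\bd^{(t)}}$ by writing it as $\norm{\bar{\bg}^{(t)}}$ plus the variance of $\bd^{(t)}$, and splitting the variance into the contribution of client sampling and that of the stochastic gradients (the latter independent across clients and local steps). The stochastic part contributes $\bigO{\sigma^2/(\selclients\tau)}$, which after the $\tfrac{L\eta^2}{2}$ prefactor and $\eta=\lrss\lrc\tau$ yields the $\lrss\lrc L\sigma^2/\selclients$ term. The sampling part is the key computation: for a uniform size-$\selclients$ subset of $\numclients$ items drawn without replacement, $\mbe_t\norm{\tfrac{1}{\selclients}\sum_{i\in\set^{(t)}}\mbe_t[\bdit]-\bar{\bg}^{(t)}} = \tfrac{\numclients-\selclients}{\selclients(\numclients-1)}\cdot\tfrac{1}{\numclients}\sum_{i=1}^{\numclients}\norm{\mbe_t[\bdit]-\bar{\bg}^{(t)}}$, and the per-client dissimilarity is controlled by $\sigma_g^2$ (Assumption~\ref{global_var_assum}) up to additional $L^2$ drift terms; after the $\tfrac{L\eta^2}{2}$ prefactor this produces exactly the advertised $\lrss\lrc\tau L\tfrac{\numclients-\selclients}{\selclients(\numclients-1)}\sigma_g^2$ partial-participation term. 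The $\norm{\bar{\bg}^{(t)}}$ generated here is cancelled by the negative term retained above as soon as $L\eta\le 1$, guaranteed by $\lrss\lrc\le\tfrac{1}{24\tau L}$.

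The remaining ingredient is the standard bounded-drift lemma: unrolling \texttt{LocalSGD}, separating the noise, and using Assumption~\ref{smooth_assump} together with $\norm{\nabla f_i(\bwt)}\le 2\norm{\nabla f(\bwt)}+2\sigma_g^2$, one shows that for $\lrc\le\tfrac{1}{8L\tau}$, $\tfrac{1}{\numclients\tau}\sum_{i=1}^{\numclients}\sum_{k=0}^{\tau-1}\mbe_t\norm{\bwitk-\bwt} = \bigO{\lrc^2(\tau-1)\sigma^2 + \lrc^2\tau(\tau-1)\sigma_g^2 + \lrc^2\tau(\tau-1)\norm{\nabla f(\bwt)}}$ (the $(\tau-1)$ because the drift vanishes at $k=0$). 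Substituting into the two places where drift appears, the $\sigma^2$ and $\sigma_g^2$ pieces give the $\lrc^2 L^2(\tau-1)\sigma^2$ and $\lrc^2 L^2\tau(\tau-1)\sigma_g^2$ error terms, while the $\norm{\nabla f(\bwt)}$ piece enters with an extra factor $\bigO{L^2\lrc^2\tau^2}$ relative to $-\tfrac{\eta}{2}\norm{\nabla f(\bwt)}$, hence is absorbed because $\lrc\le\tfrac{1}{8L\tau}$ makes $L^2\lrc^2\tau^2$ a small constant. Finally I would telescope over $t=0,\dots,T-1$, take total expectation, use $f(\bw^{(T)})\ge f^*$, divide by $\tfrac{\eta T}{2}$, lower-bound the average of $\mbe\norm{\nabla f(\bwt)}$ by its minimum, and use $\eta=\lrss\lrc\tau$ to recover the $\tfrac{f(\bw^{(0)})-f^*}{\lrss\lrc\tau T}$ term; collecting everything gives the four groups in the statement.

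I expect the main obstacle to be the bookkeeping around the sampling variance --- obtaining the exact $\tfrac{\numclients-\selclients}{\selclients(\numclients-1)}$ factor from without-replacement sampling and, more importantly, keeping the $\sigma_g^2$ contribution from sampling cleanly separated from the stochastic-gradient and client-drift contributions so that the ``partial participation dominates'' reading is visible --- along with tracking constants tightly enough that the retained negative terms absorb every $\norm{\nabla f(\bwt)}$ and $\norm{\bar{\bg}^{(t)}}$ remainder under the stated learning-rate conditions.
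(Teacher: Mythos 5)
Your proposal is correct and follows essentially the same route as the paper: an $L$-smoothness descent step with the polarization identity on the inner product, the without-replacement sampling variance formula yielding the $\tfrac{\numclients-\selclients}{\selclients(\numclients-1)}\sigma_g^2$ term, the self-bounding drift recursion under $\lrc \le \tfrac{1}{8L\tau}$, and absorption of the $\norm{\nabla f(\bwt)}$ remainders via the learning-rate conditions before telescoping. The only (immaterial) difference is that you handle the second-order term by an exact bias--variance split of $\mbe\norm{\bd^{(t)}}$ and cancel $\norm{\bar{\bg}^{(t)}}$ against the retained negative term, whereas the paper uses Young's inequality and absorbs a $3\mbe\norm{\nabla f(\bwt)}$ remainder into the leading gradient term instead.
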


\begin{rem}
\label{decomp_rem}
Our result shows that the total error floor of \texttt{FedAvg} can be decomposed into three distinct sources of error: 1) stochastic gradients; 2) partial client participation; and 3) client drift. Stochastic gradient error arises due to the variance of local gradients (quantified by $\sigma^2$ in Assumption \ref{stochastic}) and is unavoidable unless each local objective has a finite sum structure.
The cause for both partial participation error and the client drift error lies in data-heterogeneity present among clients (quantified by $\sigma_g$ in Assumption \ref{global_var_assum}). Setting $\selclients = \numclients$ (full participation) gets rid of the error due to partial participation. Similarly, setting $\tau = 1$ (\texttt{FedSGD}) eliminates the client drift error.  
\end{rem}

Our analysis closely follows \citep{wang2020tackling} with the difference that we sample clients without replacement instead of sampling with replacement. A full proof is provided in the supplementary material for completeness. 

\begin{coro}
\label{corro_1}
Setting $\lrc = \frac{1}{\sqrt{T}\tau L}$ and $\lrss = \sqrt{\tau \selclients}$, \texttt{FedAvg} converges to a stationary point of the global objective $f(\bw)$ at a rate given by,
\begin{align*}
&\min_{t \in \{0, \hdots, T-1 \}} \mbe \norm{\nabla  f(\bw^{(t)})} \\
&\leq \underbrace{\bigO{\frac{1}{\sqrt{\selclients \tau T}}}}_{\text{stochastic gradient error}}+ \underbrace{\bigO{\sqrt{\frac{\tau}{\selclients T}}}}_{\text{partial participation error}}+ \underbrace{\bigO{\frac{1}{T}}}_{\text{client drift error}}    
\end{align*}
\end{coro}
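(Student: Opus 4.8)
The plan is to derive \Cref{corro_1} directly from \Cref{thm:FedAvg} by substituting the prescribed learning rates $\lrc = \frac{1}{\sqrt{T}\tau L}$ and $\lrss = \sqrt{\tau\selclients}$ into the four-term error bound of that theorem and simplifying each term to its leading order. Before substituting, I would first check that this choice is \emph{admissible}, i.e., that it meets the hypotheses $\lrc \le \frac{1}{8L\tau}$ and $\lrss\lrc \le \frac{1}{24\tau L}$ of \Cref{thm:FedAvg}. The first reduces to $\sqrt{T}\ge 8$; for the second, note $\lrss\lrc = \frac{\sqrt{\selclients}}{L\sqrt{\tau T}}$, so the constraint becomes $T\ge 576\,\tau\selclients$. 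Hence both hold once $T$ is sufficiently large relative to $\tau$ and $\selclients$, i.e., $T=\Omega(\tau\selclients)$, which is exactly the regime in which an asymptotic rate is meaningful; I would state this mild condition on $T$ explicitly.

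The main step is then bookkeeping. I would first record the two composite factors that recur: $\lrss\lrc\tau = \frac{\sqrt{\tau\selclients}}{L\sqrt{T}}$ and $\lrss\lrc = \frac{\sqrt{\selclients}}{L\sqrt{\tau T}}$. Substituting into the optimization term gives $\frac{f(\bw^{(0)})-f^*}{\lrss\lrc\tau T} = \bigO{\frac{1}{\sqrt{\selclients\tau T}}}$. In the stochastic-gradient block, $\frac{\lrss\lrc L\sigma^2}{\selclients} = \bigO{\frac{1}{\sqrt{\selclients\tau T}}}$, while the companion term $\lrc^2 L^2(\tau-1)\sigma^2 = \frac{(\tau-1)\sigma^2}{\tau^2 T} = \bigO{\frac{1}{\tau T}}$ is of lower order (it is $\bigO{1/T}$, and also dominated by $\bigO{1/\sqrt{\selclients\tau T}}$ for $T$ large) and so need not be tracked separately. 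For the partial-participation term, using $\lrss\lrc\tau L = \frac{\sqrt{\tau\selclients}}{\sqrt{T}}$ together with $\frac{\numclients-\selclients}{\numclients-1}\le 1$ yields $\frac{\lrss\lrc\tau L(\numclients-\selclients)\sigma_g^2}{\selclients(\numclients-1)} = \bigO{\sqrt{\frac{\tau}{\selclients T}}}$. Finally, the client-drift term satisfies $\lrc^2 L^2\tau(\tau-1)\sigma_g^2 = \frac{(\tau-1)\sigma_g^2}{\tau T} = \bigO{\frac{1}{T}}$. Adding the simplified contributions and collapsing the residual lower-order pieces leaves exactly the three labeled summands in the statement.

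I do not expect any genuine mathematical obstacle: the argument is a substitution followed by routine algebra and order comparisons. The only points needing care are (i) making the ``$T$ large enough'' requirement ($T=\Omega(\tau\selclients)$) explicit so that \Cref{thm:FedAvg} actually applies, and (ii) being precise about which of the residual lower-order terms are absorbed into which of the three reported groups, so that the final bound is stated cleanly with just the stochastic-gradient order $\bigO{1/\sqrt{\selclients\tau T}}$, the partial-participation order $\bigO{\sqrt{\tau/(\selclients T)}}$, and the client-drift order $\bigO{1/T}$.
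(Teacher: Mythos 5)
Your proposal is correct and matches the paper's (implicit) argument exactly: the corollary follows from \Cref{thm:FedAvg} by direct substitution of the stated learning rates, and your term-by-term simplifications all check out. Your explicit verification that the step-size conditions of \Cref{thm:FedAvg} hold once $T = \Omega(\tau\selclients)$ is a small but welcome addition that the paper leaves unstated.
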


\begin{rem}
\label{corro_rem}
Note that in this case the convergence rate of \texttt{FedAvg} is dominated by the error due to partial participation resulting in the leading $\bigO{\sqrt{\frac{\tau}{\selclients T}}}$ term whereas client drift error decays at a much faster $\bigO{\frac{1}{T}}$ rate. This is primarily due to the fact that client drift error is scaled by $\lrc^2$ whereas the partial participation error is scaled by $\lrss \lrc \tau$ as seen in Theorem \ref{fedavgconv}. In practice, $\lrc$ is usually set much smaller than $\lrss$ and hence the total error due to data-heterogeneity is dominated by the variance due to partial client participation rather than client drift. 
\end{rem}

Previous works such as \citep{karimireddy2019scaffold,li2020federated,acar2021federated} h
ave proposed regularizing the local objectives at clients with a global correction term that prevents client models from drifting towards their local minima. In effect, this regularization \textit{artificially} enforces similarity among the modified client objectives such that the effect of data-heterogeneity ($\sigma_g$) is completely eliminated. However, doing so requires clients to \textit{modify} the local procedures that they run on their devices to incorporate the global correction term. This either requires additional computation at devices (as in \citep{acar2021federated}) or additional communication between client and server (as in \citep{karimireddy2019scaffold}).  Our goal, on the other hand is to just tackle the variance arising from partial client participation in FL. As a result, our proposed algorithm only modifies the \textit{server update procedure} without requiring clients to perform any additional computation or communication. Since partial participation variance dominates the convergence rate of \texttt{FedAvg}, eliminating this variance allows us to enjoy the same rates of convergence as \texttt{FedDyn} \citep{acar2021federated} and \texttt{SCAFFOLD} \citep{karimireddy2019scaffold}. We discuss our proposed algorithm and its benefits in greater detail in the next section.

\section{The FedVARP Algorithm and its Convergence Analysis}
\label{sec:\algoname}

\subsection{Proposed FedVARP algorithm}
\label{subsec:proposed \algoname}
\texttt{SAGA} \citep{defazio2014saga} was one of the first variance-reduced SGD algorithms that achieved exponential convergence rate for single node strongly convex optimization by maintaining in memory previously computed gradients for each data point.
Inspired by the \texttt{SAGA} algorithm \citep{defazio2014saga}, we propose a novel algorithm \texttt{\algoname} (Algorithm \ref{alg_\algoname}) to tackle variance arising due to partial client participation in FL. The main novelty in \texttt{\algoname} lies in applying the variance reduction correction \textit{globally} at the server without adding any additional computation or communication at clients. We elaborate on further details below.

Similar to \texttt{FedAvg}, in each round of \texttt{\algoname}, the server selects a random subset $\set^{(t)}$ of clients that perform \texttt{LocalSGD} and send back their updates $\Delta_i^{(t)}$ to the server.
Recall that in \texttt{FedAvg} the global model is updated just using the average of the $\{ \Delta_i^{(t)} \}_{i \in \set^{(t)}}$ (see \ref{fedavg_update}). However this adds a large variance to the \texttt{FedAvg} update as client data is heterogeneous and the number of selected clients could be much smaller than the total number of clients $\numclients$. The key to reducing this variance is to \textit{approximate} the updates of the clients that do not participate. 
We propose that the server use the \textit{latest observed update} for each client as the approximation for its current update.
Let $\{\by_i^{(t)}\}_{i=1}^\numclients$ represent a state for each client maintained at the server. After every round, we perform the following update (we initialize $\by_i^{(0)} = \mathbf{0} \text{ for all } i \in [N]$),
{\small\begin{align}
    \by_j^{(t+1)} = 
    \begin{cases}
    \deltaj & \text{ if } j \in \set^{(t)} \\
    \by_j^{(t)} & \text{ otherwise}
    \end{cases}, \text{ for all } j \in [n]
\end{align}}%
This ensures that $\by_i^{(t)}$ maintains the latest observed update from the $i$-th client in round $t$. Note that this implementation requires the server to maintain $\bigO{Nd}$ memory which can be expensive in a federated setting. In Section \ref{sec:cluster} we outline a more practical algorithm \texttt{\clusteralgoname} to reduce the storage requirement.

Given $\{\by_i^{(t)}\}_{i=1}^\numclients$, we can \textit{reuse} the latest observed updates of \textit{all} clients and $\Delta_i^{(t)}$'s of participating clients to compute a \textit{variance reduced} aggregated update,
{\small
\begin{align}
    \bv^{(t)} = \frac{1}{|\set^{(t)}|}\sum_{i \in \set^{(t)}}\brac{\deltai - \by_i^{(t)}} +\frac{1}{\numclients}\sum_{j=1}^\numclients \by_j^{(t)},
\label{update_eq}
\end{align}}%
which is used to update the global model as follows,
{\small
\begin{align}
    \bw^{(t+1)} = \bw^{(t)} - \lrs \bv^{(t)}.
\end{align}
}%

\begin{algorithm}[h]
\caption{\texttt{\algoname}}
\label{alg_\algoname}
\begin{algorithmic}[1]
\State \textbf{Input:} initial model $\bw^{(0)}$, server learning rate $\lrss$, client learning rate $\lrc$, number of local SGD steps $\tau$, $\lrs = \lrss \lrc \tau$,  number of rounds $T$, initial states $\by_i^{(0)} = \mathbf{0}$ for all $i \in [n]$, $\by^{(0)} = \mathbf{0}$
\For {$t = 0, 1, \dots, T-1$}
\State Sample $\set^{(t)} \subseteq [\numclients]$ uniformly without replacement
\For{$i \in \set^{(t)}$}
\State $\deltai \gets \texttt{LocalSGD}(i,\bw^{(t)},\tau,\lrc)$
\EndFor
\State // At Server:
\State $\bv^{(t)} = \by^{(t)}+\frac{1}{|\set^{(t)}|}\sum_{i \in \set^{(t)}}\brac{\deltai - \by_i^{(t)}}$
\State $\bwtp = \bwt - \lrs \bv^{(t)}$
\State $\by^{(t+1)} = \by^{(t)} + \frac{1}{\numclients}\sum_{i \in \set^{(t)}}\brac{\deltai - \by_i^{(t)}}$
\State //State update
\For {$j \in [\numclients]$}
\State $\by_j^{(t+1)} = 
    \begin{cases}
    \deltaj \hspace{5pt} \text{ if } j \in \set^{(t)}\\
    \by_j^{(t)} \hspace{5pt} \text{ otherwise }
    \end{cases}$
\EndFor
\EndFor

\end{algorithmic}
\end{algorithm}

Note that \texttt{\algoname} gives higher weight to current client updates as compared to previous client updates which allows it to enjoy the additional \textit{unbiased} property,
{\small
\begin{align}
\label{scala_unbias_prop}
    \Eg{\set^{(t)}}{\bv^{(t)}} = \Eg{\set^{(t)}}{\frac{1}{|\set^{(t)}|}\sum_{i \in \set^{(t)}}\Delta_i^{(t)}}.
\end{align}}%
This implies that in expectation \texttt{\algoname} performs the same update as \texttt{FedAvg}. This simplifies our analysis considerably and allows us to set $\by_i^{(0)} = \mathbf{0}$ without any complications in theory or practice. We further highlight the importance of server-based \texttt{SAGA} in comparison to related work.

% \textbf{Distributed implementation}
\paragraph{Comparison with MIFA.} 
\label{mifa_comp}
Closely related to this work, \citep{gu21mifa_neurips} proposed the  \texttt{MIFA} algorithm to deal with arbitrary device unavailability in FL. \texttt{MIFA} also maintains in memory the latest observed updates for each client and instead applies a \texttt{SAG}-like \citep{schmidt2017minimizing} aggregation of these updates. Unlike \texttt{\algoname}, \texttt{MIFA} assigns equal weights to both the current and previous updates, making it a biased scheme. This complicates their analysis significantly, which requires additional assumptions such as almost surely bounded gradient noise and Hessian Lipschitzness. Furthermore, due to this bias, \texttt{MIFA} requires all the clients to participate in the first round, which is unrealistic in many FL settings. We compare the performance of \texttt{\algoname} with \texttt{MIFA} in our experiments (see Section \ref{sec:experiments}) and show that \texttt{\algoname} consistently outperforms \texttt{MIFA}.

\paragraph{Comparison with SCAFFOLD.}
\texttt{SCAFFOLD} \citep{karimireddy2019scaffold} is one of the first works to identify the client drift error and it proposes the use of control variates to correct it. This requires clients to apply a \texttt{SAGA}-like variance reduction correction at \textit{every local} step. This leads to a 2x rise in communication as the clients now need to communicate both the global model as well as the global correction vector to the server. In \texttt{\algoname}, clients perform \texttt{LocalSGD} and are \textit{agnostic} to any aspect of how the variance reduction is applied at the server. This saves the cost of communicating the update to the global correction vector while maintaining the same rate of convergence as \texttt{SCAFFOLD}. 

Hence, we see that server-based SAGA variance reduction is especially suited for the federated setting. It avoids extra computation or communication at the clients (as in \texttt{SCAFFOLD}) or unrealistic client participation scenarios (as in \texttt{MIFA}).

\subsection{Convergence Analysis of \algoname}

\begin{thm}[Convergence of \texttt{\algoname}]
\label{thm:\algoname}
Suppose the functions $\{ f_i \}$ satisfy Assumptions \ref{smooth_assump}, \ref{stochastic}, \ref{global_var_assum}. In each round of \texttt{\algoname}, the server randomly selects $|\set^{(t)}| = \selclients$ (out of $\numclients$) clients, for all $t$, without replacement, to perform $\tau$ steps of local SGD. If the server and client learning rates, $\lrss, \lrc$ respectively, are chosen such that $\lrss \lrc \leq \min \lcb \frac{\selclients^{3/2}}{8 L \tau \numclients}, \frac{5 \selclients}{48 \tau L}, \frac{1}{4 L \tau} \rcb$ and $\lrc \leq \frac{1}{10 L \tau}$, then the iterates $\{ \bwt \}$ generated by \texttt{\algoname} satisfy
{\small
\begin{align*}
    & \min_{t \in \{0, \hdots, T-1 \}} \mbe \norm{\nabla  f(\bw^{(t)})} \leq \bigO{\frac{f(\bw^{(0)})-f^*}{\lrss \lrc \tau T}} \\
    & +
    \underbrace{\bigO{\frac{\lrss \lrc L \sigma^2}{\selclients} + \lrc^2 L^2 (\tau-1)\sigma^2}}_{{\text{stochastic gradient Error}}} + \underbrace{\bigO{\lrc^2 L^2 \tau(\tau-1)\sigma_g^2}}_{\text{client drift error}},
\end{align*}}
where $f^* = \argmin_\mathbf{x} f(\mathbf{x})$.
\end{thm}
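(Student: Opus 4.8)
The plan is to run the standard $L$-smoothness descent argument for nonconvex \texttt{FedAvg}-style methods, but with two modifications tailored to \texttt{\algoname}: first, exploit the unbiasedness identity~\eqref{scala_unbias_prop} so that the ``signal'' part of the \texttt{\algoname} update coincides with that of \texttt{FedAvg}; second, replace \texttt{FedAvg}'s partial-participation variance bound by a bound on the variance of $\bv^{(t)}$ in terms of the \emph{staleness} of the stored states $\{\byit\}$, which is then controlled by a Lyapunov/potential argument. Concretely I would track $\Phi_t := \E{f(\bwt)} - f^* + c\,\Xi_t$, where $\Xi_t := \tfrac1\numclients\sum_{i=1}^\numclients \E{\norm{\bwt - \hat\bw_i^{(t)}}}$ and $\hat\bw_i^{(t)}$ is the global model at which client $i$'s currently-stored update $\byit$ was produced (set $\hat\bw_i^{(0)} := \bw^{(0)}$, consistent with $\byit = \mathbf{0}$; the first-round discrepancy $\tfrac1\numclients\sum_i\norm{\deltai}$ at $t=0$ enters only through an $\bigO{1/T}$ boundary term). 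The constant $c>0$ is fixed at the end to balance the two pieces.

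First I would use Assumption~\ref{smooth_assump} on $\bwtp = \bwt - \lrs\bv^{(t)}$ to get $\E{f(\bwtp)} \le \E{f(\bwt)} - \lrs\,\Eg{}{\lan\nabla f(\bwt), \bv^{(t)}\ran} + \tfrac{L\lrs^2}{2}\E{\norm{\bv^{(t)}}}$. For the inner product I would take the conditional expectation over $\set^{(t)}$ and apply~\eqref{scala_unbias_prop}, which for uniform sampling without replacement gives $\Eg{\set^{(t)}}{\bv^{(t)}} = \tfrac1\numclients\sum_i\deltai$. Writing $\deltai = -\nabla f_i(\bwt) + \be_i^{(t)}$, where $\be_i^{(t)}$ captures the client-drift and local-stochasticity discrepancy of \texttt{LocalSGD}, the same local-SGD bookkeeping used in the proof of Theorem~\ref{thm:FedAvg} (which we are entitled to assume, and which invokes Assumptions~\ref{stochastic} and~\ref{global_var_assum}) bounds $\tfrac1\numclients\sum_i\E{\norm{\be_i^{(t)}}}$ by $\bigO{\lrc^2 L^2(\tau-1)\sigma^2 + \lrc^2 L^2\tau(\tau-1)\sigma_g^2}$; this is exactly where the stochastic-gradient and client-drift terms of the final bound come from. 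Splitting via $\lan a, a+b\ran \ge \tfrac12\lnr a\rnr^2 - \tfrac12\lnr b\rnr^2$ then yields $-\lrs\,\Eg{}{\lan\nabla f(\bwt),\bv^{(t)}\ran} \le -\tfrac{\lrs}{2}\E{\norm{\nabla f(\bwt)}} + \tfrac{\lrs}{2}\cdot(\text{drift terms})$.

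Next I would bound the variance term. Using the invariant $\by^{(t)} = \tfrac1\numclients\sum_j\byjt$ (immediate from the state update) together with a bias--variance split, $\E{\norm{\bv^{(t)}}} \le 2\norm{\Eg{\set^{(t)}}{\bv^{(t)}}} + 2\,\Eg{\set^{(t)}}{\norm{\bv^{(t)} - \Eg{\set^{(t)}}{\bv^{(t)}}}}$, where the first term is again $\le 4\E{\norm{\nabla f(\bwt)}} + (\text{drift})$. For the second term, the variance formula for sampling $\selclients$ items without replacement from $\numclients$ gives $\Eg{\set^{(t)}}{\norm{\bv^{(t)} - \Eg{\set^{(t)}}{\bv^{(t)}}}} \le \tfrac{\numclients - \selclients}{\selclients(\numclients - 1)}\cdot\tfrac1\numclients\sum_i\norm{\deltai - \byit} \le \tfrac1\selclients\cdot\tfrac1\numclients\sum_i\norm{\deltai - \byit}$. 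The key new estimate is $\tfrac1\numclients\sum_i\E{\norm{\deltai - \byit}} \le \bigO{L^2\,\Xi_t} + \bigO{\sigma^2} + (\text{drift})$: since $\byit$ is a \texttt{LocalSGD} output started from $\hat\bw_i^{(t)}$ (in an earlier round) and $\deltai$ is started from $\bwt$, their difference is controlled, via Assumption~\ref{smooth_assump} applied to $f_i$, by $L\lnr\bwt - \hat\bw_i^{(t)}\rnr$ plus fresh/stale stochastic-gradient noise and local-drift gaps. Finally $\Xi_t$ contracts: each client refreshes $\hat\bw_i^{(t)}$ to $\bwt$ with probability $\selclients/\numclients$ and otherwise keeps it while the global model moves by $\lrs\bv^{(t)}$, so a Young split with parameter $\Theta(\selclients/\numclients)$ gives $\Xi_{t+1} \le \brac{1 - \tfrac{\selclients}{2\numclients}}\Xi_t + \bigO{\tfrac{\numclients}{\selclients}}\lrs^2\,\E{\norm{\bv^{(t)}}}$.

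The last step assembles these into $\Phi_{t+1} \le \Phi_t - \tfrac{\lrs}{4}\E{\norm{\nabla f(\bwt)}} + (\text{per-round stochastic and client-drift terms})$, telescopes over $t = 0,\dots,T-1$, uses $\Phi_0 = f(\bw^{(0)}) - f^*$ and $\Phi_T \ge 0$, and bounds $\min_t$ by the average. Taking $c$ of order $L\lrs^2/\selclients$ (up to constants and $\tau$-factors) makes the $\Xi_t$ generated by the variance term cancel against the $-\tfrac{c\selclients}{2\numclients}\Xi_t$ from the contraction, leaving a feedback term of order $\tfrac{\numclients}{\selclients^2}L\lrs^4\,\E{\norm{\bv^{(t)}}}$; since $\E{\norm{\bv^{(t)}}} \le 4\E{\norm{\nabla f(\bwt)}} + (\text{drift})$, the conditions $\lrss\lrc \le \tfrac{\selclients^{3/2}}{8L\tau\numclients}$, $\lrss\lrc \le \tfrac{5\selclients}{48\tau L}$, $\lrss\lrc \le \tfrac1{4L\tau}$ and $\lrc \le \tfrac1{10L\tau}$ are precisely what is needed so that this feedback is small enough to be absorbed into $-\tfrac{\lrs}{4}\E{\norm{\nabla f(\bwt)}}$ and so that the $\tfrac{L\lrs^2}{2}$-versus-$\tfrac{\lrs}{2}$ and \texttt{LocalSGD}-drift bookkeeping closes. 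Dividing through by $\lrs T = \lrss\lrc\tau T$ then gives the stated bound with \emph{no} partial-participation term. I expect the main obstacle to be closing this mutually-recursive loop between the descent inequality and the staleness potential $\Xi_t$ cleanly — in particular, handling the random and a priori unbounded number of rounds since a client last participated, and the nested stochastic-gradient randomness, via careful conditioning and the tower rule — and verifying that the partial-participation-type contribution genuinely cancels rather than merely shrinks, which is the whole point of the theorem.
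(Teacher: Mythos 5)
Your proposal is correct and follows essentially the same route as the paper: an $L$-smoothness descent step using the unbiasedness of $\bv^{(t)}$, the without-replacement variance formula, and a Lyapunov function containing a per-client staleness potential that contracts at rate $\Theta(\selclients/\numclients)$ per round (the paper's Young parameter $\beta=\selclients/(2(\numclients-\selclients))$ gives exactly your $1-\selclients/(2\numclients)$ factor), with the potential's coefficient tuned so that the partial-participation variance is cancelled rather than merely shrunk. Two organizational differences are worth noting. First, the paper measures staleness in gradient space, via $\frac{1}{\numclients}\sum_j\norm{\nabla f_j(\bwt)-\by_j^{(t+1)}}$, rather than your parameter-space $\Xi_t$; the two are interchangeable up to smoothness, but the gradient-space version folds the stale stochastic noise and stale local drift sitting inside $\byjt$ into the potential itself, so they are refreshed (contributing $\sigma^2/\tau$ per refresh, which is where the $\lrss\lrc L\sigma^2/\selclients$ term comes from) instead of being carried as the separate additive $\bigO{\sigma^2}$ your $\norm{\deltai-\byit}$ estimate would produce — as written, your version would lose a factor of $\tau$ in the stochastic-gradient term unless you track the stale noise with the same care. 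Second, and this directly answers the obstacle you flag at the end: the paper closes the mutual recursion between the descent inequality and the staleness recursion not by substituting a bound on $\E{\norm{\bvt}}$ back into the contraction, but by adding a third component $(\delta-\lrs^2L/2)\norm{\bvt}$ to the Lyapunov function with $\delta=2\lrs^2L$. The $\lrs^2L^2\norm{\bv^{(t-1)}}$ terms generated by both the variance bound (Lemma 7) and the staleness recursion (Lemma 8) are then absorbed by the $\delta\norm{\bv^{(t-1)}}$ already present in $R^{(t)}$, and the condition $\lrss\lrc\le\selclients^{3/2}/(8L\tau\numclients)$ is precisely what makes that absorption self-consistent. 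Your direct-substitution plan can be pushed through, but it entangles $\Xi_{t+1}$ with $\Xi_t$ through $\E{\norm{\bvt}}$ and forces a second round of bookkeeping; the extra Lyapunov term is the clean way to decouple it.
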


We defer the proof and the exact convergence rate of \texttt{\algoname} to our supplementary material.
We observe that \texttt{\algoname} successfully eliminates the partial participation error, while retaining the stochastic sampling error and client drift error. This is to be expected as we do not modify the \texttt{LocalSGD} procedure at the clients to control these errors.

\paragraph{Reduction to SAGA.} Note that in the case when $\sigma=0$, $\tau=1$ and $M=1$ our algorithm reduces exactly to the \texttt{SAGA} algorithm \citep{defazio2014saga}. Setting $\lrc = \frac{1}{8LN}$ and $\lrss = 1$ we get a rate of $\bigO{\frac{N}{T}}$ for non-convex loss functions. Our rate is slightly worse than the rate of $\bigO{\frac{N^{2/3}}{T}}$ obtained in \citep{reddi2016fast} because we use the \textit{same} sample $i^{(t)}$ to update both $\bw^{(t)}$ and $\by_{i^{(t)}}$. \citep{reddi2016fast} instead draw \textit{two independent} samples $i^{(t)}$ and $j^{(t)}$, where $i^{(t)}$ is used to update the model $\bw^{(t)}$ and $j^{(t)}$ is used to update $ \by_{j^{(t)}}$. For a fixed $\bw^{(t)}$, this effectively ensures independence between $\bwtp$ and $\{\by_j^{(t+1)}\}_{j=1}^N$ which we believe leads to the theoretical improvement in their convergence rates.

\section{Cluster Fedvarp, and its convergence analysis}
\label{sec:cluster}

While \texttt{\algoname} successfully eliminates partial client participation variance, it does so at the expense of maintaining a $\bigO{\numclients d}$ memory of latest client updates at the server. This storage cost can quickly become prohibitive since both $\numclients$ and $d$ can be large in federated settings \citep{kairouz2019advances, reddi2020adaptive}. To remedy this, we propose \texttt{\clusteralgoname}, a novel server-based aggregation strategy to reduce partial client participation variance while being storage-efficient. 

\texttt{\clusteralgoname} is based on the simple observation that we can reduce storage cost by partitioning our set of $\numclients$ clients into $\numclusters$ disjoint clusters and maintaining a \textit{single} state for all the clients in the same cluster. In other words, instead of maintaining $N$ states for $N$ clients, we maintain just $\numclusters$ cluster states with clients in the same cluster \textit{sharing} the same state. Assuming that there exists such a clustering of clients, our algorithm proceeds as follows. Let $c_i \in [K]$ be the cluster identity of the $i$-th client. We initialize all cluster states to zero, that is, $\by_k^{(0)} = \mathbf{0}$ for all $k \in [K]$.  Different from \texttt{\algoname}, we now use the cluster states of clients to compute $\bvt$, i.e.,
{\small
\begin{align}
    \bvt = \frac{1}{|\set^{(t)}|}\sum_{i \in \set^{(t)}}\brac{\deltai - \by_{c_i}^{(t)}} + \frac{1}{\numclients}\sum_{j=1}^\numclients \by_{c_j}^{(t)}.
\end{align}}%
We observe that $\bvt$ still enjoys the unbiased property outlined in \ref{scala_unbias_prop} since,
{\small
\begin{align}
    \Eg{\set^{(t)}}{\frac{1}{|\set^{(t)}|}\sum_{i \in \set^{(t)}}\by_{c_i}^{(t)}} = \frac{1}{N}\sum_{j=1}^N \by_{c_j}^{(t)}
\end{align}}%

The major algorithmic difference lies in how we update the cluster states,
{\small
\begin{align}
    \by_k^{(t+1)} = 
    \begin{cases}
        \dfrac{\sum_{ i \in \set^{(t)} \cap \mathcal{C}_k} \deltai}{|\set^{(t)} \cap \mathcal{C}_k|} & \text{ if } |\set^{(t)} \cap \mathcal{C}_k| \neq 0, \\
        \by_k^{(t)} & \text{ otherwise,}
    \end{cases}
\end{align}}%
for all $k \in [K]$. For $k$-th cluster $\Cc_k$, the cluster state is the \textit{average} update of the participating clients that belong to cluster $k$, i.e., $\set^{(t)} \cap \mathcal{C}_k$. If this set is empty the cluster state remains unchanged.

\begin{algorithm}
\caption{\texttt{\clusteralgoname} }
\label{alg_\clusteralgoname }
\begin{algorithmic}[1]
\State \textbf{Input:} initial model $\bw^{(0)}$, server learning rate $\lrss$, client learning rate $\eta$, local SGD steps $\tau$, $\lrs = \lrss \lrc \tau $, number of rounds $T$, number of clusters $K$, initial cluster states $\by_k^{(0)} = \mathbf{0}$ for all $k \in [K]$, cluster identities $c_i \in [K]$ for all $i \in [\numclients]$, cluster sets $\mathcal{C}_k = \{ i: c_i = k\} \text{ for all } k \in [K]$

\For {$t = 1,2,\dots, T$}
\State Sample $\set^{(t)} \subseteq [\numclients]$ uniformly without replacement
\For{$i \in \set^{(t)}$}
\State $\deltai \gets \texttt{LocalSGD}(i,\bw^{(t)},\tau,\eta)$
\EndFor
\State // At Server:
\State {\small$\bvt = \frac{1}{|\set^{(t)}|}\sum_{i \in \set^{(t)}}\brac{\deltai - \by_{c_i}^{(t)}} + \frac{1}{\numclients}\sum_{j=1}^\numclients \by_{c_j}^{(t)}$}
\State $\bw^{(t+1)} = \bw^{(t)} - \lrs \bvt$
\State //State update
\For {$k \in [\numclusters]$}
\State {\small$\by_k^{(t+1)} = 
    \begin{cases}
        \dfrac{\sum_{ i \in \set^{(t)} \cap \mathcal{C}_k} \deltai}{|\set^{(t)} \cap \mathcal{C}_k|} & \text{ if } |\set^{(t)} \cap \mathcal{C}_k| \neq 0\\
        \by_k^{(t)} & \text{ otherwise}
    \end{cases}$}
\EndFor
\EndFor
\end{algorithmic}
\end{algorithm}

Note that the dissimilarity in client data across clusters is already bounded in Assumption 3. Our motivation behind using a clustering approach is to utilize a tighter bound on the data dissimilarity \textit{within} a cluster. We quantify this precisely via the following assumption.

\begin{assum} 
\label{cluster_var_assump}
(Bounded cluster variance). Let $K$ be the total number of clusters and $\mathcal{C}_k$ be the set of clients belonging to the $k$-th cluster . There exists a constant $\sigma_K \geq 0$ such that the difference between the average gradient of clients in the $k$-th cluster and the local gradient of the $i$-th client in the $k$-th cluster is bounded as follows: {\small$\norm{\nabla f_i(\bw) - \frac{1}{|\mathcal{C}_{k}|}\sum_{j \in \mathcal{C}_{k}} \nabla f_j(\bw)} \leq \sigma^2_K$}, for all $k \in [K]$, for all $i \in \mathcal{C}_k$.
\end{assum}

We see that $\sigma^2_K$ acts a measure of the efficacy of our clustering with the goal being to achieve $\sigma^2_K \ll \sigma^2_g$. In practice, there often exists metadata about clients that can be used to naturally partition clients into well-structured clusters. For instance, when training a next-word prediction model \citep{hard2018federated}, clients could be grouped by geographical location depending on the local dialect. Another example is training recommender systems for social media platforms \citep{jalalirad2019simple}  where we expect connected users to have similar interests.

Intuitively, we expect that for $\numclusters < \numclients$ we will suffer an error of $\bigO{\sigma^2_K}$ when trying to approximate a client's update by its cluster state.
This intuition is captured precisely in our convergence result for \texttt{\clusteralgoname} as stated below.

\begin{thm}[Convergence of \texttt{\clusteralgoname}]
\label{thm:clusteralgoname}
Suppose the functions $\{ f_i \}$ satisfy Assumptions \ref{smooth_assump}, \ref{stochastic}, \ref{global_var_assum}, \ref{cluster_var_assump}. Further, suppose all the clients are partitioned into $K$ clusters, each with $r$ clients, such that $\numclients = r K$. In each round of \texttt{\clusteralgoname}, the server randomly selects $|\set^{(t)}| = \selclients$ (out of $\numclients$) clients, for all $t$, without replacement, to perform $\tau$ steps of local SGD. Further, the client learning rate $\lrc$, and the server learning rate $\lrss$ are chosen such that {\small$\lrc \leq \frac{1}{10 L \tau}$, $\lrss \lrc \leq \min \lcb \frac{\sqrt{\selclients} (1-p)}{8 L \tau}, \frac{\selclients}{16 \tau L}, \frac{1}{4 L \tau} \rcb$}, where {\small$p = \frac{\binom{\numclients - r}{\selclients}}{\binom{\numclients}{\selclients}}$}.
Then, the iterates $\{ \bwt \}_t$ generated by \texttt{\clusteralgoname} satisfy
{\small
\begin{align*}
    & \min_{t \in \{0, \hdots, T-1 \}} \mbe \norm{\nabla  f(\bw^{(t)})} \\
    & \leq \bigO{\frac{f(\bw^{(0)})-f^*}{\lrss \lrc \tau T}}+
    \underbrace{\bigO{\frac{\lrss \lrc L \sigma^2}{\selclients} + \lrc^2 L^2 (\tau-1)\sigma^2}}_{{\text{stochastic sampling error}}}\\
    &  \underbrace{\bigO{\frac{\lrss \lrc L \tau (\numclients-\selclients)\sigma_K^2}{\selclients (\numclients-1)} }}_{{\text{cluster heterogeneity error}}} + \underbrace{\bigO{\lrc^2 L^2 \tau(\tau-1)\sigma_g^2}}_{\text{client drift error}}
\end{align*}}
\end{thm}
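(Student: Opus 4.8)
The plan is to mirror the proof of Theorem \ref{thm:\algoname}, which already establishes the descent machinery for a server-side \texttt{SAGA}-type correction, and to isolate the single place where cluster states replace per-client states. First I would fix a round $t$, take a conditional expectation over the random subset $\set^{(t)}$ given $\bw^{(t)}$ and the current cluster states $\{\by_k^{(t)}\}$, and use $L$-smoothness of $f$ to write the usual descent inequality $\E{f(\bwtp)} \le f(\bwt) - \frac{\lrs}{2}\norm{\nabla f(\bwt)} - (\tfrac{\lrs}{2} - \tfrac{L\lrs^2}{2})\E\lnr\bvt\rnr^2 + \tfrac{\lrs}{2}\E\lnr\bvt - \nabla f(\bwt)\rnr^2$ (up to constants matching the paper's bookkeeping). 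The unbiasedness identity \eqref{scala_unbias_prop}, which the excerpt verifies also holds for the clustered estimator, lets me split $\E\lnr\bvt - \nabla f(\bwt)\rnr^2$ into (i) the variance of the \texttt{LocalSGD} estimator around the true local-gradient average — this is exactly the term handled in the \texttt{FedAvg}/\texttt{\algoname} analyses and yields the stochastic-gradient and client-drift terms via Assumptions \ref{smooth_assump}–\ref{global_var_assum} — and (ii) the variance introduced by subsampling clients with the cluster-state control variate, which is the genuinely new piece.

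For term (ii) I would write, for the idealized (no local drift, no stochastic noise) estimator, $\bvt - \E\bvt = \frac{1}{M}\sum_{i\in\set^{(t)}}\big(\nabla f_i(\bwt) - \by_{c_i}^{(t)}\big) - \frac{1}{N}\sum_{j=1}^N\big(\nabla f_j(\bwt) - \by_{c_j}^{(t)}\big)$, i.e. a sampling-without-replacement average of the $N$ residual vectors $a_i := \nabla f_i(\bwt) - \by_{c_i}^{(t)}$ minus their mean. The standard variance bound for sampling $M$ of $N$ items without replacement then gives a bound of the form $\frac{(N-M)}{M(N-1)}\cdot\frac{1}{N}\sum_i\lnr a_i - \bar a\rnr^2 \le \frac{(N-M)}{M(N-1)}\cdot\frac{1}{N}\sum_i\lnr a_i\rnr^2$. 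So everything reduces to controlling $\frac1N\sum_i \E\lnr \nabla f_i(\bwt) - \by_{c_i}^{(t)}\rnr^2$, the analogue of the lag-error potential in \texttt{SAGA}.

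The main obstacle — and the step I'd budget the most effort for — is bounding this cluster-lag potential $\Phi_t := \frac1N\sum_i \E\lnr \nabla f_i(\bwt) - \by_{c_i}^{(t)}\rnr^2$ by a recursion. Unlike in \texttt{\algoname}, the stored state $\by_k^{(t)}$ is the \emph{average} of the most recent participating clients' updates in cluster $k$, so it is not trying to track any single $\nabla f_i$ but rather the cluster-mean gradient $\bar\nabla f_{\Cc_k} := \frac{1}{r}\sum_{j\in\Cc_k}\nabla f_j$. I would therefore split $\nabla f_i(\bwt) - \by_{c_i}^{(t)} = \big(\nabla f_i(\bwt) - \bar\nabla f_{\mathcal C_{c_i}}(\bwt)\big) + \big(\bar\nabla f_{\mathcal C_{c_i}}(\bwt) - \by_{c_i}^{(t)}\big)$: the first bracket is bounded deterministically by $\sigma_K^2$ via Assumption \ref{cluster_var_assump}, producing the claimed $\bigO{\tfrac{\lrss\lrc L\tau(N-M)\sigma_K^2}{M(N-1)}}$ cluster-heterogeneity term; the second bracket is a genuine lag term that I would control by a geometric recursion in $t$. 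The recursion comes from noting that each cluster $k$ is "refreshed" in round $t$ with probability $1-p_k$ where $p_k = \binom{N-|\Cc_k|}{M}/\binom{N}{M}$ (equal to $p$ under the uniform-cluster-size assumption $|\Cc_k|=r$), and when refreshed, $\by_k^{(t+1)}$ becomes an average of fresh \texttt{LocalSGD} outputs, whose deviation from $\bar\nabla f_{\Cc_k}(\bwt)$ is again controlled by the $\sigma^2$, $\sigma_g^2$, drift terms plus the model-movement term $\lnr\bwt - \bw^{(t-s)}\rnr^2$ summed against the geometric weights $p^s$. Summing the per-round descent over $t=0,\dots,T-1$, absorbing the $\sum_t \Phi_t$ contribution back into the $-\tfrac{\lrs}{2}\E\lnr\bvt\rnr^2$ slack using the stated step-size constraints (this is exactly where the condition $\lrss\lrc \le \tfrac{\sqrt{M}(1-p)}{8L\tau}$ enters — the $(1-p)$ factor is the geometric-series refresh rate, the $\sqrt M$ matching the $1/M$ variance scaling), and telescoping gives the final bound. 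The step-size ceilings $\tfrac{M}{16\tau L}$ and $\tfrac{1}{4L\tau}$ and $\lrc \le \tfrac{1}{10L\tau}$ play the same role as in Theorem \ref{thm:\algoname}, keeping the descent and client-drift coefficients positive. I would defer all the constant-chasing to the supplement and present only this decomposition and the recursion in the main text.
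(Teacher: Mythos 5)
Your proposal is correct and follows essentially the same route as the paper's proof: the same smoothness descent step, the same without-replacement sampling-variance lemma applied to the residuals $\nabla f_i(\bw^{(t)}) - \by_{c_i}^{(t)}$, the same decomposition of those residuals into a cluster-heterogeneity part (bounded by $\sigma_K^2$ via Assumption 4) plus a lag part refreshed with probability $1-p$ per round. The only difference is presentational — the paper closes the lag recursion with a Lyapunov function $R^{(t)}$ carrying coefficients $\alpha,\delta$ rather than explicitly unrolling the geometric series, but these are equivalent ways of absorbing $\sum_t \Phi_t$ under the stated step-size conditions.
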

We defer the proof and exact convergence rate to our supplementary material.
For $\numclusters = \numclients$ (one client per cluster) we recover the convergence rate of \texttt{\algoname}($\sigma^2_{K=N} = 0$). On the other hand, for $\numclusters =1 $ we get back the \texttt{FedAvg} algorithm since all clients share the same state and there is no variance-reduction ($\sigma^2_{K=1} = \sigma^2_g$). Thus, we see a natural trade-off between storage and variance-reduction as we vary the number of cluster states $\numclusters$.  In practice, \texttt{\clusteralgoname} gives server the flexibility to set $\numclusters$ based on its storage constraints.

We see that \texttt{\clusteralgoname} also allows an interesting trade-off between the server learning rate and cluster approximation error as we vary $\numclusters$. Our analysis shows the bound on the server learning rate comes from trying to control the ``staleness'' of a client's state, which measures the frequency with which a client's state is updated. In \texttt{\algoname}, a client's state is updated only when the client participates, which happens with probability $\frac{\selclients}{\numclients}$. In \texttt{\clusteralgoname} a client's state is updated as long as \textit{any} client from the same cluster participates, which dramatically reduces staleness. However this comes at the cost of the additional cluster heterogeneity error implying a trade-off between convergence speed and error floor.

\section{Experiments}
\label{sec:experiments}

\subsection{Experimental Setup}
\begin{figure*}[ht]
 \centering
    \subfloat[]{\includegraphics[width=0.33\linewidth]{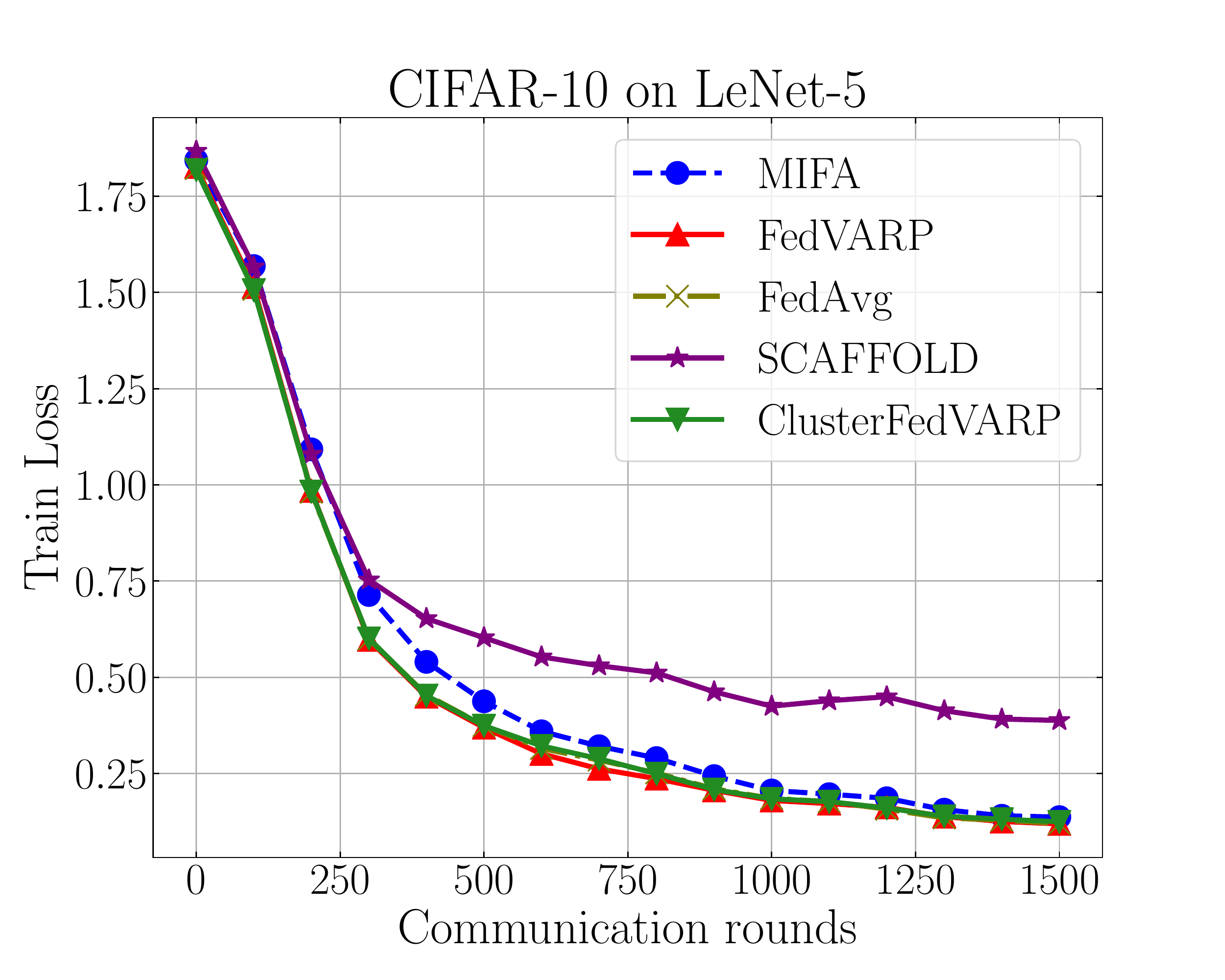}\label{fig:pow_dme}}
    \subfloat[]{\includegraphics[width=0.33\linewidth]{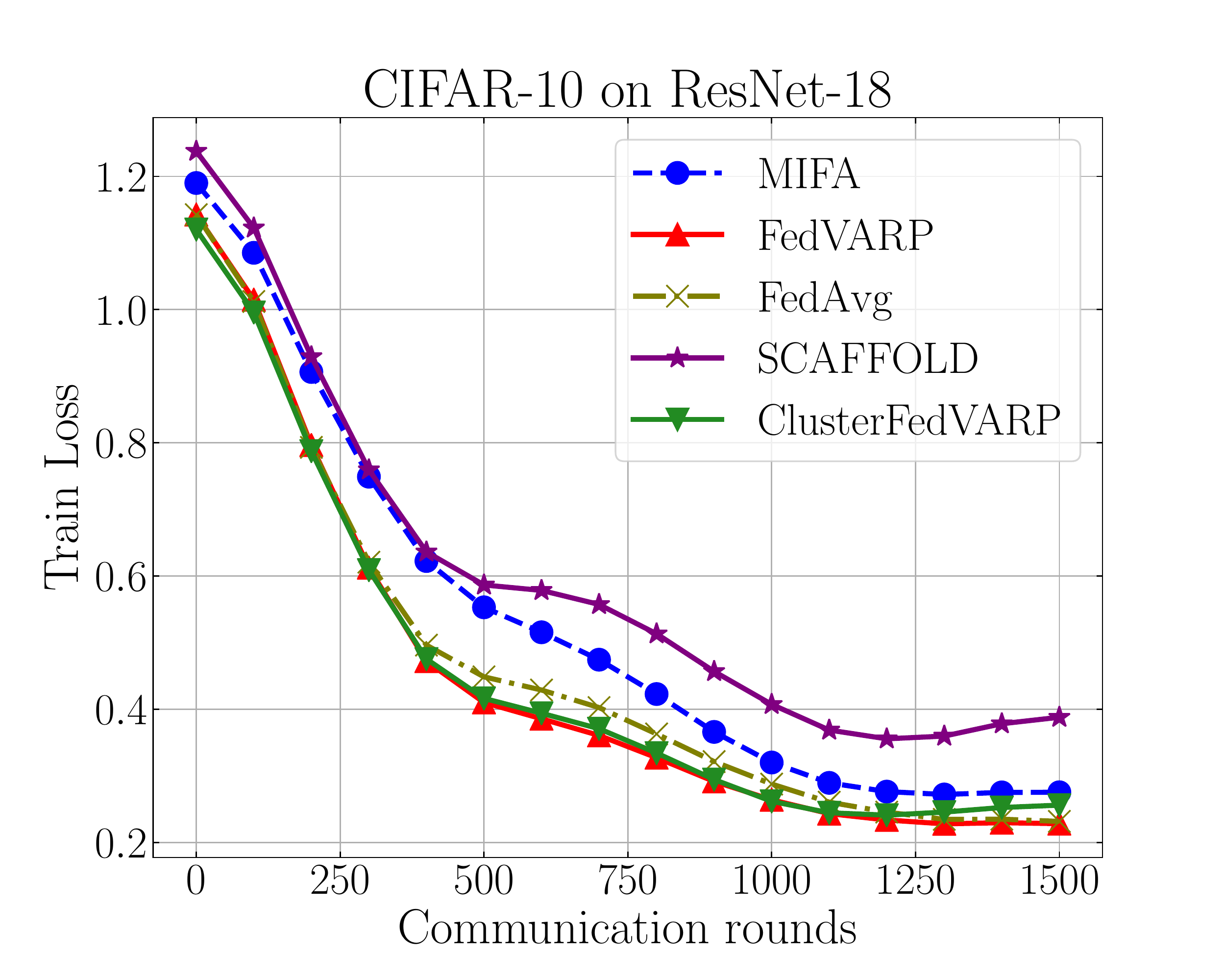}\label{fig:kme_dme}}
    \subfloat[]{\includegraphics[width=0.33\linewidth]{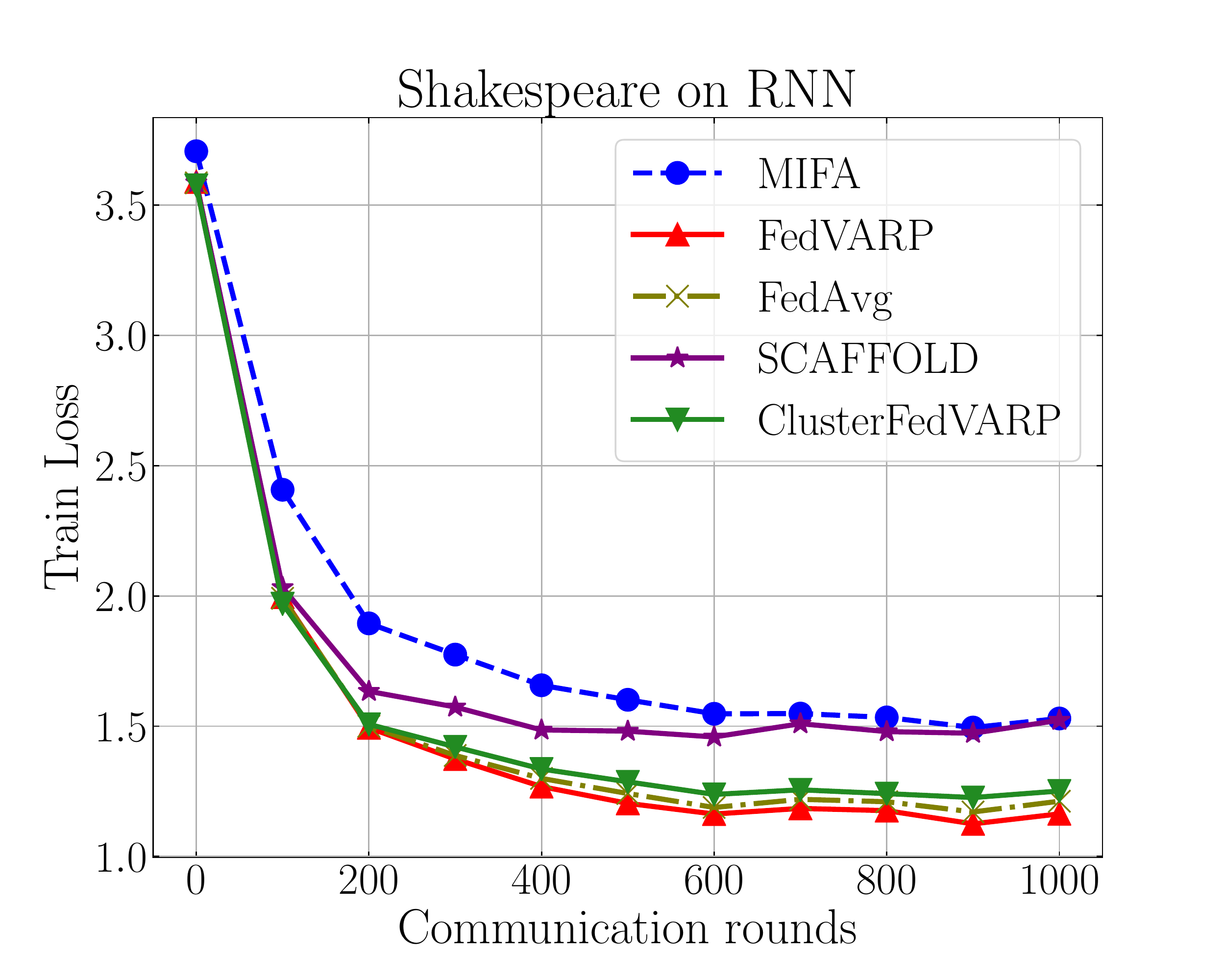}\label{fig:log_dme}}
    \\
    \subfloat[]{\includegraphics[width=0.33\linewidth]{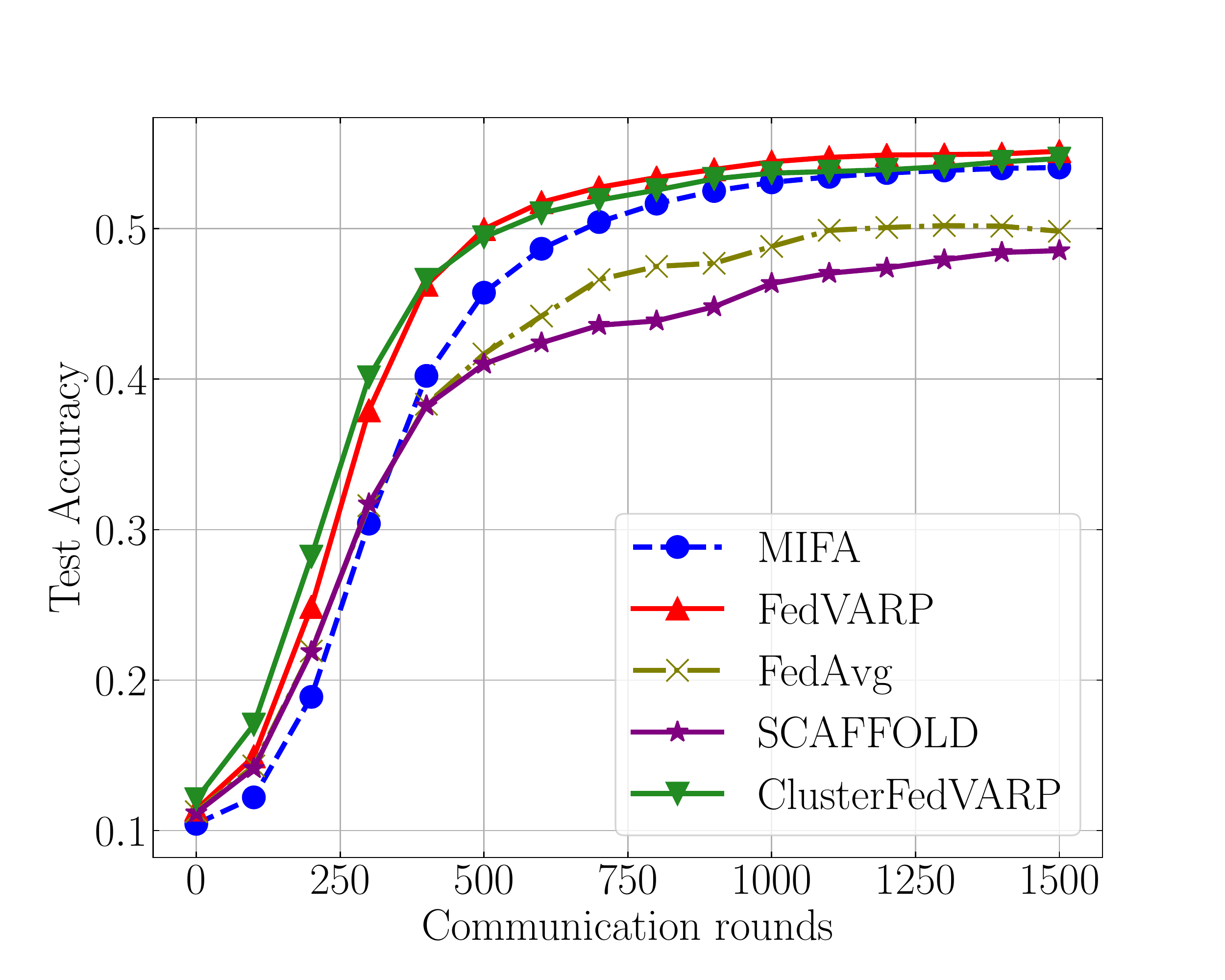}\label{fig:pow_loss}}
    \subfloat[]{\includegraphics[width=0.33\linewidth]{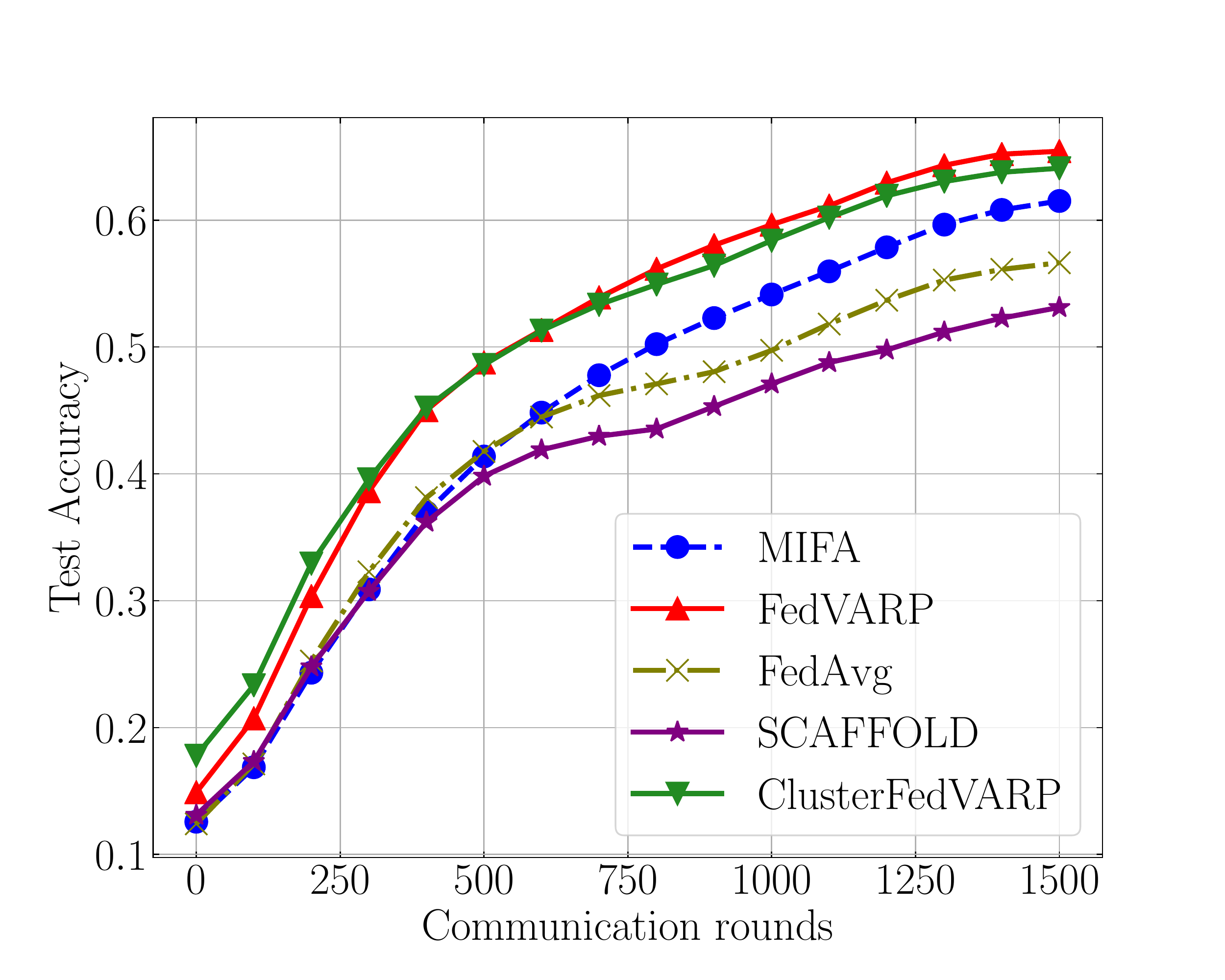}\label{fig:kme_loss}}
    \subfloat[]{\includegraphics[width=0.33\linewidth]{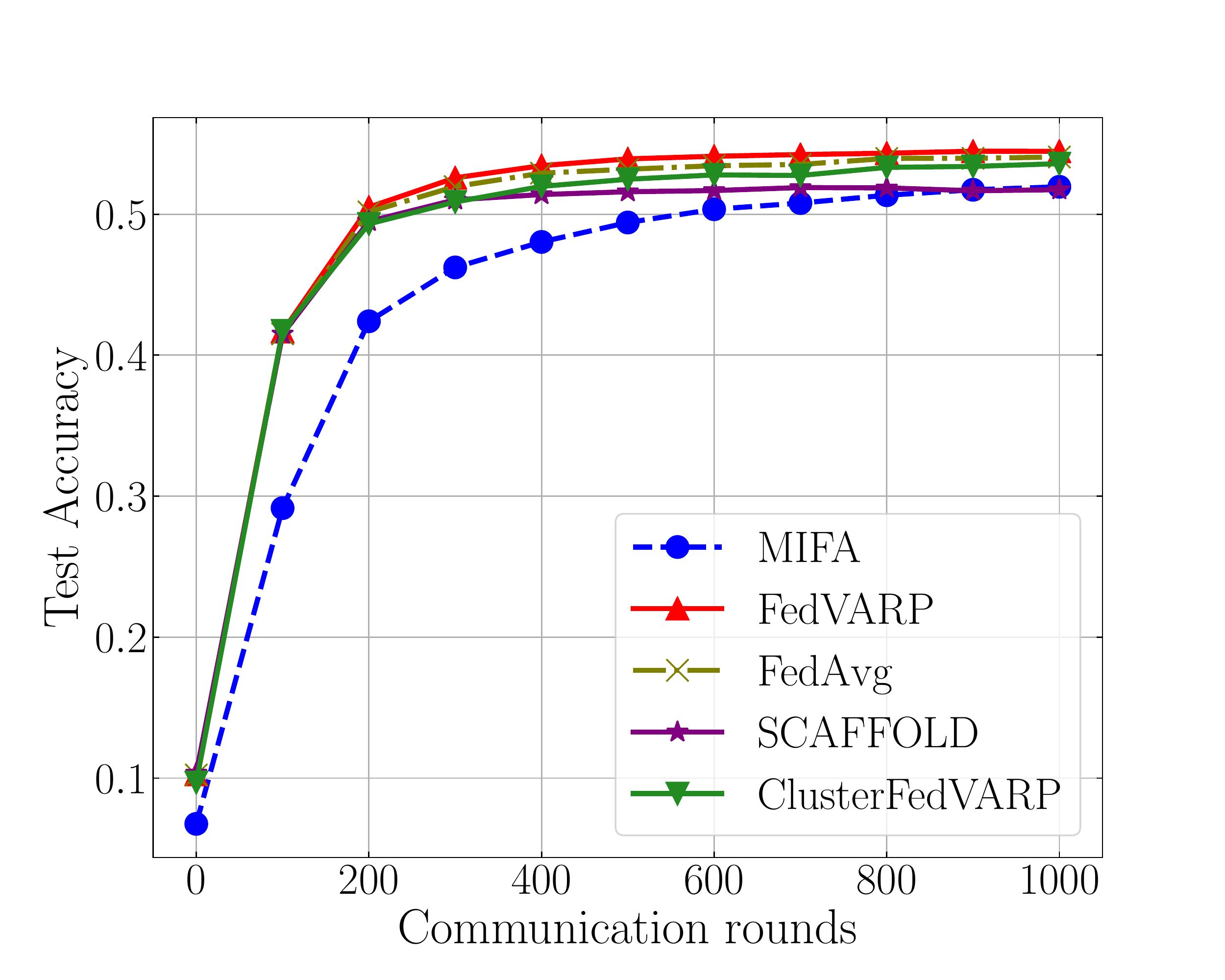}\label{fig:log_loss}}
    \caption{Experimental Results showing Training Loss and Test Accuracy for: CIFAR-10 on LeNet-5 (a,d), CIFAR-10 on ResNet-18 (b,e), Shakespeare on RNN (c,f). For \texttt{\clusteralgoname} we keep $K = 55$ for CIFAR-10 experiments (4.5x storage reduction) and $K=36$ for Shakespeare experiments (30x storage reduction).  \texttt{\algoname} outperforms baselines in all cases while \texttt{\clusteralgoname} outperforms baselines in most cases. We see greater empirical benefits for CIFAR-10 experiments due to the higher data-heterogeneity across clients.}
    \label{fig:expts}
\end{figure*}

To support our theoretical findings we evaluate our proposed algorithms on the following FL tasks: i) image classification on CIFAR-10 \citep{krizhevsky2009cifar} with LeNet-5 \citep{lecun2015lenet}, ii)
image classification on CIFAR-10 with ResNet-18 \citep{he2016deep}, and iii) next character prediction on Shakespeare \citep{caldas2018leaf} with a RNN model. In all setups, we compare the performance of our algorithms with \texttt{FedAvg}, \texttt{MIFA} \citep{gu21mifa_neurips} and \texttt{SCAFFOLD} \citep{karimireddy2019scaffold} (see Section \ref{subsec:proposed \algoname} for discussion of the algorithms).
We briefly describe the datasets and the natural clustering of clients that we utilize in these datasets. 

\textbf{CIFAR-10.} The CIFAR-10 dataset is a natural image dataset consisting of 60000 32x32 colour images, with each
image assigned to one of 10 classes (6000 images per class). We create a federated non-iid split of the CIFAR-10 dataset among 250 clients using a similar procedure as \citep{mcmahan2017communication}. The data is first sorted by labels and
divided into 500 shards with each shard corresponding to data of a particular label. Clients are randomly assigned 2 such shards which implies each client has a data distribution corresponding to either 1 or 2 classes.  For \texttt{\clusteralgoname}, we group clients having the same data distribution in the same cluster giving us 55 unique clusters.

\textbf{Shakespeare.}
Shakespeare is a language modelling task where each client is a role from one of the plays in \textit{The Collective Works of William Shakespeare} \citep{shakespeare2014complete}. We pick clients that have lines corresponding to at least 120 characters which leaves us with 1089 unique clients. The task is to predict the next character given an input sequence of 20 characters from a client's text.  For \texttt{\clusteralgoname}, we group clients belonging to the same play in the same cluster giving us a total of 36 clusters.

\textbf{Experimental Details.} To simulate partial client participation we uniformly sample $\selclients = 5$ clients without replacement in every round for all algorithms. This gives us a participation rate of $2\%$ for CIFAR-10 experiments and $<1 \%$ for Shakespeare as seen in practice for typical FL settings \citep{kairouz2019advances}. We allow clients to perform 5 local epochs before sending their updates. We use a batch size of 64 in all experiments. 
We fix the server learning rate $\lrss$ to 1 and tune the client learning rate $\lrc$ over the grid $\{10^{-1},10^{-1.5},10^{-2},10^{-2.5}, 10^{-3}\}$ for all algorithms. For ResNet-18 we replace the batch normalization layers by group normalization \citep{hsieh2020non}. Our Shakespeare RNN was a single layer Gated Recurrent Unit (GRU) with 128 hidden parameters and embedding dimension of 8. 

\subsection{Comparison with Baselines}

Our experiments clearly demonstrate that our proposed algorithms consistently outperform other baselines without requiring additional communication or computation at clients. \texttt{\clusteralgoname} closely matches the performance of \texttt{\algoname} in all experiments thereby highlighting the practical gains of clustering-based storage reduction. For instance, to achieve 50\% test accuracy on CIFAR-10 classification with LeNet-5 our algorithms take less than 536 rounds while \texttt{FedAvg} takes 1158 rounds giving us up to 2.1x speedup. 
% \texttt{\algoname} achieves up to 2.1x speedup over \texttt{FedAvg} in CIFAR-10 experiments and 1.1x speedup for Shakespeare to achieve 50 \% test accuracy. 
The benefits are especially pronounced for CIFAR-10 as the artificial data partitioning leads to greater heterogeneity across clients thereby accentuating the effect of partial participation.

Our algorithms also outperform competing variance-reduction methods \texttt{MIFA} and \texttt{SCAFFOLD} in all experiments. The performance of \texttt{MIFA} is severely affected by its bias in the initial rounds of training since we do not assume that all clients participate in the first round of training. This again highlights the practical usefulness of the unbiased variance-reduction applied in \texttt{\algoname} and \texttt{\clusteralgoname}. 
While theoretically appealing we find that modifying the \texttt{LocalSGD} procedure using \texttt{SCAFFOLD} to mitigate client drift actually hurts performance in practical FL settings. Our findings are consistent with \citep{reddi2020adaptive}
and make the case for reducing client drift using carefully tuned local learning rates while focusing on server-based optimization techniques to reduce variance.

\section{Related Work}   

\textbf{Convergence Analysis of \texttt{FedAvg}:}
The original \texttt{FedAvg} \citep{mcmahan2017communication} work inspired a rich line of work trying to analyze \texttt{FedAvg} in various settings  \citep{khaled2020tighter, yu2018parallel,li2019on}. The convergence results closest to our setting are found in \citep{wang2020tackling, karimireddy2019scaffold, yang2021achieving} that analyze \texttt{FedAvg} in the presence of non-iid data as well as partial client participation for non-convex objectives.
We refer readers to \citep{kairouz2019advances, wang2021field} for a comprehensive review of convergence results in FL.

\paragraph{Variance Reduction.} Since the inception of SAG \citep{schmidt2017minimizing} and SAGA \citep{defazio2014saga}, several variance-reduction methods for centralized stochastic problems have been proposed that do not require additional storage. We divide these works into two broad categories and discuss applying them in a federated context to reduce partial client participation. 

1) \textbf{SVRG-style Variance Reduction.} SVRG \citep{johnson2013accelerating} and related methods like SCSG \citep{lei2017non} SARAH \citep{nguyen2017sarah}, and SPIDER \citep{fang2018spider} trade-off storage with computation and need to compute the full (or a large-minibatch) gradient at regular intervals. While these methods achieve theoretically better rates than SAGA, applying them in a federated context would require \textit{all} clients to participate in some rounds of training which we believe is unrealistic.
 
 2) \textbf{Momentum-based Variance Reduction.} A recent line of work explores the connection between SGD with momentum and variance-reduction and proposes new algorithms STORM \citep{cutkosky2019momentum} and HybridSARAH \citep{tran2019hybrid}, that do not require full-batch gradient computation at any iteration. This has inspired federated counterparts \citep{das2020faster}, \citep{khanduri2021stem}, \citep{li2021zerosarah}. \citep{das2020faster} and \citep{li2021zerosarah} propose to use such approaches to reduce client participation variance. However there are two drawbacks. The central server needs to communicate two sets of global models $\bw^{(t)}$ and $\bw^{(t-1)}$ to the participating clients, doubling server to client communication. Secondly, participating clients need to run local SGD for both sets of global models, thereby doubling computation. Again while theoretically attractive we believe such approaches are not suitable for practical FL settings.

\textbf{Clustered Federated Learning and Variance Reduction.}
The idea of utilizing cluster structure among clients has given rise to the paradigm of \textit{clustered federated learning} \citep{ghosh2020cfl}, \citep{sattler2020clustered}, where \textit{separate} global models are learned for each cluster. On the other hand, we propose to learn a \textit{single} global model and use the cluster structure for reducing the variance arising due to partial client participation. A similar idea of sharing gradient information while reducing variance has been explored in $\mathcal{N}$-SAGA \citep{hofmann2015variance} but their focus is on a single node centralized setting and the analysis is restricted to strongly convex functions. An interesting direction for future work is to linearly combine a client's previous state with its cluster state to reduce staleness as done in \citep{allen2016exploiting}.

% \newpage
% \input{Sections/Conclusion}
\section{Conclusion}
We consider the problem of eliminating variance arising due to partial client participation in large-scale FL systems. We first show that partial participation variance dominates the convergence rate of \texttt{FedAvg} for smooth non-convex loss functions. 
We propose \texttt{\algoname}, a novel aggregation strategy applied at the server to completely eliminate this variance without requiring any additional computation or communication at the clients. Next we propose a more practical clustering-based strategy \texttt{\clusteralgoname} that reduces variance while being storage-efficient. Our theoretical findings are comprehensively supported by our experimental results which show that our proposed algorithms consistently outperform existing baselines.

\begin{acknowledgements} 
This research was generously supported in part by the NSF Award (CNS-2112471), the NSF CAREER Award (CCF-2045694), and the David H. Barakat and LaVerne Owen-Barakat College of Engineering Dean's Fellowship at Carnegie Mellon University.
\end{acknowledgements}

%\newpage

\bibliography{uai2022-template}

\begin{thebibliography}{45}
\providecommand{\natexlab}[1]{#1}
\providecommand{\url}[1]{\texttt{#1}}
\expandafter\ifx\csname urlstyle\endcsname\relax
  \providecommand{\doi}[1]{doi: #1}\else
  \providecommand{\doi}{doi: \begingroup \urlstyle{rm}\Url}\fi

\bibitem[Acar et~al.(2021)Acar, Zhao, Navarro, Mattina, Whatmough, and
  Saligrama]{acar2021federated}
Durmus Alp~Emre Acar, Yue Zhao, Ramon~Matas Navarro, Matthew Mattina, Paul~N
  Whatmough, and Venkatesh Saligrama.
\newblock Federated learning based on dynamic regularization.
\newblock \emph{arXiv preprint arXiv:2111.04263}, 2021.

\bibitem[Allen-Zhu et~al.(2016)Allen-Zhu, Yuan, and
  Sridharan]{allen2016exploiting}
Zeyuan Allen-Zhu, Yang Yuan, and Karthik Sridharan.
\newblock Exploiting the structure: Stochastic gradient methods using raw
  clusters.
\newblock \emph{Advances in Neural Information Processing Systems}, 29, 2016.

\bibitem[Caldas et~al.(2018)Caldas, Wu, Li, Konecn{\'{y}}, McMahan, Smith, and
  Talwalkar]{caldas2018leaf}
Sebastian Caldas, Peter Wu, Tian Li, Jakub Konecn{\'{y}}, H.~Brendan McMahan,
  Virginia Smith, and Ameet Talwalkar.
\newblock {LEAF:} {A} benchmark for federated settings.
\newblock \emph{CoRR}, abs/1812.01097, 2018.
\newblock URL \url{http://arxiv.org/abs/1812.01097}.

\bibitem[Chen et~al.(2020)Chen, Horvath, and Richtarik]{chen2020optimal}
Wenlin Chen, Samuel Horvath, and Peter Richtarik.
\newblock Optimal client sampling for federated learning.
\newblock \emph{arXiv preprint arXiv:2010.13723}, 2020.

\bibitem[Cho et~al.(2020)Cho, Wang, and Joshi]{cho2020client}
Yae~Jee Cho, Jianyu Wang, and Gauri Joshi.
\newblock Client selection in federated learning: Convergence analysis and
  power-of-choice selection strategies.
\newblock \emph{arXiv preprint arXiv:2010.01243}, 2020.

\bibitem[Cutkosky and Orabona(2019)]{cutkosky2019momentum}
Ashok Cutkosky and Francesco Orabona.
\newblock Momentum-based variance reduction in non-convex sgd.
\newblock \emph{Advances in neural information processing systems}, 32, 2019.

\bibitem[Das et~al.(2020)Das, Acharya, Hashemi, Sanghavi, Dhillon, and
  Topcu]{das2020faster}
Rudrajit Das, Anish Acharya, Abolfazl Hashemi, Sujay Sanghavi, Inderjit~S
  Dhillon, and Ufuk Topcu.
\newblock Faster non-convex federated learning via global and local momentum.
\newblock \emph{arXiv preprint arXiv:2012.04061}, 2020.

\bibitem[Defazio et~al.(2014)Defazio, Bach, and
  Lacoste-Julien]{defazio2014saga}
Aaron Defazio, Francis Bach, and Simon Lacoste-Julien.
\newblock {SAGA}: A fast incremental gradient method with support for
  non-strongly convex composite objectives.
\newblock In \emph{Advances in Neural Information Processing Systems 27}, pages
  1646--1654, 2014.

\bibitem[Fang et~al.(2018)Fang, Li, Lin, and Zhang]{fang2018spider}
Cong Fang, Chris~Junchi Li, Zhouchen Lin, and Tong Zhang.
\newblock Spider: Near-optimal non-convex optimization via stochastic
  path-integrated differential estimator.
\newblock \emph{Advances in Neural Information Processing Systems}, 31, 2018.

\bibitem[Ghosh et~al.(2020)Ghosh, Chung, Yin, and Ramchandran]{ghosh2020cfl}
Avishek Ghosh, Jichan Chung, Dong Yin, and Kannan Ramchandran.
\newblock An efficient framework for clustered federated learning.
\newblock In \emph{Advances in Neural Information Processing Systems}, 2020.
\newblock URL \url{http://arxiv.org/abs/2006.04088}.

\bibitem[Gu et~al.(2021)Gu, Huang, Zhang, and Huang]{gu21mifa_neurips}
Xinran Gu, Kaixuan Huang, Jingzhao Zhang, and Longbo Huang.
\newblock Fast federated learning in the presence of arbitrary device
  unavailability.
\newblock \emph{Advances in Neural Information Processing Systems}, 34, 2021.

\bibitem[Hard et~al.(2018)Hard, Rao, Mathews, Ramaswamy, Beaufays, Augenstein,
  Eichner, Kiddon, and Ramage]{hard2018federated}
Andrew Hard, Kanishka Rao, Rajiv Mathews, Swaroop Ramaswamy, Fran{\c{c}}oise
  Beaufays, Sean Augenstein, Hubert Eichner, Chlo{\'e} Kiddon, and Daniel
  Ramage.
\newblock Federated learning for mobile keyboard prediction.
\newblock \emph{arXiv preprint arXiv:1811.03604}, 2018.

\bibitem[He et~al.(2016)He, Zhang, Ren, and Sun]{he2016deep}
Kaiming He, Xiangyu Zhang, Shaoqing Ren, and Jian Sun.
\newblock Deep residual learning for image recognition.
\newblock In \emph{Proceedings of the IEEE conference on computer vision and
  pattern recognition}, pages 770--778, 2016.

\bibitem[Hofmann et~al.(2015)Hofmann, Lucchi, Lacoste-Julien, and
  McWilliams]{hofmann2015variance}
Thomas Hofmann, Aurelien Lucchi, Simon Lacoste-Julien, and Brian McWilliams.
\newblock Variance reduced stochastic gradient descent with neighbors.
\newblock \emph{Advances in Neural Information Processing Systems}, 28, 2015.

\bibitem[Hsieh et~al.(2020)Hsieh, Phanishayee, Mutlu, and
  Gibbons]{hsieh2020non}
Kevin Hsieh, Amar Phanishayee, Onur Mutlu, and Phillip Gibbons.
\newblock The non-iid data quagmire of decentralized machine learning.
\newblock In \emph{International Conference on Machine Learning}, pages
  4387--4398. PMLR, 2020.

\bibitem[Jalalirad et~al.(2019)Jalalirad, Scavuzzo, Capota, and
  Sprague]{jalalirad2019simple}
Amir Jalalirad, Marco Scavuzzo, Catalin Capota, and Michael Sprague.
\newblock A simple and efficient federated recommender system.
\newblock In \emph{Proceedings of the 6th IEEE/ACM International Conference on
  Big Data Computing, Applications and Technologies}, pages 53--58, 2019.

\bibitem[Johnson and Zhang(2013)]{johnson2013accelerating}
Rie Johnson and Tong Zhang.
\newblock Accelerating stochastic gradient descent using predictive variance
  reduction.
\newblock In \emph{Advances in neural information processing systems}, pages
  315--323, 2013.

\bibitem[Kairouz et~al.(2019)Kairouz, McMahan, Avent, and et.
  al.]{kairouz2019advances}
Peter Kairouz, H.~Brendan McMahan, Brendan Avent, and Aurelien~Bellet et. al.
\newblock Advances and open problems in federated learning.
\newblock \emph{arXiv preprint arXiv:1912.04977}, 2019.

\bibitem[Karimireddy et~al.(2019)Karimireddy, Kale, Mohri, Reddi, Stich, and
  Suresh]{karimireddy2019scaffold}
Sai~Praneeth Karimireddy, Satyen Kale, Mehryar Mohri, Sashank~J Reddi,
  Sebastian~U Stich, and Ananda~Theertha Suresh.
\newblock {SCAFFOLD}: Stochastic controlled averaging for on-device federated
  learning.
\newblock \emph{arXiv preprint arXiv:1910.06378}, 2019.

\bibitem[Khaled et~al.(2020)Khaled, Mishchenko, and
  Richt{\'a}rik]{khaled2020tighter}
A~Khaled, K~Mishchenko, and P~Richt{\'a}rik.
\newblock Tighter theory for local {SGD} on identical and heterogeneous data.
\newblock In \emph{The 23rd International Conference on Artificial Intelligence
  and Statistics (AISTATS 2020)}, 2020.

\bibitem[Khanduri et~al.(2021)Khanduri, Sharma, Yang, Hong, Liu, Rajawat, and
  Varshney]{khanduri2021stem}
Prashant Khanduri, Pranay Sharma, Haibo Yang, Mingyi Hong, Jia Liu, Ketan
  Rajawat, and Pramod Varshney.
\newblock Stem: A stochastic two-sided momentum algorithm achieving
  near-optimal sample and communication complexities for federated learning.
\newblock \emph{Advances in Neural Information Processing Systems}, 34, 2021.

\bibitem[Kone{\v{c}}n{\`y} et~al.(2016)Kone{\v{c}}n{\`y}, McMahan, Yu,
  Richt{\'a}rik, Suresh, and Bacon]{konecny2016federated}
Jakub Kone{\v{c}}n{\`y}, H~Brendan McMahan, Felix~X Yu, Peter Richt{\'a}rik,
  Ananda~Theertha Suresh, and Dave Bacon.
\newblock Federated learning: Strategies for improving communication
  efficiency.
\newblock \emph{arXiv preprint arXiv:1610.05492}, 2016.

\bibitem[Krizhevsky et~al.(2009)Krizhevsky, Nair, and
  Hinton]{krizhevsky2009cifar}
Alex Krizhevsky, Vinod Nair, and Geoffrey Hinton.
\newblock Learning multiple layers of features from tiny images.
\newblock \emph{CIFAR-10 (Canadian Institute for Advanced Research)}, 2009.
\newblock URL \url{http://www.cs.toronto.edu/~kriz/cifar.html}.

\bibitem[LeCun et~al.(2015)]{lecun2015lenet}
Yann LeCun et~al.
\newblock Lenet-5, convolutional neural networks.
\newblock \emph{URL: http://yann. lecun. com/exdb/lenet}, 20\penalty0
  (5):\penalty0 14, 2015.

\bibitem[Lei et~al.(2017)Lei, Ju, Chen, and Jordan]{lei2017non}
Lihua Lei, Cheng Ju, Jianbo Chen, and Michael~I Jordan.
\newblock Non-convex finite-sum optimization via scsg methods.
\newblock \emph{Advances in Neural Information Processing Systems}, 30, 2017.

\bibitem[Li et~al.(2020{\natexlab{a}})Li, Sahu, Zaheer, Sanjabi, Talwalkar, and
  Smith]{li2020federated}
Tian Li, Anit~Kumar Sahu, Manzil Zaheer, Maziar Sanjabi, Ameet Talwalkar, and
  Virginia Smith.
\newblock Federated optimization in heterogeneous networks.
\newblock \emph{Proceedings of Machine Learning and Systems}, 2:\penalty0
  429--450, 2020{\natexlab{a}}.

\bibitem[Li et~al.(2020{\natexlab{b}})Li, Huang, Yang, Wang, and
  Zhang]{li2019on}
Xiang Li, Kaixuan Huang, Wenhao Yang, Shusen Wang, and Zhihua Zhang.
\newblock On the convergence of fedavg on non-iid data.
\newblock In \emph{International Conference on Learning Representations
  (ICLR)}, July 2020{\natexlab{b}}.
\newblock URL \url{https://arxiv.org/abs/1907.02189}.

\bibitem[Li et~al.(2021)Li, Hanzely, and Richt{\'a}rik]{li2021zerosarah}
Zhize Li, Slavom{\'\i}r Hanzely, and Peter Richt{\'a}rik.
\newblock Zerosarah: Efficient nonconvex finite-sum optimization with zero full
  gradient computation.
\newblock \emph{arXiv preprint arXiv:2103.01447}, 2021.

\bibitem[McMahan et~al.(2017)McMahan, Moore, Ramage, Hampson, and
  y~Arcas]{mcmahan2017communication}
H.~Brendan McMahan, Eider Moore, Daniel Ramage, Seth Hampson, and
  Blaise~Ag\o{u}ra y~Arcas.
\newblock {Communication-Efficient Learning of Deep Networks from Decentralized
  Data}.
\newblock \emph{International Conference on Artificial Intelligenece and
  Statistics (AISTATS)}, April 2017.
\newblock URL \url{https://arxiv.org/abs/1602.05629}.

\bibitem[Nguyen et~al.(2017)Nguyen, Liu, Scheinberg, and
  Tak{\'a}{\v{c}}]{nguyen2017sarah}
Lam Nguyen, Jie Liu, Katya Scheinberg, and Martin Tak{\'a}{\v{c}}.
\newblock Sarah: A novel method for machine learning problems using stochastic
  recursive gradient.
\newblock \emph{arXiv preprint arXiv:1703.00102}, 2017.

\bibitem[Pathak and Wainwright(2020)]{pathak2020fedsplit}
Reese Pathak and Martin~J Wainwright.
\newblock {FedSplit}: An algorithmic framework for fast federated optimization.
\newblock In \emph{Advances in Neural Information Processing Systems}, 2020.

\bibitem[Reddi et~al.(2016)Reddi, Sra, P{\'o}czos, and Smola]{reddi2016fast}
Sashank~J Reddi, Suvrit Sra, Barnab{\'a}s P{\'o}czos, and Alex Smola.
\newblock Fast incremental method for nonconvex optimization.
\newblock \emph{arXiv preprint arXiv:1603.06159}, 2016.

\bibitem[Reddi et~al.(2021)Reddi, Charles, Zaheer, Garrett, Rush,
  Kone{\v{c}}n{\`y}, Kumar, and McMahan]{reddi2020adaptive}
Sashank~J Reddi, Zachary Charles, Manzil Zaheer, Zachary Garrett, Keith Rush,
  Jakub Kone{\v{c}}n{\`y}, Sanjiv Kumar, and Hugh~Brendan McMahan.
\newblock Adaptive federated optimization.
\newblock In \emph{International Conference on Learning Representations}, 2021.

\bibitem[Sattler et~al.(2020)Sattler, M{\"u}ller, and
  Samek]{sattler2020clustered}
Felix Sattler, Klaus-Robert M{\"u}ller, and Wojciech Samek.
\newblock Clustered federated learning: Model-agnostic distributed multitask
  optimization under privacy constraints.
\newblock \emph{IEEE transactions on neural networks and learning systems},
  32\penalty0 (8):\penalty0 3710--3722, 2020.

\bibitem[Schmidt et~al.(2017)Schmidt, Le~Roux, and Bach]{schmidt2017minimizing}
Mark Schmidt, Nicolas Le~Roux, and Francis Bach.
\newblock Minimizing finite sums with the stochastic average gradient.
\newblock \emph{Mathematical Programming}, 162\penalty0 (1):\penalty0 83--112,
  2017.

\bibitem[Shakespeare(2014)]{shakespeare2014complete}
William Shakespeare.
\newblock \emph{The complete works of William Shakespeare}.
\newblock Race Point Publishing, 2014.

\bibitem[Stich(2019)]{stich2018local}
Sebastian~U Stich.
\newblock Local {SGD} converges fast and communicates little.
\newblock In \emph{International Conference on Learning Representations
  (ICLR)}, 2019.

\bibitem[Tran-Dinh et~al.(2019)Tran-Dinh, Pham, Phan, and
  Nguyen]{tran2019hybrid}
Quoc Tran-Dinh, Nhan~H Pham, Dzung~T Phan, and Lam~M Nguyen.
\newblock Hybrid stochastic gradient descent algorithms for stochastic
  nonconvex optimization.
\newblock \emph{arXiv preprint arXiv:1905.05920}, 2019.

\bibitem[Wang and Joshi(2021)]{wang2018cooperative}
Jianyu Wang and Gauri Joshi.
\newblock Cooperative {SGD}: A unified framework for the design and analysis of
  communication-efficient {SGD} algorithms.
\newblock \emph{Journal of Machine Learning Research (JMLR)}, 2021.
\newblock URL \url{https://arxiv.org/abs/1808.07576}.

\bibitem[Wang et~al.(2020)Wang, Liu, Liang, Joshi, and Poor]{wang2020tackling}
Jianyu Wang, Qinghua Liu, Hao Liang, Gauri Joshi, and H~Vincent Poor.
\newblock Tackling the objective inconsistency problem in heterogeneous
  federated optimization.
\newblock \emph{Advances in neural information processing systems},
  33:\penalty0 7611--7623, 2020.

\bibitem[Wang et~al.(2021)Wang, Charles, Xu, Joshi, McMahan, Al-Shedivat,
  Andrew, Avestimehr, Daly, Data, et~al.]{wang2021field}
Jianyu Wang, Zachary Charles, Zheng Xu, Gauri Joshi, H~Brendan McMahan, Maruan
  Al-Shedivat, Galen Andrew, Salman Avestimehr, Katharine Daly, Deepesh Data,
  et~al.
\newblock A field guide to federated optimization.
\newblock \emph{arXiv preprint arXiv:2107.06917}, 2021.

\bibitem[Woodworth et~al.(2020)Woodworth, Patel, Stich, Dai, Bullins, McMahan,
  Shamir, and Srebro]{woodworth2020local}
Blake Woodworth, Kumar~Kshitij Patel, Sebastian~U Stich, Zhen Dai, Brian
  Bullins, H~Brendan McMahan, Ohad Shamir, and Nathan Srebro.
\newblock Is local {SGD} better than minibatch {SGD}?
\newblock In \emph{Proceedings of the 37th International Conference on Machine
  Learning}, 2020.

\bibitem[Yang et~al.(2021)Yang, Fang, and Liu]{yang2021achieving}
Haibo Yang, Minghong Fang, and Jia Liu.
\newblock Achieving linear speedup with partial worker participation in non-iid
  federated learning.
\newblock \emph{arXiv preprint arXiv:2101.11203}, 2021.

\bibitem[Yu et~al.(2019)Yu, Yang, and Zhu]{yu2018parallel}
Hao Yu, Sen Yang, and Shenghuo Zhu.
\newblock Parallel restarted {SGD} for non-convex optimization with faster
  convergence and less communication.
\newblock \emph{The Thirty-Third AAAI Conference on Artificial Intelligence
  (AAAI-19)}, 2019.

\bibitem[Zhang et~al.(2020)Zhang, Hong, Dhople, Yin, and Liu]{zhang2020fedpd}
Xinwei Zhang, Mingyi Hong, Sairaj Dhople, Wotao Yin, and Yang Liu.
\newblock Fedpd: A federated learning framework with optimal rates and
  adaptivity to non-iid data.
\newblock \emph{arXiv preprint arXiv:2005.11418}, 2020.

\end{thebibliography}

\newpage

\appendix

\onecolumn

% \section{Appendix}

\graphicspath{{Figures/}}

\crefname{equation}{}{}
\Crefname{equation}{}{}
\crefname{thm}{theorem}{theorems}
\Crefname{thm}{Theorem}{Theorems}
\crefname{clm}{claim}{claims}
\Crefname{clm}{Claim}{Claims}
\Crefname{coro}{Corollary}{Corollaries}
\Crefname{lem}{Lemma}{Lemmas}
\Crefname{sec}{Section}{Sections}
\crefname{app}{appendix}{appendices}
\Crefname{app}{Appendix}{Appendices}
\crefname{prop}{proposition}{propositions}
\Crefname{prop}{Proposition}{Propositions}
\Crefname{propty}{Property}{Properties}
\crefname{figure}{fig.}{figures}
\Crefname{figure}{Fig.}{Figures}
\crefname{defn}{definition}{definitions}
\Crefname{defn}{Definition}{Definitions}
\crefname{fact}{fact}{facts}
\Crefname{fact}{Fact}{Facts}
\crefname{appendix}{appendix}{appendices}
\Crefname{appendix}{Appendix}{Appendices}
\crefname{algo}{algorithm}{algorithms}
\Crefname{algo}{Algorithm}{Algorithms}
\crefname{algorithm}{algorithm}{algorithms}
\Crefname{algorithm}{Algorithm}{Algorithms}
\crefname{tbl}{table}{table}
\Crefname{tbl}{Table}{Table}
\crefname{table}{table}{table}
\Crefname{table}{Table}{Table}
\crefname{algorithm}{algorithm}{algorithms}
\Crefname{algorithm}{Algorithm}{Algorithms}

\crefname{conj}{conjecture}{conjectures}
\Crefname{conj}{Conjecture}{Conjectures}
\crefname{obs}{observation}{observations}
\Crefname{obs}{Observation}{Observations}

\section{Notations and Basic Results}

Let $\mathcal{S}^{(t)}$ be the subset of clients sampled in the $t-$th round. Let $\xi^{(t)}$ denotes the randomness due to the stochastic sampling at round $t$.

\begin{equation}
    \begin{aligned}
        \text{ Normalized Stochastic Gradient: } \deltai &= \frac{1}{\tau} \sumkt \G f_i(\bwitk, \xiitk)\\
        \text{ Normalized Gradient: } \bhit &= \frac{1}{\tau} \sum_{k=0}^{\tau-1} \G f_i(\bwitk)\\
        \text{ Average Normalized Gradient: } \bar{\bh}^{(t)} &= \frac{1}{\numclients} \sumin \bhit \\
        \text{Server Updates: } \bwtp &= \bwt - \lrs \frac{1}{\selclients} \sum_{i \in \set^{(t)}} \deltai, \qquad \qquad \text{where } \lrs = \lrss \lrc \tau
    \end{aligned}
    \label{eq:FedAvg_update}
\end{equation}

\begin{lem}[Young's inequality]
Given two same-dimensional vectors $\mbf u, \mbf v \in \mbb R^d$, the Euclidean inner product can be bounded as follows:
$$\lan \bx, \bv \ran \leq \frac{\norm{\bx}}{2 \gamma} + \frac{\gamma \norm{\bv}}{2}$$
for every constant $\gamma > 0$.
\end{lem}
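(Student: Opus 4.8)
The plan is to obtain the bound directly from the nonnegativity of a scaled perfect square, with the free parameter $\gamma$ built in from the outset. Fix $\gamma > 0$. First I would note that $\lnr \tfrac{1}{\sqrt{\gamma}}\bx - \sqrt{\gamma}\,\bv \rnr^{2} \geq 0$ and expand the left-hand side using bilinearity and symmetry of the Euclidean inner product:
\[
0 \;\leq\; \frac{1}{\gamma}\,\norm{\bx} \;-\; 2\lan \bx, \bv \ran \;+\; \gamma\,\norm{\bv},
\]
where I am using the paper's convention that $\norm{\cdot}$ already denotes the \emph{squared} Euclidean norm. Rearranging gives $2\lan \bx, \bv \ran \leq \tfrac{1}{\gamma}\norm{\bx} + \gamma\,\norm{\bv}$, and dividing through by $2$ yields exactly $\lan \bx, \bv \ran \leq \tfrac{\norm{\bx}}{2\gamma} + \tfrac{\gamma \norm{\bv}}{2}$, which is the claim.

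An equivalent route, if one prefers to invoke named facts, is to chain the Cauchy--Schwarz inequality $\lan \bx, \bv \ran \leq \lnr \bx \rnr \lnr \bv \rnr$ with the scalar AM--GM inequality $ab \leq \tfrac{1}{2}(a^{2}+b^{2})$ applied to $a = \lnr \bx \rnr / \sqrt{\gamma}$ and $b = \sqrt{\gamma}\,\lnr \bv \rnr$. There is no genuine obstacle here --- this is just the familiar parametrized (``Peter--Paul'') form of Young's inequality --- and the only thing worth care is keeping $\gamma$ symbolic throughout, since the subsequent convergence proofs will repeatedly instantiate it with problem-specific values to split inner-product cross terms (e.g.\ when bounding $\mbe\norm{\bwtp - \bwt}$ or the staleness-induced error in \texttt{\algoname}/\texttt{\clusteralgoname}) into one ``small'' squared term and an absorbable remainder.
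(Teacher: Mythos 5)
Your proof is correct: the expansion of $\lnr \tfrac{1}{\sqrt{\gamma}}\bx - \sqrt{\gamma}\,\bv \rnr^2 \geq 0$ (or equivalently Cauchy--Schwarz followed by scalar AM--GM) is the standard argument, and you have correctly handled the paper's convention that $\norm{\cdot}$ denotes the \emph{squared} Euclidean norm. The paper itself states this lemma without proof, so there is no alternative argument to compare against; note only that the lemma's statement in the paper introduces the vectors as $\mathbf{u}, \mathbf{v}$ but then writes the inequality in terms of $\bx, \bv$ --- a typo in the source, which your proof implicitly and harmlessly resolves by working with $\bx$ and $\bv$ throughout.
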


\begin{lem}[Jensen's inequality]
% \label{lem:jensens}
Given a convex function $f$ and a random variable $X$, the following holds.
$$f \lp \mbe [X] \rp \leq \mbe \lb f(X) \rb.$$
\end{lem}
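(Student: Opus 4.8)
The plan is to prove Jensen's inequality using the supporting-hyperplane (subgradient) characterization of convexity. First I would recall that a convex function $f$ admits at the point $\E{X}$ a subgradient, i.e.\ there exists a vector $\bg$ such that
\[
f(\bx) \ge f(\E{X}) + \lan \bg,\, \bx - \E{X} \ran \qquad \text{for all } \bx
\]
(in the scalar case this is simply the statement that the graph of $f$ lies above each of its support lines; at a non-differentiable point one picks any slope between the left and right derivatives). Substituting the random variable $X$ for $\bx$ yields, pointwise (almost surely), $f(X) \ge f(\E{X}) + \lan \bg,\, X - \E{X} \ran$.

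Next I would take expectations of both sides. By monotonicity and linearity of expectation, and since $\E{X - \E{X}} = \mathbf{0}$, the inner-product term vanishes, leaving $\E{f(X)} \ge f(\E{X})$, which is exactly the claim. The only bookkeeping required is that $X$ and $f(X)$ be integrable so that both sides are well defined; this holds trivially wherever the lemma is invoked in this paper, since the relevant random variables (e.g.\ the client-sampling indicators and the induced normalized updates) have finite support.

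An alternative route that avoids subgradients is to first establish the finite version $f\brac{\sum_i p_i \bx_i} \le \sum_i p_i f(\bx_i)$ for probability weights $\{p_i\}$ by induction on the number of atoms --- the two-point case being the definition of convexity and the inductive step regrouping one atom against a convex combination of the remaining ones --- and then to extend to a general $X$ by approximating it with simple random variables and passing to the limit using continuity of $f$ on the interior of its domain. I expect no genuine obstacle here: the subgradient argument is essentially two lines, and the only conceivable subtlety --- integrability, or the validity of the limiting step in the measure-theoretic version --- does not arise in the finite-client, finite-support setting in which this lemma is actually used.
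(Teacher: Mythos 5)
Your proof is correct: the supporting-hyperplane (subgradient) argument is the canonical proof of Jensen's inequality, and your remarks on integrability and the finite-support setting in which the lemma is actually invoked are accurate. The paper states this lemma without proof as a standard fact, so there is no in-paper argument to compare against; your two-line subgradient derivation is exactly what one would supply.
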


\begin{lem}[Sum of squares]
% \label{lem:sum_of_squares}
For a positive integer $K$, and a set of vectors $\bx_1, \hdots, \bx_K$, the following holds:
\begin{align*}
    \norm{\sum_{k=1}^K \bx_k} \leq K \sum_{k=1}^K \norm{\bx_k}.
\end{align*}
\end{lem}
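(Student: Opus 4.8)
The plan is to prove this directly from convexity of the squared Euclidean norm, using one of the two inequalities already recorded above; since this is an elementary estimate there is no genuine obstacle, and the only point requiring a moment's care is that the macro $\norm{\cdot}$ denotes the \emph{squared} norm, so the claim reads $\norm{\sum_{k=1}^K \bx_k} \le K \sum_{k=1}^K \norm{\bx_k}$, i.e. $\lVert \sum_k \bx_k \rVert^2 \le K \sum_k \lVert \bx_k \rVert^2$.

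First I would invoke Jensen's inequality (Lemma 2) for the convex map $\bx \mapsto \lVert \bx \rVert^2$ on $\mbb R^d$ with the uniform probability weights $1/K$. This yields $\norm{\frac{1}{K}\sum_{k=1}^K \bx_k} \le \frac{1}{K}\sum_{k=1}^K \norm{\bx_k}$. Pulling the scalar $1/K$ out of the norm on the left-hand side contributes a factor $1/K^2$, so multiplying both sides through by $K^2$ reproduces exactly the stated bound. That is the complete argument.

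As an alternative that cites Young's inequality instead, I would expand the left-hand side as the double sum $\norm{\sum_{k=1}^K \bx_k} = \sum_{k=1}^K \sum_{l=1}^K \lan \bx_k, \bx_l \ran$, bound each cross term via Lemma 1 with $\gamma = 1$, namely $\lan \bx_k, \bx_l \ran \le \tfrac12\lp \norm{\bx_k} + \norm{\bx_l} \rp$, and then sum over all $K^2$ ordered pairs; each term $\norm{\bx_k}$ appears exactly $K$ times, which produces the factor $K$ and gives the result. Either route is immediate, and I expect no hard step beyond correctly tracking the squaring convention in the $\norm{\cdot}$ notation.
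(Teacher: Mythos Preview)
Your proof is correct; both the Jensen route and the Young's inequality route are valid, and you correctly identified that $\norm{\cdot}$ is the squared norm. The paper itself states this lemma without proof (it is listed among the basic inequalities in the preliminaries), so there is nothing to compare against; your Jensen argument is the standard one-line justification.
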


\begin{lem}[Variance under uniform, without replacement sampling]
\label{lem:sample_WR}
Let $\bar{\bx} = \frac{1}{\numclients} \sumin \bx_i$. If $\bar{\bx}$ is approximated using a mini-batch $\mathcal{M}$ of size $\selclients$, sampled uniformly at random, and without replacement, then the following holds.
\begin{align*}
    \mbe \lb \frac{1}{\selclients} \sum_{i \in \mathcal{M}} \bx_i \rb &= \bar{\bx}, \\
    \mbe \lnr \frac{1}{\selclients} \sum_{i \in \mathcal{M}} \bx_i - \bar{\bx} \rnr^2 &= \frac{1}{\selclients} \frac{(\numclients - \selclients)}{(\numclients - 1)} \frac{1}{\numclients} \sumin \lnr \bx_i - \bar{\bx} \rnr^2.
\end{align*}
\end{lem}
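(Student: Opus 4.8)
The plan is to prove both identities using indicator random variables, which turns the combinatorics of sampling without replacement into two elementary probability computations. For each $i \in [\numclients]$, let $\mathbf{1}_i = \mathbf{1}[i \in \mathcal{M}]$, so that $\frac{1}{\selclients}\sum_{i \in \mathcal{M}} \bx_i = \frac{1}{\selclients}\sum_{i=1}^{\numclients} \mathbf{1}_i \bx_i$. Counting the $\binom{\numclients}{\selclients}$ equally likely subsets gives, by symmetry, $\mathbb{E}[\mathbf{1}_i] = \Pr[i \in \mathcal{M}] = \selclients/\numclients$ for every $i$, and $\mathbb{E}[\mathbf{1}_i \mathbf{1}_j] = \Pr[i,j \in \mathcal{M}] = \frac{\selclients(\selclients-1)}{\numclients(\numclients-1)}$ for every $i \neq j$; these are the only facts about the sampling distribution needed. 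Unbiasedness follows at once: $\mathbb{E}\big[\frac{1}{\selclients}\sum_i \mathbf{1}_i \bx_i\big] = \frac{1}{\selclients}\cdot\frac{\selclients}{\numclients}\sum_i \bx_i = \bar{\bx}$.

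For the variance, first reduce to the centered case: replacing each $\bx_i$ by $\bx_i - \bar{\bx}$ leaves both sides of the claimed identity unchanged (the left side because $\frac{1}{\selclients}\sum_{i\in\mathcal{M}}(\bx_i - \bar{\bx}) = \frac{1}{\selclients}\sum_{i\in\mathcal{M}}\bx_i - \bar{\bx}$ and the new mean is $\mathbf{0}$; the right side because the summands are shift-invariant), so assume $\sum_i \bx_i = \mathbf{0}$. Expanding the squared norm and using $\mathbf{1}_i^2 = \mathbf{1}_i$,
\[
\mathbb{E}\Big\| \tfrac{1}{\selclients}\sum_{i=1}^{\numclients} \mathbf{1}_i \bx_i \Big\|^2 = \frac{1}{\selclients^2}\sum_{i=1}^{\numclients}\mathbb{E}[\mathbf{1}_i]\,\|\bx_i\|^2 + \frac{1}{\selclients^2}\sum_{i \neq j}\mathbb{E}[\mathbf{1}_i\mathbf{1}_j]\,\langle \bx_i,\bx_j\rangle .
\]
Substituting the two probabilities and using the cancellation $\sum_{i\neq j}\langle \bx_i,\bx_j\rangle = \big\|\sum_i \bx_i\big\|^2 - \sum_i\|\bx_i\|^2 = -\sum_i \|\bx_i\|^2$ (this is where centering is used), the two terms combine to $\frac{1}{\selclients}\big(\frac{1}{\numclients} - \frac{\selclients-1}{\numclients(\numclients-1)}\big)\sum_i\|\bx_i\|^2$. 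Simplifying the scalar in parentheses to $\frac{\numclients-\selclients}{\numclients(\numclients-1)}$ and undoing the centering gives exactly $\frac{1}{\selclients}\frac{\numclients-\selclients}{\numclients-1}\cdot\frac{1}{\numclients}\sum_i\|\bx_i - \bar{\bx}\|^2$, as claimed.

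I do not expect any genuine obstacle: the argument is short and self-contained. The one point requiring care is the pairwise term, where one must use $\mathbb{E}[\mathbf{1}_i\mathbf{1}_j] = \frac{\selclients(\selclients-1)}{\numclients(\numclients-1)}$ rather than $(\selclients/\numclients)^2$ (the with-replacement value) — this discrepancy is precisely what produces the finite-population correction factor $\frac{\numclients-\selclients}{\numclients-1}$, and getting it right is the crux of distinguishing this lemma from the with-replacement case used in \citep{wang2020tackling}. An equivalent alternative is to apply the classical scalar variance formula for simple random sampling coordinate-by-coordinate and sum over coordinates, but the vector-valued indicator computation above is cleaner.
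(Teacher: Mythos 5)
Your proof is correct and follows essentially the same route as the paper's: both expand via indicator variables with $\Pr[i\in\mathcal{M}]=\selclients/\numclients$ and $\Pr[i,j\in\mathcal{M}]=\frac{\selclients(\selclients-1)}{\numclients(\numclients-1)}$, and both use $\sum_{i}(\bx_i-\bar{\bx})=\mathbf{0}$ to turn the cross term into $-\sum_i\lnr\bx_i-\bar{\bx}\rnr^2$, yielding the finite-population correction $\frac{\numclients-\selclients}{\numclients-1}$. No gaps.
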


\begin{proof}
\begin{align*}
    \mbe \lnr \frac{1}{\selclients} \sum_{i \in \mathcal{M}} \bx_i - \bar{\bx} \rnr^2 &= \mbe \lnr \frac{1}{\selclients} \sumin \mathbb I (i \in \mc M) \lp \bx_i - \bar{\bx} \rp \rnr^2 \\
    &= \frac{1}{\selclients^2} \mbe \lb \sumin \lp \mathbb I (i \in \mc M) \rp^2 \lnr \bx_i - \bar{\bx} \rnr^2 + \sum_{j \in [\numclients]} \sum_{\substack{j \in [\numclients] \\ i \neq j}} \mathbb I (i \in \mc M) \mathbb I (j \in \mc M) \lan \bx_i - \bar{\bx}, \bx_j - \bar{\bx} \ran \rb \\
    &= \frac{1}{\selclients^2} \sumin \frac{\selclients}{\numclients} \lnr \bx_i - \bar{\bx} \rnr^2 + \frac{1}{\selclients^2} \sum_{i \neq j} \frac{\selclients}{\numclients} \frac{(\selclients - 1)}{(\numclients - 1)} \lan \bx_i - \bar{\bx}, \bx_j - \bar{\bx} \ran \nn \\
    &= \frac{1}{\selclients^2} \sumin \lnr \bx_i - \bar{\bx} \rnr^2 \lb \frac{\selclients}{\numclients} - \frac{\selclients}{\numclients} \frac{(\selclients - 1)}{(\numclients - 1)} \rb + \frac{1}{\selclients^2} \frac{\selclients}{\numclients} \frac{(\selclients - 1)}{(\numclients - 1)} \underbrace{\lnr \sumin \lp \bx_i - \bar{\bx} \rp \rnr^2}_{=0} \\
    &= \frac{1}{\selclients} \frac{(\numclients - \selclients)}{(\numclients - 1)} \frac{1}{\numclients} \sumin \lnr \bx_i - \bar{\bx} \rnr^2.
\end{align*}
\end{proof}

% \newpage
\section{Convergence Proof for \texttt{FedAvg} (Theorem 1)}
\label{app:FedAvg}

In this section we prove the convergence of \texttt{FedAvg}, and provide the complexity and communication guarantees.
We organize this section as follows. First, in \ref{app:FedAvg_int_results} we present some intermediate results, which we use to prove the main theorem. Next, in \ref{app:FedAvg_thm_proof}, we present the proof of Theorem 1, which is followed by the proofs of the intermediate results in \ref{app:FedAvg_int_results_proofs}.

% \ps{I would suggest adding the pseudocode of \texttt{FedAvg} here for the sake of completeness}

\subsection{Intermediate Lemmas} \label{app:FedAvg_int_results}

\begin{lem}
\label{lem:FedAvg_f_decay_one_step}
Suppose the function $f$ satisfies Assumption 1, and the stochastic oracles at the clients satisfy Assumption 2, then the iterates $\{ \bwt \}_t$ generated by \texttt{FedAvg} satisfy
\begin{equation}
    \begin{aligned}
        \Eg{\set^{(t)}, \xi^{(t)}}{f(\bwtp)} & \leq f(\bwt) - \frac{\lrs}{2} \left[ \norm{\G f(\bwt)} + \mbe_{\xi^{(t)}} \norm{\bar{\bh}^{(t)}} - \mbe_{\xi^{(t)}} \norm{\G f(\bwt) - \bar{\bh}^{(t)}} \right] \nn \\
        & \quad + \frac{\lrs^2 L}{2} \lb \frac{2 \sigma^2}{M \tau} + 2 \mbe_{\set^{(t)}, \xi^{(t)}} \norm{\frac{1}{\selclients} \sum_{i \in \set^{(t)}} \bhit} \rb,
    \end{aligned}
    \label{eq:lem:FedAvg_f_decay_one_step}
\end{equation}
where $\lrs$ is the server learning rate, and $\Eg{\set^{(t)}, \xi^{(t)}}{\cdot}$ is expectation over the randomness in the $t-$th round, conditioned on $\bwt$.
\end{lem}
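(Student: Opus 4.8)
The plan is to start from the $L$-smoothness of $f$ (Assumption 1), which gives the one-step descent inequality
\[
f(\bwtp) \le f(\bwt) + \lan \G f(\bwt),\, \bwtp - \bwt \ran + \frac{L}{2}\norm{\bwtp - \bwt}.
\]
Substituting the \texttt{FedAvg} update $\bwtp - \bwt = -\lrs \frac{1}{\selclients}\sum_{i\in\set^{(t)}}\deltai$ (with $\lrs = \lrss\lrc\tau$) and taking the conditional expectation $\Eg{\set^{(t)},\xi^{(t)}}{\cdot}$ given $\bwt$, the proof reduces to bounding two terms: the linear ``cross'' term $-\lrs\, \mbe\lan\G f(\bwt),\, \frac{1}{\selclients}\sum_{i\in\set^{(t)}}\deltai\ran$ and the quadratic term $\frac{\lrs^2 L}{2}\, \mbe\norm{\frac{1}{\selclients}\sum_{i\in\set^{(t)}}\deltai}$.

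For the linear term I will first average over the sampling $\set^{(t)}$ using the unbiasedness part of \Cref{lem:sample_WR}, which replaces $\frac{1}{\selclients}\sum_{i\in\set^{(t)}}\deltai$ by $\frac{1}{\numclients}\sumin\deltai$; then, applying the tower property and the unbiasedness of the stochastic gradients (Assumption 2) step by step along each local SGD trajectory, $\mbe_{\xi^{(t)}}\big[\frac{1}{\numclients}\sumin\deltai\big] = \mbe_{\xi^{(t)}}\big[\bar{\bh}^{(t)}\big]$. Since $\G f(\bwt)$ is deterministic given $\bwt$, I can pull the $\xi^{(t)}$-expectation outside the inner product and apply the polarization identity $\lan a,b\ran = \tfrac12(\norm{a}+\norm{b}-\norm{a-b})$ with $a=\G f(\bwt)$, $b=\bar{\bh}^{(t)}$, which produces exactly $-\frac{\lrs}{2}\big[\norm{\G f(\bwt)} + \mbe_{\xi^{(t)}}\norm{\bar{\bh}^{(t)}} - \mbe_{\xi^{(t)}}\norm{\G f(\bwt) - \bar{\bh}^{(t)}}\big]$.

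For the quadratic term I will split $\frac{1}{\selclients}\sum_{i\in\set^{(t)}}\deltai = \frac{1}{\selclients}\sum_{i\in\set^{(t)}}(\deltai - \bhit) + \frac{1}{\selclients}\sum_{i\in\set^{(t)}}\bhit$ and apply the ``Sum of squares'' lemma with $K=2$; the second piece is precisely the term $2\,\mbe\norm{\frac{1}{\selclients}\sum_{i\in\set^{(t)}}\bhit}$ appearing in the statement. For the first piece, conditioning on $\set^{(t)}$ and writing $\deltai - \bhit = \frac{1}{\tau}\sumkt\big(\G f_i(\bwitk,\xiitk) - \G f_i(\bwitk)\big)$ as an average of a martingale-difference sequence (conditional mean zero given the within-client history up to step $k$), the within-client cross terms vanish and the bounded-variance part of Assumption 2 gives $\mbe_{\xi^{(t)}}\norm{\deltai - \bhit} \le \frac{1}{\tau^2}\cdot\tau\cdot\sigma^2 = \sigma^2/\tau$; since the stochastic oracles are independent across clients and $\mbe[\deltai - \bhit]=\mathbf{0}$, the cross-client terms also vanish, so $\mbe\norm{\frac{1}{\selclients}\sum_{i\in\set^{(t)}}(\deltai - \bhit)} = \frac{1}{\selclients^2}\sum_{i\in\set^{(t)}}\mbe\norm{\deltai-\bhit} \le \frac{\sigma^2}{\selclients\tau}$, giving the $\frac{2\sigma^2}{\selclients\tau}$ term. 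Combining the two bounds yields the claim.

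The main obstacle is the bookkeeping around the nested randomness: $\bar{\bh}^{(t)}$ and the $\bhit$ are themselves functions of $\xi^{(t)}$ through the local iterates $\bwitk$, so $\deltai$ is unbiased for $\bhit$ only in expectation (via the tower property over the local steps), not conditionally, and the vanishing of the cross terms in the variance bound requires carefully conditioning on $\set^{(t)}$ and on the correct prefix of the per-client SGD history. Once this is handled, the rest is a routine application of smoothness, the polarization identity, and the elementary lemmas already stated.
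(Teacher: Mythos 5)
Your proposal is correct and follows essentially the same route as the paper's proof: $L$-smoothness, unbiasedness of the sampled average plus the polarization identity for the linear term, and the split $\deltai = (\deltai - \bhit) + \bhit$ with the martingale-difference variance bound $\mbe\norm{\deltai - \bhit}\le \sigma^2/\tau$ for the quadratic term. Your explicit attention to the nested randomness (that $\bhit$ depends on $\xi^{(t)}$ and unbiasedness only holds via the tower property over local steps) is exactly the bookkeeping the paper carries out when showing the cross terms vanish.
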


\begin{lem}
\label{lem:FedAvg_grad_ht_err}
Suppose the function $f$ satisfies Assumption 1, and the stochastic oracles at the clients satisfy Assumptions 2, 3. Further, $\lrc$, the client learning rate is chosen such that $\lrc \leq \frac{1}{2 L \tau}$.
Then, the iterates $\{ \bwt \}_t$ generated by \texttt{FedAvg} satisfy
\begin{align}
    \mbe_{\xi^{(t)}} \norm{\G f(\bwt) - \bar{\bh}^{(t)}} & \leq \frac{1}{\numclients} \sumin \mbe_{\xi^{(t)}} \norm{\G f_i(\bwt) - \bhit} \nn \\
    & \leq 2 \lrc^2 L^2 (\tau - 1) \sigma^2 + 8 \lrc^2 L^2 \tau (\tau - 1) \lb \sigma_g^2 + \norm{\G f (\bwt)} \rb. \nn
\end{align}
\end{lem}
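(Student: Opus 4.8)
The plan is to prove the two inequalities in turn. For the first, note that $\nabla f(\bwt) = \frac{1}{\numclients}\sumin \nabla f_i(\bwt)$ and $\bar{\bh}^{(t)} = \frac{1}{\numclients}\sumin \bhit$, so $\nabla f(\bwt) - \bar{\bh}^{(t)} = \frac{1}{\numclients}\sumin \bigl(\nabla f_i(\bwt) - \bhit\bigr)$; Jensen's inequality applied to the convex squared-norm map gives $\norm{\nabla f(\bwt) - \bar{\bh}^{(t)}} \leq \frac{1}{\numclients}\sumin \norm{\nabla f_i(\bwt) - \bhit}$, and taking $\mbe_{\xi^{(t)}}$ preserves this. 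It thus suffices to bound $\mbe_{\xi^{(t)}}\norm{\nabla f_i(\bwt) - \bhit}$ for a fixed client $i$.

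Fixing $i$ and writing $\nabla f_i(\bwt) - \bhit = \frac{1}{\tau}\sumkt\bigl(\nabla f_i(\bwt) - \nabla f_i(\bwitk)\bigr)$, one more application of Jensen followed by $L$-smoothness (Assumption 1) gives $\norm{\nabla f_i(\bwt) - \bhit} \leq \frac{L^2}{\tau}\sumkt \norm{\bwt - \bwitk}$. The whole statement therefore reduces to controlling the local client-drift quantity $D_k := \mbe_{\xi^{(t)}}\norm{\bwt - \bwitk}$; concretely, it is enough to show $\sumkt D_k \leq \bigO{\lrc^2\tau(\tau-1)\sigma^2 + \lrc^2\tau^2(\tau-1)\bigl(\sigma_g^2 + \norm{\nabla f(\bwt)}\bigr)}$, which after substitution into $\frac{L^2}{\tau}\sumkt D_k$ and averaging over $i$ produces the stated constants $2$ and $8$.

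To bound $D_k$, unroll the \texttt{LocalSGD} recursion: $\bwitk - \bwt = -\lrc\sum_{j=0}^{k-1}\nabla f_i(\bwitj,\xiitj)$. Decompose each stochastic gradient as $\nabla f_i(\bwitj)$ plus zero-mean noise and use $\norm{\mathbf{a}+\mathbf{b}}\leq 2\norm{\mathbf{a}}+2\norm{\mathbf{b}}$: the noise terms form a martingale-difference sequence with respect to the filtration generated by the first $j$ local steps at client $i$, so after $\mbe_{\xi^{(t)}}$ their squared norms add and are each at most $\sigma^2$ by Assumption 2; for the sum of $\nabla f_i(\bwitj)$ terms invoke the sum-of-squares lemma, then bound $\norm{\nabla f_i(\bwitj)} \leq 2L^2\norm{\bwitj - \bwt} + 2\norm{\nabla f_i(\bwt)}$ (smoothness) and $\norm{\nabla f_i(\bwt)} \leq 2\sigma_g^2 + 2\norm{\nabla f(\bwt)}$ (Assumption 3). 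This produces a recursion $D_k \leq c_1\lrc^2 k\sigma^2 + c_2\lrc^2 L^2 k\sum_{j=0}^{k-1}D_j + c_3\lrc^2 k^2\bigl(\sigma_g^2 + \norm{\nabla f(\bwt)}\bigr)$ with absolute constants $c_1,c_2,c_3$. Because $k\leq\tau$ and $\lrc\leq\frac{1}{2L\tau}$, the coefficient $c_2\lrc^2 L^2 k$ of the self-referential term is a bounded multiple of $1/\tau$; summing over $k\in\{0,\dots,\tau-1\}$, using $\sum_{k}\sum_{j<k}D_j \leq \tau\sum_k D_k$ and absorbing that term into the left-hand side yields the required bound on $\sumkt D_k$.

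The main obstacle is exactly this recursion for $D_k$: one has to verify that the noise cross-terms vanish, so that the noise contributes only $k\sigma^2$ and not $k^2\sigma^2$ (this uses conditioning via the tower property rather than expanding the full square), and that the self-referential $\sum_{j<k}D_j$ term is absorbed cleanly, which is precisely where the step-size condition $\lrc\leq\frac{1}{2L\tau}$ enters; one also has to track the numerical constants through the repeated $\norm{\mathbf{a}+\mathbf{b}}\leq 2\norm{\mathbf{a}}+2\norm{\mathbf{b}}$ splittings to land on the factors $2$ and $8$. Everything else — the two Jensen/smoothness reductions and the final substitution — is routine.
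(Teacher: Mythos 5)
Your proposal follows the paper's proof essentially step for step: Jensen to reduce to per-client terms, smoothness to reduce to the drift $\mbe\norm{\bwitk-\bwt}$, unrolling \texttt{LocalSGD} with the martingale argument giving $k\sigma^2$ rather than $k^2\sigma^2$, and absorbing the self-referential drift sum using $\lrc \le \tfrac{1}{2L\tau}$ (the paper's quantity $D = 2\lrc^2L^2\tau(\tau-1)\le 1/2$). The only cosmetic differences are that the paper separates the noise and gradient sums with an equality (arguing the cross term vanishes) rather than your factor-$2$ split, and applies the Assumption~3 bound $\norm{\G f_i(\bwt)} \le 2\sigma_g^2 + 2\norm{\G f(\bwt)}$ after, rather than inside, the recursion — neither changes the argument beyond the precise constants.
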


\begin{lem}
\label{lem:FedAvg_avg_ht_err}
Suppose the function $f$ satisfies Assumption 1, and the stochastic oracles at the clients satisfy Assumptions 2, 3.
Then, the iterates $\{ \bwt \}_t$ generated by \texttt{FedAvg} satisfy
\begin{align}
    \mbe_{\set^{(t)}, \xi^{(t)}} \norm{\frac{1}{\selclients} \sum_{i \in \set^{(t)}} \bhit} & \leq \frac{3}{\numclients} \sumin \mbe \norm{\bhit - \G f_i(\bwt)} + \frac{3(\numclients - \selclients)}{(\numclients - 1) \selclients} \sigma_g^2 + 3 \mbe \norm{\G f(\bwt)}. \nn
\end{align}
\end{lem}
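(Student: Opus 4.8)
The plan is to split the sampled average of normalized gradients into three parts and apply the crude sum-of-squares inequality with $K=3$. Concretely, I would write
\begin{align*}
\frac{1}{\selclients}\sum_{i\in\set^{(t)}}\bhit
&= \underbrace{\frac{1}{\selclients}\sum_{i\in\set^{(t)}}\lp\bhit - \G f_i(\bwt)\rp}_{A}
+ \underbrace{\frac{1}{\selclients}\sum_{i\in\set^{(t)}}\G f_i(\bwt) - \G f(\bwt)}_{B}
+ \underbrace{\G f(\bwt)}_{C},
\end{align*}
so that the sum-of-squares inequality gives $\mbe\norm{\tfrac{1}{\selclients}\sum_{i\in\set^{(t)}}\bhit}\le 3\,\mbe\norm{A}+3\,\mbe\norm{B}+3\,\norm{\G f(\bwt)}$. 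The $C$-term already matches the final term of the claimed bound, so it remains to control $A$ and $B$.

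For the term $A$, I would use that $\bhit-\G f_i(\bwt)$ depends only on $\bwt$ and the local-SGD noise $\xiit$ at client $i$. Applying Jensen's inequality to the convex map $\norm{\cdot}$ with the uniform weights $1/\selclients$ gives $\norm{A}\le \frac{1}{\selclients}\sum_{i\in\set^{(t)}}\norm{\bhit-\G f_i(\bwt)}$. I would then take the expectation over $\xi^{(t)}$ with $\set^{(t)}$ held fixed, so that each summand becomes the deterministic (given $\bwt$) quantity $\mbe_{\xiit}\norm{\bhit-\G f_i(\bwt)}$, and finally take the expectation over the sampled set: by the first identity of \Cref{lem:sample_WR} (uniform, without-replacement sampling), the sample average equals the population average, yielding $\mbe\norm{A}\le \frac{1}{\numclients}\sumin\mbe\norm{\bhit-\G f_i(\bwt)}$.

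For the term $B$, note that $B$ is a function of $\set^{(t)}$ alone, since the $\G f_i(\bwt)$ are full local gradients. The variance identity in \Cref{lem:sample_WR}, applied with $\bx_i=\G f_i(\bwt)$ and $\bar\bx=\G f(\bwt)$, gives $\mbe\norm{B}=\frac{1}{\selclients}\frac{\numclients-\selclients}{\numclients-1}\cdot\frac{1}{\numclients}\sumin\norm{\G f_i(\bwt)-\G f(\bwt)}$, which is at most $\frac{\numclients-\selclients}{(\numclients-1)\selclients}\sigma_g^2$ by Assumption 3. Multiplying the $A$-, $B$-, and $C$-bounds by $3$ and adding them proves the lemma.

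The one step that needs genuine care is the treatment of $A$: the expectation over the local-SGD randomness must be taken \emph{conditioned on} the sampled set before the without-replacement sampling lemma is invoked, because only then are the per-client error terms deterministic functions of $\bwt$ and the population-average identity applies. The rest is bookkeeping, and the uniform factor $3$ is simply the price of using $\norm{\sum_k\bx_k}\le K\sum_k\norm{\bx_k}$ in place of a sharper cross-term estimate, which is unnecessary for the downstream analysis.
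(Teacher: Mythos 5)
Your proposal is correct and follows essentially the same route as the paper: the identical three-term decomposition $A+B+C$, the sum-of-squares inequality with constant $3$, Jensen's inequality plus the unbiasedness of uniform without-replacement sampling for the $A$-term, and the without-replacement variance identity of \Cref{lem:sample_WR} together with Assumption 3 for the $B$-term. The extra care you take about conditioning on $\set^{(t)}$ before invoking the sampling lemma is implicit in the paper's one-line treatment but is the right way to justify that step.
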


\subsection{Proof of Theorem 1}
\label{app:FedAvg_thm_proof}
For the sake of completeness, we first state the complete theorem statement.
\begin{thm*}
Suppose the function $f$ satisfies Assumption 1, and the stochastic oracles at the clients satisfy Assumptions 2,3. Further, the client learning rate $\lrc$, and the server learning rate $\lrss$ are chosen such that $\lrc \leq \frac{1}{8 L \tau}$, $\lrss \lrc \leq \frac{1}{24 \tau L}$.
Then, the iterates $\{ \bwt \}_t$ generated by \texttt{FedAvg} satisfy
\begin{align*}
    \min_{t \in [T]} \mbe \norm{\G f(\bwt)} & \leq \underbrace{\mco \lp \frac{f(\bw^{(0)}) - f^*}{\lrss \lrc \tau T} \rp}_{\text{Effect of initialization}} + \underbrace{\mco \lp \frac{\lrss \lrc L \sigma^2}{\selclients} + \lrc^2 L^2 (\tau - 1) \sigma^2 \rp}_{\text{Stochastic Gradient Error}} \\
    & \qquad + \underbrace{\mco \lp \lrss \lrc \tau L \frac{(\numclients - \selclients)}{(\numclients - 1) \selclients} \sigma_g^2 \rp}_{\text{Error due to partial participation}} + \underbrace{\mco \lp \lrc^2 L^2 \tau (\tau - 1) \sigma_g^2 \rp}_{\text{Client Drift Error}},
\end{align*}
where $f^* = \argmin_\mathbf{x} f(\mathbf{x})$.
\end{thm*}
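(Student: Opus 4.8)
The plan is to merge the three intermediate lemmas into a single one-step decrease inequality and then telescope. I would start from \Cref{lem:FedAvg_f_decay_one_step}, which bounds $\Eg{\set^{(t)},\xi^{(t)}}{f(\bwtp)}$ by $f(\bwt)$, a negative multiple $-\tfrac{\lrs}{2}\norm{\G f(\bwt)}$, the approximation error $\tfrac{\lrs}{2}\mbe_{\xi^{(t)}}\norm{\G f(\bwt)-\bar{\bh}^{(t)}}$, and the term $\lrs^2 L\,\mbe\norm{\tfrac{1}{\selclients}\sum_{i\in\set^{(t)}}\bhit}$ (after discarding the nonpositive $-\tfrac{\lrs}{2}\mbe\norm{\bar{\bh}^{(t)}}$ piece). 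I would then substitute \Cref{lem:FedAvg_grad_ht_err} for the first of these and \Cref{lem:FedAvg_avg_ht_err} for the second, and finally expand the per-client drift average $\tfrac1{\numclients}\sumin\mbe\norm{\bhit-\G f_i(\bwt)}$ appearing inside \Cref{lem:FedAvg_avg_ht_err} once more by \Cref{lem:FedAvg_grad_ht_err}.

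After these substitutions the bound reads $\mbe f(\bwtp)\le f(\bwt)-c_t\norm{\G f(\bwt)}+A$, where $A$ gathers all the $\sigma^2,\sigma_g^2$ terms and the coefficient $c_t$ equals $\tfrac{\lrs}{2}$ minus feedback contributions of orders $\lrs\lrc^2L^2\tau(\tau-1)$, $\lrs^2L$, and $\lrs^2L\lrc^2L^2\tau(\tau-1)$ produced by the $\norm{\G f(\bwt)}$ terms hidden in \Cref{lem:FedAvg_grad_ht_err,lem:FedAvg_avg_ht_err}. The crucial step is to invoke the hypotheses $\lrc\le\tfrac{1}{8L\tau}$ (so that $8\lrc^2L^2\tau(\tau-1)\le\tfrac18$) and $\lrss\lrc\le\tfrac{1}{24\tau L}$, equivalently $\lrs\le\tfrac{1}{24L}$ (so that $3\lrs^2L\le\tfrac{\lrs}{8}$), to show the total feedback is at most $\tfrac{\lrs}{4}$ and hence $c_t\ge\tfrac{\lrs}{4}$; the same two bounds also let me absorb the $\lrs^2 L\lrc^2L^2\tau(\tau-1)\sigma^2$ and $\lrs^2L\lrc^2L^2\tau(\tau-1)\sigma_g^2$ pieces of $A$ into the stochastic-gradient and client-drift groups respectively.

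With $c_t\ge\tfrac{\lrs}{4}$ in hand I would take total expectation, rearrange to $\tfrac{\lrs}{4}\mbe\norm{\G f(\bwt)}\le\mbe f(\bwt)-\mbe f(\bwtp)+A$, sum over $t=0,\dots,T-1$ so the function values telescope against $f(\bw^{(0)})-f^*$, divide by $\tfrac{\lrs T}{4}=\tfrac{\lrss\lrc\tau T}{4}$, and bound $\min_t\mbe\norm{\G f(\bwt)}$ by the average. Dividing $A$ by $\tfrac{\lrs T}{4}$ then yields exactly the four advertised groups: $\mco\!\big(\tfrac{f(\bw^{(0)})-f^*}{\lrss\lrc\tau T}\big)$ from the telescoped term; $\mco(\tfrac{\lrss\lrc L\sigma^2}{\selclients})$ and $\mco(\lrc^2L^2(\tau-1)\sigma^2)$ from the $\tfrac{\sigma^2}{\selclients\tau}$ and $\lrc^2L^2(\tau-1)\sigma^2$ noise pieces; $\mco\!\big(\tfrac{\lrss\lrc\tau L(\numclients-\selclients)}{\selclients(\numclients-1)}\sigma_g^2\big)$ from the without-replacement sampling variance in \Cref{lem:FedAvg_avg_ht_err}; and $\mco(\lrc^2L^2\tau(\tau-1)\sigma_g^2)$ from the drift piece of \Cref{lem:FedAvg_grad_ht_err}.

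I expect the only real obstacle to be the bookkeeping in the second step: because \Cref{lem:FedAvg_avg_ht_err} reintroduces a drift average that must be re-expanded via \Cref{lem:FedAvg_grad_ht_err}, the same $\norm{\G f(\bwt)}$ and $\sigma^2,\sigma_g^2$ terms reappear at higher order, and one must check that the stated learning-rate constants are tight enough to leave at least $\tfrac{\lrs}{4}$ of the descent coefficient after all the nested terms are absorbed. Everything past that point—telescoping, replacing $\min$ by the average, and collecting $\mco(\cdot)$ terms—is routine.
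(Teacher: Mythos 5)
Your proposal is correct and follows essentially the same route as the paper's own proof: the one-step descent lemma combined with the two error lemmas, the nested re-expansion of the per-client drift average, absorption of the $\norm{\G f(\bwt)}$ feedback terms using $\lrc \leq \frac{1}{8L\tau}$ and $\lrss\lrc \leq \frac{1}{24\tau L}$ to retain a descent coefficient of at least $\frac{\lrs}{4}$, and then telescoping. The constant bookkeeping you sketch (e.g., $8\lrc^2L^2\tau(\tau-1)\leq\frac18$ and $3\lrs^2 L\leq\frac{\lrs}{8}$) checks out and matches the paper's argument.
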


\begin{proof}
Note that for simplicity, we use the notation $\lrs = \lrss \lrc \tau$.
Substituting the bounds in Lemma \ref{lem:FedAvg_grad_ht_err} and Lemma \ref{lem:FedAvg_avg_ht_err} in \eqref{eq:lem:FedAvg_f_decay_one_step}, we get
\begin{align}
    & \mbe \lb f(\bwtp) - f(\bwt) \rb \nn \\
    & \leq - \frac{\lrs}{2} \norm{\G f(\bwt)} - \frac{\lrs}{2} \mbe \norm{\bar{\bh}^{(t)}} \nn \\
    & \quad + \frac{\lrs}{2} \lb 2 \lrc^2 L^2 (\tau - 1) \sigma^2 + 8 \lrc^2 L^2 \tau (\tau - 1) \lp \sigma_g^2 + \norm{\G f (\bwt)} \rp \rb \nn \\
    & \quad + \frac{\lrs^2 L}{2} \lb \frac{2 \sigma^2}{M \tau} + \frac{6(\numclients - \selclients)}{(\numclients - 1) \selclients} \sigma_g^2 + 6 \mbe \norm{\G f(\bwt)} \rb \nn \\
    & \quad + 3 \lrs^2 L \lb 2 \lrc^2 L^2 (\tau - 1) \sigma^2 + 8 \lrc^2 L^2 \tau (\tau - 1) \lp \sigma_g^2 + \norm{\G f (\bwt)} \rp \rb \nn \\
    & \leq - \frac{\lrs}{2} \lp 1 - 8 \lrc^2 L^2 \tau (\tau - 1) - 6 \lrs L - 48 \lrs L \lrc^2 L^2 \tau (\tau - 1) \rp \norm{\G f(\bwt)} - \frac{\lrs}{2} \mbe \norm{\bar{\bh}^{(t)}} \nn \\
    & \quad + \frac{\lrs}{2} \lp 1 + 6 \lrs L \rp 2 \lrc^2 L^2 (\tau - 1) \lb \sigma^2 + 4 \tau \sigma_g^2 \rb + \frac{\lrs^2 L}{2} \lb \frac{2 \sigma^2}{M \tau} + \frac{6(\numclients - \selclients)}{(\numclients - 1) \selclients} \sigma_g^2 \rb \nn \\
    & \leq - \frac{\lrs}{4} \norm{\G f(\bwt)} + 2 \lrs \lrc^2 L^2 (\tau - 1) \lb \sigma^2 + 4 \tau \sigma_g^2 \rb + \frac{\lrs^2 L}{2} \lb \frac{2 \sigma^2}{M \tau} + \frac{6(\numclients - \selclients)}{(\numclients - 1) \selclients} \sigma_g^2 \rb. \label{eq_proof:thm:FedAvg_1}
\end{align}
where \eqref{eq_proof:thm:FedAvg_1} follows because
\begin{align*}
    8 \lrc^2 L^2 \tau (\tau - 1) & \leq \frac{1}{8} \tag{$\because \lrc \leq \frac{1}{8 \tau L}$} \nn \\
    6 \lrs L & \leq \frac{1}{4} \tag{$\because \lrss \lrc \leq \frac{1}{24 \tau L}$} \nn \\
    48 \lrs L \lrc^2 L^2 \tau (\tau - 1) & \leq 6 \lrs L \leq \frac{1}{4}. \tag{$\because 8 \lrc^2 L^2 \tau (\tau - 1) \leq \frac{1}{8}$}
\end{align*}
Rearranging the terms, and summing over $t=0, \hdots, T-1$, we get
\begin{align}
    & \frac{1}{T} \sumtT \mbe \norm{\G f(\bwt)} \nn \\
    & \leq \frac{4}{\lrs T} \sumtT \mbe \lb f(\bwt) - f(\bwtp) \rb + 8 \lrc^2 L^2 (\tau - 1) \lb \sigma^2 + 4 \tau \sigma_g^2 \rb + 2 \lrs L \lb \frac{2 \sigma^2}{M \tau} + \frac{6(\numclients - \selclients)}{(\numclients - 1) \selclients} \sigma_g^2 \rb \nn \\
    & \leq \frac{4 \lb f(\bw^{(0)}) - f(\bw^{(T)}) \rb}{\lrss \lrc \tau T} + \frac{4 \lrss \lrc L \sigma^2}{\selclients} + 8 \lrc^2 L^2 (\tau - 1) \sigma^2 + 12 \lrss \lrc \tau L \frac{(\numclients - \selclients)}{(\numclients - 1) \selclients} \sigma_g^2 + 32 \lrc^2 L^2 \tau (\tau - 1) \sigma_g^2. \nn
\end{align}
\end{proof}

\subsection{Proofs of the Intermediate Lemmas}
\label{app:FedAvg_int_results_proofs}

\begin{proof}[Proof of Lemma \ref{lem:FedAvg_f_decay_one_step}]
Using $L$-smoothness (Assumption 1) of $f$, and only considering the randomness in the $t$-th round $\{ \set^{(t)}, \xi^{(t)} \}$,
\begin{align}
    \mbe_{\set^{(t)}, \xi^{(t)}} f(\bwtp) - f(\bwt) & \leq \mbe_{\set^{(t)}, \xi^{(t)}} \lan \G f(\bwt), \bwtp - \bwt \ran + \frac{L}{2} \mbe_{\set^{(t)}, \xi^{(t)}} \norm{\bwtp - \bwt} \nn \\
    & = -\lrs \underbrace{\mbe_{\set^{(t)}, \xi^{(t)}} \lan \G f(\bwt), \frac{1}{\selclients} \sum_{i \in \set^{(t)}} \deltai \ran}_{T_1} + \frac{\lrs^2 L}{2} \underbrace{\mbe_{\set^{(t)}, \xi^{(t)}} \norm{\frac{1}{\selclients} \sum_{i \in \set^{(t)}} \deltai}}_{T_2}. \label{eq_proof:lem:FedAvg_f_decay_one_step_1}
\end{align}
Next, we bound the terms $T_1$ and $T_2$ separately.
\begin{align}
    -T_1 &= -\mbe_{\xi^{(t)}} \lan \G f(\bwt), \frac{1}{\numclients} \sumin \bhit \ran \tag{from Assumption 3 and uniform sampling of clients} \\
    &= \frac{1}{2} \lb \mbe_{\xi^{(t)}} \norm{\G f(\bwt) - \frac{1}{\numclients} \sumin \bhit} - \norm{\G f(\bwt)} - \mbe_{\xi^{(t)}} \norm{\frac{1}{\numclients} \sumin \bhit} \rb. \label{eq_proof:lem:FedAvg_f_decay_one_step_2}
\end{align}
Next, we bound $T_2$.
\begin{align}
    T_2 & \leq 2 \mbe_{\set^{(t)}, \xi^{(t)}} \norm{\frac{1}{\selclients} \sum_{i \in \set^{(t)}} \lp \deltai - \bhit \rp} + 2 \mbe_{\set^{(t)}, \xi^{(t)}} \norm{\frac{1}{\selclients} \sum_{i \in \set^{(t)}} \bhit} \tag{Young's inequality} \\
    & \leq \frac{2}{\selclients} \frac{1}{\numclients} \sumin \mbe_{\xi^{(t)}} \norm{\deltai - \bhit} + 2 \mbe_{\set^{(t)}, \xi^{(t)}} \norm{\frac{1}{\selclients} \sum_{i \in \set^{(t)}} \bhit} \tag{uniform sampling of clients, $\mbe [ \deltai ] = \bhit$} \\
    & \leq \frac{2}{\selclients} \frac{1}{\numclients} \sumin \frac{\sigma^2}{\tau} + 2 \mbe_{\set^{(t)}, \xi^{(t)}} \norm{\frac{1}{\selclients} \sum_{i \in \set^{(t)}} \bhit}, \label{eq_proof:lem:FedAvg_f_decay_one_step_3}
\end{align}
where, \eqref{eq_proof:lem:FedAvg_f_decay_one_step_3} follows from the following reasoning.
\begin{align*}
    \mbe_{\xi^{(t)}} \norm{\deltai -\bhit} &= \mbe_{\xi^{(t)}} \norm{\frac{1}{\tau} \sumkt \lp \G f_i(\bwitk, \xiitk) - \G f_i(\bwitk) \rp} \tag{from \eqref{eq:FedAvg_update}} \\
    &= \frac{1}{\tau^2} \mbe_{\xi^{(t)}} \Bigg[ \sumkt \norm{\G f_i(\bwitk, \xiitk) - \G f_i(\bwitk)} \\
    & \ + \frac{2}{\tau^2} \sum_{j < k} \mbe_{\xi^{(t)}} \lan \underbrace{\mbe \lb \G f_i(\bwitk, \xiitk) - \G f_i(\bwitk) \vert \bwitj \rb}_{=0}, \G f_i(\bwitj, \xiitj) - \G f_i(\bwitj) \ran \Bigg] \\
    & \leq \frac{\sigma^2}{\tau} \tag{Assumption 3}.
\end{align*}
Substituting the bounds on $T_1$ \eqref{eq_proof:lem:FedAvg_f_decay_one_step_2} and $T_2$ \eqref{eq_proof:lem:FedAvg_f_decay_one_step_3} in \eqref{eq_proof:lem:FedAvg_f_decay_one_step_1}, we get the result in the lemma.
\end{proof}

\begin{proof}[Proof of Lemma \ref{lem:FedAvg_grad_ht_err}]
We borrow some of the proof techniques from [Wang et al., 2020].
\begin{align}
    \mbe_{\xi^{(t)}} \norm{\G f(\bwt) - \bar{\bh}^{(t)}} & \leq \frac{1}{\numclients} \sumin \mbe_{\xi^{(t)}} \norm{\G f_i(\bwt) - \bhit} \tag{Jensen's inequality} \\
    &= \frac{1}{\numclients} \sumin \mbe_{\xi^{(t)}} \norm{\frac{1}{\tau} \sum_{k=0}^{\tau-1} \lp \G f_i(\bwt) - \G f_i(\bwitk) \rp} \tag{from \eqref{eq:FedAvg_update}} \\
    &= \frac{L^2}{\numclients}  \sumin \frac{1}{\tau} \sumkt \mbe_{\xi^{(t)}} \norm{\bwitk - \bwt} \label{eq_proof:lem:FedAvg_grad_ht_err_1}
\end{align}
Next, we bound the individual difference $\mbe_{\xi^{(t)}} \norm{\bwitk - \bwt}$.
\begin{align}
    \mbe_{\xi^{(t)}} \norm{\bwitk - \bwt} &= \lrc^2 \mbe_{\xi^{(t)}} \norm{\sumjk \G f_i (\bwitj, \xiitj)} \nn \\
    &= \lrc^2 \lb \mbe_{\xi^{(t)}} \norm{\sumjk \lp \G f_i (\bwitj, \xiitj) - \G f_i (\bwitj) \rp} + \mbe_{\xi^{(t)}} \norm{\sumjk \G f_i (\bwitj)} \rb \nn \\
    & \leq \lrc^2 \lb \sumjk \mbe_{\xi^{(t)}} \norm{\G f_i (\bwitj, \xiitj) - \G f_i (\bwitj)} + k \sumjk \mbe_{\xi^{(t)}} \norm{\G f_i (\bwitj)} \rb \nn \\
    & \leq \lrc^2 \lb k \sigma^2 + k \sumjk \mbe_{\xi^{(t)}} \norm{\G f_i (\bwitj)} \rb. \label{eq_proof:lem:FedAvg_grad_ht_err_2}
\end{align}
Summing over $k = 0, \hdots, \tau-1$, we get
\begin{align}
    \frac{1}{\tau} \sumkt \mbe_{\xi^{(t)}} \norm{\bwitk - \bwt} & \leq \lrc^2 \frac{1}{\tau} \sumkt \lb k \sigma^2 + k \sumjk \mbe_{\xi^{(t)}} \norm{\G f_i (\bwitj) - \G f_i (\bwt) + \G f_i (\bwt)} \rb \nn \\
    & \leq \lrc^2 (\tau - 1) \sigma^2 + \frac{\lrc^2 L^2}{\tau} \sumkt k \sumjk \lb \mbe_{\xi^{(t)}} \norm{\bwitj - \bwt} + \norm{\G f_i (\bwt)} \rb \nn \\
    & \leq \lrc^2 (\tau - 1) \sigma^2 + 2 \lrc^2 L^2 \tau (\tau - 1) \lb \frac{1}{\tau} \sumkt \mbe_{\xi^{(t)}} \norm{\bwitk - \bwt} \rb \nn \\
    & \quad + 2 \lrc^2 \tau (\tau - 1) \norm{\G f_i (\bwt)}. \label{eq_proof:lem:FedAvg_grad_ht_err_3}
\end{align}
Define $D \triangleq 2 \lrc^2 L^2 \tau (\tau - 1)$. We choose $\lrc$ small enough such that $D \leq 1/2$. Then, rearranging the terms in 
\begin{align}
    \frac{1}{\tau} \sumkt \mbe_{\xi^{(t)}} \norm{\bwitk - \bwt} & \leq \frac{\lrc^2 (\tau - 1) \sigma^2}{1-D} + \frac{2 \lrc^2 \tau (\tau - 1)}{1-D} \norm{\G f_i (\bwt)}. \label{eq_proof:lem:FedAvg_grad_ht_err_4}
\end{align}
Substituting \eqref{eq_proof:lem:FedAvg_grad_ht_err_4} in \eqref{eq_proof:lem:FedAvg_grad_ht_err_1}, we get
\begin{align}
    \mbe_{\xi^{(t)}} \norm{\G f(\bwt) - \bar{\bh}^{(t)}} & \leq \frac{\lrc^2 L^2 (\tau - 1) \sigma^2}{1-D} + \frac{D}{1-D} \norm{\G f_i (\bwt) - \G f (\bwt) + \G f (\bwt)} \nn \\
    & \leq 2 \lrc^2 L^2 (\tau - 1) \sigma^2 + 4 D \sigma_g^2 + 4D \norm{\G f (\bwt)}. \tag{since $D \leq 1/2$}
\end{align}
\end{proof}

\begin{proof}[Proof of Lemma \ref{lem:FedAvg_avg_ht_err}]
Also, 
\begin{align}
    & \mbe_{\set^{(t)}, \xi^{(t)}} \norm{\frac{1}{\selclients} \sum_{i \in \set^{(t)}} \lp \bhit - \G f_i(\bwt) + \G f_i(\bwt) \rp - \G f(\bwt) + \G f(\bwt)} \nn \\
    & \leq 3 \mbe_{\set^{(t)}, \xi^{(t)}} \norm{\frac{1}{\selclients} \sum_{i \in \set^{(t)}} \lp \bhit - \G f_i(\bwt) \rp} + 3 \mbe_{\set^{(t)}} \norm{\frac{1}{\selclients} \sum_{i \in \set^{(t)}} \G f_i(\bwt) - \G f(\bwt)} + 3 \mbe \norm{\G f(\bwt)} \nn \\
    & \leq 3 \mbe_{\set^{(t)}, \xi^{(t)}} \lb \frac{1}{\selclients} \sum_{i \in \set^{(t)}} \norm{\bhit - \G f_i(\bwt)} \rb + 3 \frac{\numclients - \selclients}{(\numclients - 1) \selclients} \frac{1}{\numclients} \sumin \mbe \norm{\G f_i(\bwt) - \G f(\bwt)} + 3 \mbe \norm{\G f(\bwt)} \tag{sampling without replacement, see Lemma \ref{lem:sample_WR}} \\
    & \leq \frac{3}{\numclients} \sumin \mbe \norm{\bhit - \G f_i(\bwt)} + \frac{3(\numclients - \selclients)}{(\numclients - 1) \selclients} \sigma_g^2 + 3 \mbe \norm{\G f(\bwt)}. \nn
\end{align}
\end{proof}

% \newpage
\section{Convergence Result for \texttt{\algoname} (Theorem 2)}
\label{app:\algoname}

In this section we prove the convergence result for \texttt{\algoname} in Theorem 2, and provide the complexity and communication guarantees.

We organize this section as follows. First, in \ref{app:\algoname_int_results} we present some intermediate results, which we use to prove the main theorem. Next, in \ref{app:\algoname_thm_proof}, we present the proof of Theorem 2, which is followed by the proofs of the intermediate results in \ref{app:\algoname_int_results_proofs}.

\subsection{Intermediate Lemmas} \label{app:\algoname_int_results}

\begin{lem}
\label{:algoname_f_decay_one_step}
Suppose the function $f$ satisfies Assumption 1, and the stochastic oracles at the clients satisfy Assumption 2, then the iterates $\{ \bwt \}_t$ generated by \texttt{\algoname} satisfy
\begin{align}
    \Eg{\set^{(t)}, \xi^{(t)}}{f(\bwtp)} & \leq f(\bwt) - \frac{\lrs}{2} \left[ \norm{\G f(\bwt)} + \mbe_{\xi^{(t)}} \norm{\bar{\bh}^{(t)}} - \mbe_{\xi^{(t)}} \norm{\G f(\bwt) - \bar{\bh}^{(t)}} \right] \nn \\
    & \quad + \frac{\lrs^2 L}{2}\Eg{\set^{(t)}, \xi^{(t)}}{\norm{\bvt}},
\end{align}
where $\lrs = \lrss \lrc \tau$ is the effective server learning rate, and $\Eg{\set^{(t)}, \xi^{(t)}}{\cdot}$ is expectation over the randomness in the $t-$th round, conditioned on $\bwt$.
\end{lem}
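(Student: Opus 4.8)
The statement is the one-step descent inequality for \texttt{\algoname}, and its proof runs exactly parallel to that of Lemma \ref{lem:FedAvg_f_decay_one_step} for \texttt{FedAvg}; the only difference is that the server step now moves along the variance-reduced direction $\bvt$ instead of the plain mini-batch average. The plan has three steps: (i) expand $f(\bwtp)$ via $L$-smoothness and substitute the \texttt{\algoname} update rule; (ii) take the conditional expectation over the round's randomness and collapse the resulting inner product using the unbiasedness of $\bvt$; (iii) rewrite that inner product with the polarization identity to match the stated form, leaving the quadratic term untouched.

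For step (i), I would invoke Assumption \ref{smooth_assump} to write $f(\bwtp) \le f(\bwt) + \lan \G f(\bwt), \bwtp - \bwt \ran + \tfrac{L}{2}\norm{\bwtp - \bwt}$, then substitute $\bwtp - \bwt = -\lrs \bvt$ (with $\lrs = \lrss\lrc\tau$) to get $f(\bwtp) \le f(\bwt) - \lrs \lan \G f(\bwt), \bvt \ran + \tfrac{\lrs^2 L}{2}\norm{\bvt}$. Taking $\Eg{\set^{(t)},\xi^{(t)}}{\cdot}$ conditioned on $\bwt$ leaves the quadratic term as exactly $\tfrac{\lrs^2 L}{2}\Eg{\set^{(t)},\xi^{(t)}}{\norm{\bvt}}$, so all that remains is to evaluate $\Eg{\set^{(t)},\xi^{(t)}}{\lan \G f(\bwt), \bvt \ran}$. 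For step (ii): since $\G f(\bwt)$ is $\bwt$-measurable it factors out, and I need $\Eg{\set^{(t)},\xi^{(t)}}{\bvt}$. Conditioning on $\xi^{(t)}$ first and averaging only over the sampled set, the unbiasedness property \eqref{scala_unbias_prop} — which follows from Lemma \ref{lem:sample_WR} applied to the vectors $\deltai - \by_i^{(t)}$, the term $\tfrac{1}{\numclients}\sum_j\by_j^{(t)}$ being independent of $\set^{(t)}$ — gives $\Eg{\set^{(t)}}{\bvt \mid \xi^{(t)}} = \tfrac{1}{\numclients}\sum_{i=1}^\numclients \deltai$. Averaging over $\xi^{(t)}$ and using Assumption \ref{stochastic} with the tower rule (each $\bwitk$ is measurable with respect to $\xi_i^{(t,0)},\dots,\xi_i^{(t,k-1)}$, so $\mbe_{\xi^{(t)}}[\G f_i(\bwitk,\xiitk)] = \mbe_{\xi^{(t)}}[\G f_i(\bwitk)]$) then yields $\Eg{\set^{(t)},\xi^{(t)}}{\bvt} = \mbe_{\xi^{(t)}}[\bar{\bh}^{(t)}]$, with $\bar{\bh}^{(t)} = \tfrac{1}{\numclients}\sum_i\bhit$ as in the notation.

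For step (iii), I would apply $\lan a,b\ran = \tfrac{1}{2}(\norm{a} + \norm{b} - \norm{a-b})$ with $a = \G f(\bwt)$, $b = \bar{\bh}^{(t)}$ inside $\mbe_{\xi^{(t)}}$ (legitimate since $\G f(\bwt)$ is deterministic given $\bwt$), turning $-\lrs\lan \G f(\bwt), \mbe_{\xi^{(t)}}[\bar{\bh}^{(t)}]\ran$ into $-\tfrac{\lrs}{2}\big[\norm{\G f(\bwt)} + \mbe_{\xi^{(t)}}\norm{\bar{\bh}^{(t)}} - \mbe_{\xi^{(t)}}\norm{\G f(\bwt) - \bar{\bh}^{(t)}}\big]$; plugging this back into the smoothness bound gives the claim. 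I do not expect any genuine obstacle inside this lemma — the only point needing care is step (ii), i.e.\ checking that the nested expectation of $\bvt$ collapses \emph{exactly} to the full-client average $\tfrac{1}{\numclients}\sum_i\deltai$, which is precisely the consequence of \texttt{\algoname}'s $\tfrac{1}{\numclients}$ weighting of the stored states (and is exactly what biased schemes such as \texttt{MIFA} lack). The substantive work is deferred: the term $\tfrac{\lrs^2 L}{2}\Eg{\set^{(t)},\xi^{(t)}}{\norm{\bvt}}$ is intentionally left unexpanded here and will be bounded in the later lemmas by controlling the staleness of the stored updates $\{\by_i^{(t)}\}_{i=1}^\numclients$.
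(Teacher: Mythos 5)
Your proposal is correct and follows essentially the same route as the paper's proof: $L$-smoothness, substitution of the update $\bwtp - \bwt = -\lrs\bvt$, collapsing the cross term via the unbiasedness $\Eg{\set^{(t)},\xi^{(t)}}{\bvt} = \mbe_{\xi^{(t)}}[\bar{\bh}^{(t)}]$ (the stored-state terms cancel exactly under uniform sampling), and the polarization identity $\lan a,b\ran = \tfrac{1}{2}(\norm{a}+\norm{b}-\norm{a-b})$ applied inside $\mbe_{\xi^{(t)}}$. No gaps; the quadratic term is left unexpanded in the paper as well.
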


\begin{lem}
\label{:algoname_vt_ht_err}
Suppose the function $f$ satisfies Assumption 1, and the stochastic oracles at the clients satisfy Assumptions 2.
Then, the iterates $\{ \bwt \}_t$ generated by \texttt{\algoname} satisfy
\begin{align*}
    \mbe_{\set^{(t)}, \xi^{(t)}} \norm{\bvt-\bar{\bh}^{(t)}} & \leq \frac{\sigma^2}{\selclients \tau} + \frac{4 (\numclients - \selclients)}{\selclients (\numclients - 1)} \lb \frac{1}{\numclients} \sumin \Eg{\xi^{(t)}}{\norm{\bhit - \G f_i(\bwt)}} + \lrs^2 L^2 \norm{\bv^{(t-1)}} \rb \nn \\
    & \quad + \frac{2 (\numclients - \selclients)}{\selclients (\numclients - 1)} \frac{1}{\numclients}\sumin \norm{\G f_i(\bw^{(t-1)}) - \byit}.
\end{align*}
\end{lem}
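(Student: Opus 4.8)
The goal is to bound $\mbe\norm{\bvt - \bar{\bh}^{(t)}}$, the second moment of the \texttt{\algoname} update direction around the average normalized gradient $\bar{\bh}^{(t)} = \frac{1}{\numclients}\sumin\bhit$. The plan is a bias--variance split: write $\bvt - \bar{\bh}^{(t)}$ as a term carrying only the within-round stochastic gradient noise plus a term carrying only the client-sampling randomness, bound each, and then unfold the ``staleness'' of the stored states $\byit$ into the quantities appearing in the claim. I condition throughout on everything prior to round $t$ (so $\bwt,\bw^{(t-1)},\bv^{(t-1)}$ and all $\byit$ are frozen) and take expectations over the independent round-$t$ randomness $\{\set^{(t)},\xi^{(t)}\}$. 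Writing $\bu_i \triangleq \bhit - \byit$ and adding/subtracting $\frac{1}{\selclients}\sum_{i\in\set^{(t)}}\bhit$ in the definition of $\bvt$,
\begin{align*}
\bvt - \bar{\bh}^{(t)} = \underbrace{\frac{1}{\selclients}\sum_{i\in\set^{(t)}}\brac{\deltai - \bhit}}_{P_1} \;+\; \underbrace{\frac{1}{\selclients}\sum_{i\in\set^{(t)}}\bu_i - \frac{1}{\numclients}\sum_{j=1}^{\numclients}\bu_j}_{P_2}.
\end{align*}
Conditioning on $\set^{(t)}$ and using $\mbe_{\xi^{(t)}}[\deltai]=\bhit$, the term $P_1$ has zero conditional mean, so the $P_1$--$P_2$ cross term drops out and $\mbe\norm{\bvt - \bar{\bh}^{(t)}} = \mbe\norm{P_1} + \mbe\norm{P_2}$.

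For $P_1$: conditioned on $\set^{(t)}$ the summands $\deltai - \bhit$ are independent across clients with conditional mean zero, so $\mbe\norm{P_1} = \selclients^{-2}\sum_{i\in\set^{(t)}}\mbe\norm{\deltai - \bhit} \le \sigma^2/(\selclients\tau)$, where the per-client estimate $\mbe\norm{\deltai-\bhit}\le\sigma^2/\tau$ is precisely the $\tau$-step martingale-difference expansion already carried out in the proof of Lemma \ref{lem:FedAvg_f_decay_one_step}. For $P_2$: since $P_2$ is (given $\xi^{(t)}$) the deviation of a uniform without-replacement size-$\selclients$ average of the vectors $\bu_i$ from their full average, Lemma \ref{lem:sample_WR} gives $\mbe\norm{P_2} = \frac{\numclients-\selclients}{\selclients(\numclients-1)}\cdot\frac{1}{\numclients}\sumin\mbe\norm{\bu_i - \bar{\bu}} \le \frac{\numclients-\selclients}{\selclients(\numclients-1)}\cdot\frac{1}{\numclients}\sumin\mbe_{\xi^{(t)}}\norm{\bhit - \byit}$.

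It remains to bound $\mbe_{\xi^{(t)}}\norm{\bhit - \byit}$. Split $\bhit - \byit = \brac{\bhit - \G f_i(\bwt)} + \brac{\G f_i(\bwt) - \G f_i(\bw^{(t-1)})} + \brac{\G f_i(\bw^{(t-1)}) - \byit}$ and apply $\norm{a+b+c} \le 4\norm{a} + 4\norm{b} + 2\norm{c}$; the middle term is controlled by $L$-smoothness together with the server recursion $\bwt = \bw^{(t-1)} - \lrs\bv^{(t-1)}$, which gives $\norm{\G f_i(\bwt) - \G f_i(\bw^{(t-1)})} \le L^2\lrs^2\norm{\bv^{(t-1)}}$. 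Substituting back (the last two pieces do not depend on $\xi^{(t)}$), averaging over $i$, and combining with the bounds on $P_1$ and $P_2$ yields exactly the stated inequality.

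The second-moment expansions and the smoothness step are routine; the delicate point is making the conditioning in the decomposition precise so that $P_2$ is genuinely a centered without-replacement deviation and the $P_1$--$P_2$ interaction is killed. The one non-obvious modeling choice is routing $\bhit - \byit$ through $\G f_i(\bw^{(t-1)})$ rather than $\G f_i(\bwt)$: this is what leaves the residual ``memory-lag'' term $\frac{1}{\numclients}\sumin\norm{\G f_i(\bw^{(t-1)}) - \byit}$, together with the $\lrs^2 L^2\norm{\bv^{(t-1)}}$ term, in the exact recursive shape that the subsequent lemmas will unroll over rounds.
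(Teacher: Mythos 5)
Your proposal is correct and follows essentially the same route as the paper's proof: the same split of $\bvt-\bar{\bh}^{(t)}$ into a zero-mean stochastic-gradient part and a without-replacement sampling deviation of $\bhit-\byit$, the same use of Lemma \ref{lem:sample_WR} followed by dropping the centering, and the same routing of $\bhit-\byit$ through $\G f_i(\bw^{(t-1)})$ with smoothness and the server recursion to produce the $\lrs^2L^2\norm{\bv^{(t-1)}}$ term. The only difference is bookkeeping — you apply Lemma \ref{lem:sample_WR} once and then split the per-client second moment three ways with weights $4,4,2$, whereas the paper applies Young's inequality before the sampling lemma and again afterward — and both orderings yield identical constants; note also that the cross-term cancellation between your $P_1$ and $P_2$ requires the same per-client martingale-difference expansion over the $\tau$ local steps that the paper carries out in step $(a)$, since $P_2$ is itself $\xi^{(t)}$-dependent through $\bhit$.
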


\begin{lem}
\label{:algoname_grad_yt_err}
Suppose the function $f$ satisfies Assumption 1, and the stochastic oracles at the clients satisfy Assumptions 2, 3.
Then, the iterates $\{ \bwt \}_t$ generated by \texttt{\algoname} satisfy
\begin{align}
    & \Eg{\set^{(t)}, \xi^{(t)}}{\frac{1}{\numclients}\sum_{j=1}^{\numclients} \norm{\G f_j(\bwt) - \byjtp}} \leq \frac{\selclients}{\numclients} \left[ \frac{\sigma^2}{\tau} + \frac{1}{\numclients} \sumjn \mbe_{\xi^{(t)}} \norm{\G f_j(\bwt) - \bhjt} \right] \nn \\
    & \qquad + \left(1-\frac{\selclients}{\numclients}\right) \lb \left(1+\frac{1}{\beta}\right) \lrs^2 L^2\norm{\bv^{(t-1)}} + (1+\beta) \frac{1}{\numclients} \sum_{j=1}^{\numclients} \norm{{\G f_j(\bw^{(t-1)}) - \byjt}} \rb, \nn
\end{align}
for any positive scalar $\beta$.
\end{lem}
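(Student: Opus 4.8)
The plan is to condition on whether client $j$ is selected in round $t$ and invoke the state-update rule. Since $\set^{(t)}$ is a uniformly random size-$\selclients$ subset of $[\numclients]$ drawn without replacement, $\Pr[j\in\set^{(t)}]=\selclients/\numclients$, and this selection randomness is independent of the stochastic-gradient randomness $\xi^{(t)}$. Conditioning throughout on everything available at the start of round $t$ (so that $\bwt$, $\bw^{(t-1)}$, $\bv^{(t-1)}$ and all states $\{\byit\}_{i=1}^{\numclients}$ are fixed), the rule $\byjtp=\deltaj$ for $j\in\set^{(t)}$ and $\byjtp=\byjt$ otherwise gives, for each $j$,
\[
\Eg{\set^{(t)},\xi^{(t)}}{\norm{\G f_j(\bwt)-\byjtp}}=\frac{\selclients}{\numclients}\,\mbe_{\xi^{(t)}}\norm{\G f_j(\bwt)-\deltaj}+\lp1-\frac{\selclients}{\numclients}\rp\norm{\G f_j(\bwt)-\byjt}.
\]
The lemma will then follow by bounding the two terms and averaging over $j\in[\numclients]$.

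For the ``participating'' term I would split $\G f_j(\bwt)-\deltaj=\lp\G f_j(\bwt)-\bhjt\rp+\lp\bhjt-\deltaj\rp$; the second piece is the average of the per-step stochastic-gradient noise along client $j$'s local trajectory, and bounding it (together with its cross term) by the martingale-difference argument already used in the \texttt{FedAvg} analysis gives $\mbe_{\xi^{(t)}}\norm{\G f_j(\bwt)-\deltaj}\le\mbe_{\xi^{(t)}}\norm{\G f_j(\bwt)-\bhjt}+\sigma^2/\tau$, exactly the first bracket of the statement.

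For the ``non-participating'' term the point is that $\byjt$ is a gradient of $f_j$ at a past model, so I would peel off one round: $\G f_j(\bwt)-\byjt=\lp\G f_j(\bwt)-\G f_j(\bw^{(t-1)})\rp+\lp\G f_j(\bw^{(t-1)})-\byjt\rp$, where the second summand is precisely the ``staleness'' quantity appearing on the right-hand side. Applying the weighted sum-of-squares (Young's) inequality with a free scalar $\beta>0$ yields $\norm{\G f_j(\bwt)-\byjt}\le(1+\tfrac1\beta)\norm{\G f_j(\bwt)-\G f_j(\bw^{(t-1)})}+(1+\beta)\norm{\G f_j(\bw^{(t-1)})-\byjt}$, and by $L$-smoothness (Assumption~1) together with the \texttt{\algoname} update $\bwt-\bw^{(t-1)}=-\lrs\bv^{(t-1)}$ we get $\norm{\G f_j(\bwt)-\G f_j(\bw^{(t-1)})}\le\lrs^2L^2\norm{\bv^{(t-1)}}$. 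Substituting both bounds into the display and averaging over $j$ gives the claimed inequality.

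The individual ingredients (the inclusion-probability split, Young's inequality, smoothness) are routine; the care is in (i) keeping the conditioning straight --- in particular that conditioning on $\{j\in\set^{(t)}\}$ does not change the law of $\xi^{(t)}$, so the joint expectation factorizes cleanly into the two terms above --- and (ii) recognizing at the outset that one should chain back only a single step to $\bw^{(t-1)}$ with a tunable $\beta$, rather than unrolling all the way to the round in which $\byjt$ was last refreshed: $\beta$ is left free here precisely so that in the proof of Theorem~2 it can be chosen to make the staleness potential $\frac{1}{\numclients}\sum_{j=1}^{\numclients}\norm{\G f_j(\bwt)-\byjt}$ contract while the accumulated drift $\lrs^2L^2\norm{\bv^{(t-1)}}$ stays controlled.
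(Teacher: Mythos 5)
Your proposal is correct and follows essentially the same route as the paper's proof: the $\selclients/\numclients$ participation split, the decomposition $\G f_j(\bwt)-\deltaj=(\G f_j(\bwt)-\bhjt)+(\bhjt-\deltaj)$ with the $\sigma^2/\tau$ variance bound, Young's inequality with the free parameter $\beta$ after inserting $\G f_j(\bw^{(t-1)})$, and $L$-smoothness combined with $\bwt-\bw^{(t-1)}=-\lrs\bv^{(t-1)}$. The conditioning remarks and the rationale for chaining back only one step match the paper's argument, so there is nothing to add.
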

We also use the bound on $\frac{1}{\numclients} \sumjn \mbe_{\xi^{(t)}} \norm{\G f_j(\bwt) - \bhjt}$ from Lemma \ref{lem:FedAvg_grad_ht_err} in the previous section.

\subsection{Proof of Theorem 2}
\label{app:\algoname_thm_proof}
For the sake of completeness, first we state the complete theorem statement.

\begin{thm*}
Suppose the function $f$ satisfies Assumption 1, and the stochastic oracles at the clients satisfy Assumptions 2, 3. Further, the client learning rate $\lrc$, and the server learning rate $\lrss$ are chosen such that $\lrc \leq \frac{1}{10 L \tau}$, $\lrss \lrc \leq \min \lcb \frac{\selclients^{3/2}}{8 L \tau \numclients}, \frac{5 \selclients}{48 \tau L}, \frac{1}{4 L \tau} \rcb$.
Then, the iterates $\{ \bwt \}_t$ generated by \texttt{\algoname} satisfy
\begin{align*}
    \min_{t \in [T]} \mbe \norm{\G f(\bwt)} & \leq \underbrace{\mco \lp \frac{f(\bw^{(0)}) - f^*}{\lrss \lrc \tau T} \rp}_{\text{Effect of initialization}} + \underbrace{\mco \lp \frac{\lrss \lrc L \sigma^2}{\selclients} + \lrc^2 L^2 (\tau - 1) \sigma^2 \rp}_{\text{Stochastic Gradient Error}} + \underbrace{\mco \lp \lrc^2 L^2 \tau (\tau - 1) \sigma_g^2 \rp}_{\text{Client Drift Error}}
\end{align*}
where $f^* = \argmin_\mathbf{x} f(\mathbf{x})$.
\end{thm*}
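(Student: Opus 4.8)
The plan is to follow the same skeleton as the \texttt{FedAvg} analysis of \cref{app:FedAvg}, but to replace the partial–participation noise by a geometrically contracting ``staleness'' potential. I would start from the one–step descent bound for \texttt{\algoname} (Lemma~\ref{:algoname_f_decay_one_step}),
\[
  \mbe f(\bwtp)\ \le\ f(\bwt)-\frac{\lrs}{2}\lb \norm{\G f(\bwt)}+\mbe\norm{\bar{\bh}^{(t)}}-\mbe\norm{\G f(\bwt)-\bar{\bh}^{(t)}}\rb+\frac{\lrs^2 L}{2}\,\mbe\norm{\bvt}.
\]
Using $\norm{\bvt}\le 2\norm{\bvt-\bar{\bh}^{(t)}}+2\norm{\bar{\bh}^{(t)}}$ together with $\lrs L=\lrss\lrc\tau L\le\tfrac14$ cancels $\mbe\norm{\bar{\bh}^{(t)}}$ against $-\tfrac{\lrs}{2}\mbe\norm{\bar{\bh}^{(t)}}$ (keeping $-\tfrac{\lrs}{4}\mbe\norm{\bar{\bh}^{(t)}}$ in reserve), while Lemma~\ref{lem:FedAvg_grad_ht_err} bounds $\mbe\norm{\G f(\bwt)-\bar{\bh}^{(t)}}$ (and the $\tfrac{1}{\numclients}\sum_j\mbe\norm{\G f_j(\bwt)-\bhjt}$ that will surface below) by $\mco(\lrc^2L^2(\tau-1)\sigma^2+\lrc^2L^2\tau(\tau-1)\sigma_g^2)$ plus a $\norm{\G f(\bwt)}$ piece that $\lrc\le\tfrac1{10L\tau}$ absorbs into $-\tfrac{\lrs}{2}\norm{\G f(\bwt)}$. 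This reduces everything to controlling $\mbe\norm{\bvt-\bar{\bh}^{(t)}}$, which by Lemma~\ref{:algoname_vt_ht_err} equals $\tfrac{\sigma^2}{\selclients\tau}$ plus a second–order $\lrs^2L^2\mbe\norm{\bv^{(t-1)}}$ term plus the staleness $\mathcal{E}_t:=\tfrac{1}{\numclients}\sum_i\mbe\norm{\G f_i(\bw^{(t-1)})-\byit}$ scaled by $\tfrac{2(\numclients-\selclients)}{\selclients(\numclients-1)}$.

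Next I would build a recursion for $\mathcal{E}_t$. Applying Lemma~\ref{:algoname_grad_yt_err} with $\beta=\tfrac{\selclients}{2(\numclients-\selclients)}$ turns its factor $(1-\tfrac{\selclients}{\numclients})(1+\beta)$ into exactly $1-\tfrac{\selclients}{2\numclients}$, so $\mathcal{E}_{t+1}\le(1-\tfrac{\selclients}{2\numclients})\mathcal{E}_t+\tfrac{\selclients}{\numclients}\big(\tfrac{\sigma^2}{\tau}+\mco(\lrc^2L^2\tau^2)(\sigma^2+\sigma_g^2)\big)+\tfrac{(\numclients-\selclients)(2\numclients-\selclients)}{\numclients\selclients}\lrs^2L^2\mbe\norm{\bv^{(t-1)}}$, up to a negligible $\norm{\G f(\bwt)}$ term. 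To close the loop I would bound $\norm{\bv^{(t-1)}}\le 2\norm{\bv^{(t-1)}-\bar{\bh}^{(t-1)}}+2\norm{\bar{\bh}^{(t-1)}}$ and apply Lemma~\ref{:algoname_vt_ht_err} once more to the first summand, which re–injects $\mathcal{E}_{t-2}$ into the recursion with a coefficient of order $\tfrac{\numclients}{\selclients^{2}}\lrs^2L^2$. This is precisely where the hypothesis $\lrss\lrc\tau L\le\tfrac{\selclients^{3/2}}{8\numclients}$ enters: it forces that feedback coefficient below $\tfrac{\selclients}{4\numclients}$, so the net recursion stays a contraction, $\mathcal{E}_{t+1}\le(1-\tfrac{\selclients}{4\numclients})\mathcal{E}_t+\tfrac{\selclients}{\numclients}\cdot(\text{stoch./drift error})+(\text{small})\,\mbe\norm{\bar{\bh}^{(\cdot)}}+(\text{small})\,\norm{\G f(\bw^{(\cdot)})}$, the residual $\mbe\norm{\bar{\bh}^{(\cdot)}}$ being in turn bounded by $\mco(\norm{\G f(\bw^{(\cdot)})})$ plus client–drift error via Lemma~\ref{lem:FedAvg_grad_ht_err}.

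I would then combine the two estimates through the Lyapunov function $\Phi_t:=\mbe[f(\bwt)-f^*]+A\lrs\,\mathcal{E}_t$ with $A=\Theta\!\big(\tfrac{\numclients\lrs L}{\selclients^{2}}\big)$, chosen so that the $\tfrac{2(\numclients-\selclients)}{\selclients(\numclients-1)}\lrs^2L\,\mathcal{E}_t$ surfacing in the descent is dominated by the $-A\lrs\tfrac{\selclients}{4\numclients}\mathcal{E}_t$ of the recursion, and so that the leftover multiples of $\mbe\norm{\bar{\bh}^{(\cdot)}}$ and $\norm{\G f(\bw^{(\cdot)})}$ fit inside the $-\tfrac{\lrs}{4}\mbe\norm{\bar{\bh}^{(\cdot)}}$ held in reserve and $-\tfrac{\lrs}{2}\norm{\G f(\bw^{(\cdot)})}$; the remaining learning–rate bounds $\lrss\lrc\le\tfrac{5\selclients}{48\tau L}$, $\lrss\lrc\le\tfrac{1}{4\tau L}$, $\lrc\le\tfrac1{10L\tau}$ give the extra room for these absorptions. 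One then gets $\Phi_{t+1}-\Phi_t\le -c\,\lrs\,\mbe\norm{\G f(\bwt)}+\mco\!\big(\lrs\big[\lrc^2L^2(\tau-1)\sigma^2+\tfrac{\lrss\lrc L\sigma^2}{\selclients}+\lrc^2L^2\tau(\tau-1)\sigma_g^2\big]\big)$ for an absolute $c>0$ (the $\tfrac{\lrss\lrc L\sigma^2}{\selclients}$ term coming from $\lrs^2L\cdot\tfrac{\sigma^2}{\selclients\tau}$ and its amplification through the staleness recursion). Summing over $t=0,\dots,T-1$, using $\Phi_T\ge0$, noting that $\byit$ initialized at $\mathbf{0}$ makes the initial staleness equal to $\tfrac{1}{\numclients}\sum_i\norm{\G f_i(\bw^{(0)})}$ — a harmless additive term in $\Phi_0$, which is exactly why the unbiasedness of $\bvt$ lets the algorithm start from zero states — and dividing by $\lrs T$ reproduces the claimed bound; crucially no $\sigma_g^2$ term surviving at the $\lrss\lrc\tau$ scale, i.e.\ the partial–participation error is gone.

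The step I expect to be hardest is the mutual coupling between Lemmas~\ref{:algoname_vt_ht_err} and~\ref{:algoname_grad_yt_err}: each controls one of $\{\mbe\norm{\bvt-\bar{\bh}^{(t)}},\ \mathcal{E}_{t+1}\}$ only in terms of the \emph{other} and of $\mbe\norm{\bv^{(t-1)}}$ (which is itself $\tfrac{1}{\lrs^2}\mbe\norm{\bwt-\bw^{(t-1)}}$), so one must disentangle a genuine two–variable linear recursion, tracking how sampling without replacement amplifies the feedback by factors of $1/\selclients$. Verifying that the staleness contraction survives this feedback is what pins the server step size to the somewhat unusual $\lrss\lrc\tau L=\mco(\selclients^{3/2}/\numclients)$ regime rather than the naive $\mco(\selclients/\numclients)$, and making the Lyapunov weight $A$ and the several learning–rate thresholds mutually compatible is the delicate bookkeeping; everything else is routine use of Young's inequality, Jensen's inequality and the sum–of–squares bound.
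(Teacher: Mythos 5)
Your plan is sound and, at the level of its building blocks, is the same as the paper's: the identical one-step descent lemma, the identical bound on $\mbe\norm{\bvt-\bar{\bh}^{(t)}}$, the identical staleness recursion with the same choice $\beta=\selclients/(2(\numclients-\selclients))$, the same Lyapunov weight $A\lrs=\Theta(\numclients\lrs^2L/\selclients^2)$ on the staleness potential, and the same accounting for why the $\sigma_g^2$ term at scale $\lrss\lrc\tau$ disappears. The one genuine structural difference is how the $\lrs^2L^2\mbe\norm{\bv^{(t-1)}}$ feedback appearing in both intermediate lemmas is closed. The paper simply adds a third term $(\delta-\tfrac{\lrs^2L}{2})\mbe\norm{\bvt}$ to the Lyapunov function, with $\delta=2\lrs^2L$, so the coupling becomes a one-step linear recursion in three potentials and the learning-rate conditions drop out of two explicit inequalities on $(\alpha,\delta)$. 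You instead keep a two-term Lyapunov function and unroll $\mbe\norm{\bv^{(t-1)}}$ by writing it as $2\mbe\norm{\bv^{(t-1)}-\bar{\bh}^{(t-1)}}+2\mbe\norm{\bar{\bh}^{(t-1)}}$ and reapplying Lemma~\ref{:algoname_vt_ht_err} at round $t-1$. This works and yields the same threshold $\lrs L\lesssim\selclients^{3/2}/\numclients$, but note that a single reapplication is not enough: it reinjects not only the staleness at the earlier round (at round $t-1$, not $t-2$ as you wrote) but also $\mbe\norm{\bv^{(t-2)}}$, so you are committed to an infinite unrolling whose coefficients decay geometrically like $(\lrs^2L^2/\selclients)^k$, plus an index-shifting argument after summing over $t$ to absorb the accumulated $\mbe\norm{\bar{\bh}^{(t-k)}}$ and $\norm{\G f(\bw^{(t-k)})}$ remainders into the reserves held at those earlier rounds. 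That loose end is the only thing missing from your write-up; the paper's device of promoting $\mbe\norm{\bvt}$ to a Lyapunov coordinate buys exactly the avoidance of this regress, at the cost of one more free parameter to tune.
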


\begin{coro}
% \label{corro_1}
Setting $\lrc = \frac{1}{\sqrt{T}\tau L}$ and $\lrss = \sqrt{\tau \selclients}$, \texttt{\algoname} converges to a stationary point of the global objective $f(\bw)$ at a rate given by,
\begin{align*}
    &\min_{t \in \{0, \hdots, T-1 \}} \mbe \norm{\nabla  f(\bw^{(t)})} \leq \underbrace{\bigO{\frac{1}{\sqrt{\selclients \tau T}}}}_{\text{stochastic gradient error}} + \underbrace{\bigO{\frac{1}{T}}}_{\text{client drift error}}.
\end{align*}
\end{coro}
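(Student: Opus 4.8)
The substantive claim is Theorem~2 (convergence of \texttt{\algoname}); the corollary is immediate from it by plugging in $\lrc = \tfrac{1}{\sqrt{T}\,\tau L}$ and $\lrss = \sqrt{\tau\selclients}$ and simplifying, so I describe the proof of the theorem. The plan is a Lyapunov-function argument assembled from the three intermediate lemmas of this section together with \Cref{lem:FedAvg_grad_ht_err}. I would start from the one-step descent bound for \texttt{\algoname} and first replace its second-moment term $\Eg{\set^{(t)},\xi^{(t)}}{\norm{\bvt}}$ by $2\,\mbe\norm{\bvt - \bar{\bh}^{(t)}} + 2\,\mbe\norm{\bar{\bh}^{(t)}}$, then invoke the bound on $\mbe\norm{\bvt - \bar{\bh}^{(t)}}$. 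In contrast with the \texttt{FedAvg} analysis, this bound carries, beyond the stochastic-gradient term $\sigma^2/(\selclients\tau)$, two ``staleness'' contributions: a multiple of $\norm{\bv^{(t-1)}}$ and the memory-lag quantity
\[
    e_t \;\triangleq\; \frac{1}{\numclients}\sum_{j=1}^{\numclients}\norm{\G f_j(\bw^{(t-1)}) - \byjt}.
\]
Concurrently I would bound $\mbe\norm{\G f(\bwt) - \bar{\bh}^{(t)}}$ and the per-client drift terms $\tfrac{1}{\numclients}\sum_j\mbe\norm{\G f_j(\bwt) - \bhjt}$ via \Cref{lem:FedAvg_grad_ht_err}, which injects the client-drift error $\mco(\lrc^2 L^2\tau(\tau-1)\sigma_g^2)$ and the term $\mco(\lrc^2 L^2(\tau-1)\sigma^2)$ while feeding back a small multiple of $\norm{\G f(\bwt)}$ to be absorbed later. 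Crucially, no term proportional to $\tfrac{(\numclients-\selclients)}{(\numclients-1)}\sigma_g^2$ ever appears --- this is the mechanism by which the unbiased design eliminates the partial-participation error.

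Because the resulting one-step inequality now involves $e_t$ and $\norm{\bv^{(t-1)}}$, I would not telescope $f(\bwt)$ alone; instead I would work with a potential
\[
    \Phi^{(t)} \;=\; \mbe[f(\bwt) - f^*] \;+\; c_1\,\mbe[e_t] \;+\; c_2\,\mbe\norm{\bv^{(t-1)}},
\]
for nonnegative constants $c_1, c_2$ (allowed to depend on the learning rates) to be chosen. The recursion for $\mbe[e_{t+1}]$ comes from the third lemma of this section: after setting its free parameter to $\beta = \Theta(\selclients/\numclients)$, the factor $(1 - \selclients/\numclients)(1+\beta)$ multiplying $e_t$ becomes $1 - \Theta(\selclients/\numclients)$, and what remains is a $\norm{\bv^{(t-1)}}$ term plus the gradient-drift quantities already controlled; the $\norm{\bvt}$ that surfaces when passing from $\Phi^{(t)}$ to $\Phi^{(t+1)}$ is re-expanded with the same split $2\norm{\bvt - \bar{\bh}^{(t)}} + 2\norm{\bar{\bh}^{(t)}}$. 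The task is then to pick $c_1, c_2$ and use the hypotheses $\lrc \le \tfrac{1}{10L\tau}$ and $\lrss\lrc \le \min\lcb \tfrac{\selclients^{3/2}}{8L\tau\numclients}, \tfrac{5\selclients}{48\tau L}, \tfrac{1}{4L\tau} \rcb$ so that, after collecting terms: (i) the net coefficient of $\norm{\G f(\bwt)}$ is at most $-\tfrac{\lrs}{4}$ (using $\lrc L\tau = \mco(1)$ to tame the feedback from \Cref{lem:FedAvg_grad_ht_err}); (ii) the net coefficient of $\mbe\norm{\bar{\bh}^{(t)}}$ is nonpositive --- the descent lemma gives $-\tfrac{\lrs}{2}$ but $\norm{\bvt}$ contributes $+\mco(\lrs^2 L)$, so one needs $\lrs L = \lrss\lrc\tau L = \mco(1)$, which is the role of $\lrss\lrc \le \tfrac{1}{4L\tau}$; and (iii) the coefficients generated on $\mbe[e_{t+1}]$ and $\mbe\norm{\bvt}$ are at most $c_1$ and $c_2$, so that $\Phi^{(t+1)} \le \Phi^{(t)} - \tfrac{\lrs}{4}\,\mbe\norm{\G f(\bwt)} + \mco\!\left(\tfrac{\lrs^2 L\sigma^2}{\selclients\tau} + \lrs\,\lrc^2 L^2(\tau-1)\sigma^2\right)$.

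It then remains to telescope this inequality over $t = 0,\dots,T-1$ and divide by $\tfrac{\lrs}{4}T = \tfrac{\lrss\lrc\tau}{4}T$, which produces the claimed bound with initialization term $\mco(\tfrac{f(\bw^{(0)})-f^*}{\lrss\lrc\tau T})$, stochastic-gradient error $\mco(\tfrac{\lrss\lrc L\sigma^2}{\selclients} + \lrc^2 L^2(\tau-1)\sigma^2)$, client-drift error $\mco(\lrc^2 L^2\tau(\tau-1)\sigma_g^2)$, and no surviving partial-participation term. The index $t = 0$ needs one extra remark: since $\by_i^{(0)} = \mathbf{0}$ for all $i$, the first \texttt{\algoname} update coincides with a \texttt{FedAvg} update, so I may take $e_0 = 0$ and $\bv^{(-1)} = \mathbf{0}$, whence $\Phi^{(0)} = f(\bw^{(0)}) - f^*$, and bound the $t=0$ step directly by the \texttt{FedAvg} one-step descent lemma. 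Substituting the corollary's learning rates into the final bound then collapses the first three terms into $\mco(1/\sqrt{\selclients\tau T})$ and the client-drift term into $\mco(1/T)$.

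The step I expect to be the main obstacle is the constant-chasing in (i)--(iii): checking that, under the three stated learning-rate restrictions, every ``bad'' coefficient --- those multiplying $\norm{\G f(\bwt)}$, $\norm{\bar{\bh}^{(t)}}$, $\norm{\bv^{(t-1)}}$ and $e_t$ once all the lemma bounds are substituted --- is dominated by its matching ``good'' coefficient so that $\Phi^{(t)}$ genuinely contracts. The delicate piece is the cross term $\lrs^2 L^2\norm{\bv^{(t-1)}}$, which appears both in the $\norm{\bvt - \bar{\bh}^{(t)}}$ bound and in the $e_t$ recursion (there with a $1/\beta = \Theta(\numclients/\selclients)$ inflation); absorbing it into $c_2\,\mbe\norm{\bv^{(t-1)}}$ is exactly what forces the scaling $\lrss\lrc \lesssim \selclients^{3/2}/(L\tau\numclients)$, and pinning down admissible $c_1, c_2$ while tracking all these constants --- rather than the high-level structure, which is the standard \texttt{SAGA}-style potential argument --- is where the real work lies.
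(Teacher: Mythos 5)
Your proposal matches the paper's proof essentially step for step: the paper uses exactly the Lyapunov function $R^{(t+1)} = \mbe\bigl[f(\bw^{(t+1)}) + (\delta - \tfrac{\lrs^2 L}{2})\lVert\bv^{(t)}\rVert^2 + \alpha\tfrac{1}{\numclients}\sum_j\lVert\G f_j(\bw^{(t)}) - \by_j^{(t+1)}\rVert^2\bigr]$ (your $\Phi$ with $c_2 = \delta - \tfrac{\lrs^2 L}{2}$ and $c_1 = \alpha$), the same three intermediate lemmas plus \Cref{lem:FedAvg_grad_ht_err}, the same choice $\beta = \tfrac{\selclients}{2(\numclients-\selclients)}$, the same origin of the $\selclients^{3/2}/(8L\tau\numclients)$ condition from absorbing the $\lrs^2 L^2\lVert\bv^{(t-1)}\rVert^2$ cross term, and the same separate treatment of the first round via $\by_i^{(0)} = \mathbf{0}$. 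The only cosmetic difference is that the paper uses the exact bias--variance identity $\mbe\lVert\bv^{(t)}\rVert^2 = \mbe\lVert\bv^{(t)} - \bar{\bh}^{(t)}\rVert^2 + \mbe\lVert\bar{\bh}^{(t)}\rVert^2$ rather than your factor-of-two Young split, which changes nothing substantive.
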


\begin{proof}
We define the Lyapunov function as below for some $\alpha$ and $\frac{\lrs^2 L}{2} \leq \delta \leq \frac{\lrs}{2}$. A necessary condition for this to be satisfied is $\lrs = \lrss \lrc \tau \leq 1/L$. The precise choice of $\alpha, \delta$ will be discussed later.
\begin{align}
    R^{(t+1)} \triangleq \E{f(\bw^{(t+1)}) + \left(\delta - \frac{\lrs^2 L}{2}\right)\norm{\bvt} + \alpha\frac{1}{\numclients}\sum_{j=1}^{\numclients}\norm{\G f_j(\bwt) - \byjtp}}. \label{eq:lyapunov_defn}
\end{align}
Using Lemma \ref{:algoname_f_decay_one_step},
\begin{align}
    R^{(t+1)} & \leq \mathbb{E} \left[ f(\bwt) - \frac{\lrs}{2} \norm{\G f(\bwt)} + \frac{\lrs}{2} \frac{1}{\numclients} \sumjn \mbe_{\xi^{(t)}} \norm{\G f_j(\bwt) - \bhjt} - \frac{\lrs}{2} \mbe_{\xi^{(t)}} \norm{\bar{\bh}^{(t)}} + \delta \mbe_{\set^{(t)}, \xi^{(t)}} \norm{\bvt} \right. \nn \\
    &\hspace{10pt}\left. + \alpha\frac{1}{\numclients}\sum_{j=1}^{\numclients}\Eg{\set^{(t)}, \xi^{(t)}}{\norm{\G f_j(\bwt) - \byjtp }}\right] \tag{Jensen's inequality} \\
    &\leq \mathbb{E}\left[f(\bwt) -\frac{\lrs}{2} \norm{\G f(\bwt)} + \frac{\lrs}{2} \frac{1}{\numclients} \sumjn \Eg{\xi^{(t)}}{\norm{\G f_j(\bwt) - \bhjt}} + \delta\Eg{\set^{(t)}, \xi^{(t)}}{\norm{\bvt-\bar{\bh}^{(t)}}}\right. \nn \\
    &\left.\hspace{20pt} + \alpha\frac{1}{\numclients}\sum_{j=1}^{\numclients}\Eg{\set^{(t)}, \xi^{(t)}}{\norm{\G f_j(\bwt) - \by_j^{(t+1)}}} \right], \label{eq_proof:thm:\algoname_1}
\end{align}
where for the last line we use that $\Eg{\set^{(t)}, \xi^{(t)}}{\norm{\bvt}} = \Eg{\set^{(t)}, \xi^{(t)}}{\norm{\bvt-\bar{\bh}^{(t)}}} + \Eg{\xi^{(t)}}{\norm{\bar{\bh}^{(t)}}}$ and $\delta \leq \frac{\lrs}{2}$. Next, define $C^{(t)} \triangleq \frac{1}{\numclients}\sumjn\Eg{\xi^{(t)}}{\norm{\G f_j(\bwt) - \bhjt}}$. Substituting the bounds from Lemma \ref{:algoname_vt_ht_err} and Lemma \ref{:algoname_grad_yt_err} in \eqref{eq_proof:thm:\algoname_1} we get,
\begin{align}
    R^{(t+1)} & \leq \E{f(\bwt) -\frac{\lrs}{2}\norm{\G f(\bwt)}+ \left(\frac{\lrs}{2} + \frac{4\delta}{\selclients} \frac{(\numclients - \selclients)}{(\numclients - 1)} + \frac{\alpha \selclients}{\numclients} \right) C^{(t)} + \left(\frac{ \delta}{\selclients} + \frac{\alpha \selclients}{\numclients}\right) \frac{\sigma^2}{\tau}} \nn \\
    & \quad + \left(\frac{4\delta \lrs^2 L^2}{\selclients} \frac{(\numclients - \selclients)}{(\numclients - 1)} + \alpha \left( 1 - \frac{\selclients}{\numclients} \right) \left( 1 + \frac{1}{\beta} \right) \lrs^2 L^2 \right) \mbe \norm{\bv^{(t-1)}} \nn \\
    & \quad + \left( \frac{2\delta}{\selclients} \frac{(\numclients - \selclients)}{(\numclients - 1)} + \alpha \left( 1 - \frac{\selclients}{\numclients} \right) (1 + \beta) \right) \frac{1}{\numclients} \sum_{j=1}^{\numclients} \mbe \norm{\G f_j(\bw^{(t-1)}) - \byjt }. \label{eq_proof:thm:\algoname_2}
\end{align}

\paragraph{Choice of $\alpha, \delta$.}
Our goal is now to find a suitable $\delta$ and $\alpha$ such that,
\begin{align*}
    \left(\frac{4\delta \lrs^2 L^2}{\selclients} \frac{(\numclients - \selclients)}{(\numclients - 1)} + \alpha\left(1-\frac{\selclients}{\numclients}\right)\left(1+\frac{1}{\beta}\right)\lrs^2 L^2\right) & \leq \delta - \frac{\lrs^2 L}{2}, \nn \\
    \left(\frac{2\delta}{\selclients} \frac{(\numclients - \selclients)}{(\numclients - 1)} + \alpha \left(1-\frac{\selclients}{\numclients}\right)(1+\beta)\right) & \leq \alpha
\end{align*}
We define $A = \left(1-\frac{\selclients}{\numclients} \right)\left(1+\frac{1}{\beta} \right)$ and $B = \left(1-\frac{\selclients}{\numclients}\right)(1+\beta)$. In case of full client participation, $\selclients = \numclients$, and $A = B = 0$. The resulting condition on $\alpha$ is
\begin{align*}
    \alpha \geq \frac{2\delta}{\selclients (1-B)} \frac{(\numclients - \selclients)}{(\numclients - 1)}, \qquad \beta \leq \frac{\selclients}{\numclients - \selclients}.
\end{align*}
We set $\alpha = \frac{2\delta}{\selclients (1-B)} \frac{(\numclients - \selclients)}{(\numclients - 1)}$, and our condition on $\delta$ then reduces to,
\begin{align*}
    \delta \geq \frac{\lrs^2 L/2}{1 - \frac{4 \lrs^2 L^2}{\selclients} \frac{(\numclients - \selclients)}{(\numclients - 1)} - \frac{2 A \lrs^2 L^2}{\selclients (1-B)} \frac{(\numclients - \selclients)}{(\numclients - 1)}}
\end{align*}
We want $\lrs$ such that,
\begin{align*}
    \frac{4 \lrs^2 L^2}{\selclients} \frac{(\numclients - \selclients)}{(\numclients - 1)} + \frac{2 A \lrs^2 L^2}{\selclients (1-B)} \frac{(\numclients - \selclients)}{(\numclients - 1)} \leq \frac{1}{2}
\end{align*}
A sufficient condition for this is
% \ps{not sure of the reasoning for this} 
\begin{align}
    \frac{4 \lrs^2 L^2}{\selclients} \leq \frac{1}{4}, \qquad \text{ and } \qquad \frac{2 A \lrs^2 L^2}{\selclients (1-B)} \leq \frac{1}{4}. \label{eq:cond_lrs_3}
\end{align}
% \ps{This choice of $\beta$ won't work for full client participation. Need another condition for that simpler case.}
For $\beta = \frac{\selclients}{2(\numclients - \selclients)}$, $B = 1 - \frac{\selclients}{2 \numclients}$ and $A = \lp 1 - \frac{\selclients}{\numclients} \rp \lp \frac{2 \numclients}{\selclients} - 1 \rp \leq \frac{2 \numclients}{\selclients}$. A sufficient condition for \eqref{eq:cond_lrs_3} to be satisfied is
\begin{align*}
    \lrs \leq \min \lcb \frac{\sqrt{M}}{4 L}, \frac{\selclients^{3/2}}{8 L \numclients} \rcb \quad \Rightarrow \quad \lrss \lrc \leq \lcb \frac{\sqrt{M}}{4 \tau L}, \frac{\selclients^{3/2}}{8 L \tau \numclients} \rcb
\end{align*}
With \eqref{eq:cond_lrs_3} we have $\delta \geq \lrs^2 L$. We set $\delta = 2\lrs^2 L$ which gives us $\alpha = \frac{8 \numclients \lrs^2 L}{\selclients^2} \frac{(\numclients - \selclients)}{(\numclients - 1)}$. Since $\delta \leq \frac{\lrs}{2}$, we also need $\lrs \leq \frac{1}{4L}$.

With this choice of $\alpha, \delta$, from \eqref{eq_proof:thm:\algoname_2} we get
\begin{align}
    R^{(t+1)} &\leq R^{(t)} - \frac{\lrs}{2} \mbe \norm{\G f(\bwt)} + \left( \frac{\lrs}{2} + \frac{4\delta}{\selclients} + \frac{\alpha \selclients}{\numclients} \right) C^{(t)} + \left(\frac{\delta}{\selclients} + \frac{\alpha \selclients}{\numclients} \right) \frac{\sigma^2}{\tau} \nn \\
    &\leq R^{(t)} - \frac{\lrs}{2}\norm{\G f(\bwt)} +  3\lrs C^{(t)} + \frac{10 \lrs^2 L}{\selclients} \frac{\sigma^2}{\tau}
\end{align}
where we use the condition that $\lrs \leq \frac{5 \selclients}{48 L}$.
Further, we can bound $C^{(t)} = \frac{1}{\numclients}\sumjn\Eg{\xi^{(t)}}{\norm{\G f_j(\bwt) - \bhjt}}$ using Lemma \ref{lem:FedAvg_grad_ht_err}, which gives us
\begin{align*}
    R^{(t+1)} \leq R^{(t)} - \frac{\lrs}{2} \lp 1 - 48 \lrc^2 L^2 \tau (\tau-1) \rp \norm{\G f(\bwt)} + 6 \lrs \lrc^2 L^2 (\tau-1) \lb \sigma^2 + 4 \tau \sigma_g^2 \rb + \frac{10 \lrs L \sigma^2}{\selclients \tau}.
\end{align*}
Using the condition on $\lrc$ that $\lrc \leq \frac{1}{10 L \tau}$, and unrolling the recursion we get,
\begin{align}
    R^{(t)} &\leq R^{(1)}+  \sum_{t=1}^{(t-1)}\left(-\frac{\lrs}{4}\norm{\G f(\bwt)} + 6 \lrs \lrc^2 L^2 (\tau-1) \lb \sigma^2 + 4 \tau \sigma_g^2 \rb + \frac{10 \lrs^2 L \sigma^2}{\selclients \tau} \right). \label{eq_proof:thm:\algoname_3}
\end{align}
Next, we bound $R^{(1)}$.
Using \eqref{eq_proof:thm:\algoname_1} and \eqref{:algoname_grad_yt_err_1} we can bound $R^{(1)}$ as follows,
\begin{align}
    R^{(1)} \leq & f(\bw^{(0)}) -\frac{\lrs}{2}\norm{\G f(\bw^{(0)})} +\left(\frac{\lrs}{2}+ \frac{\alpha \selclients}{\numclients} \right) C^{(0)} + \frac{\alpha \selclients}{\numclients} \frac{\sigma^2}{\tau} \nn \\
    & \hspace{10pt} + \delta \Eg{\xi^{(0)},\set^{(0)}}{\norm{\bv^{(0)}-\bar{\bh}^{(0)}}}+ \alpha\left(1-\frac{\selclients}{\numclients}\right)\frac{1}{\numclients}\sumin \norm{\G f_i(\bw^{(0)})} \nn \\
    & \leq f(\bw^{(0)}) -\frac{\lrs}{4}\norm{\G f(\bw^{(0)})} + 4 \lrs \lrc^2 L^2 (\tau-1) \lb \sigma^2 + 4 \tau \sigma_g^2 \rb + \frac{8 \lrs^2 L \sigma^2}{\selclients \tau} \nn \\
    & \hspace{10pt} + \delta \Eg{\xi^{(0)},\set^{(0)}}{\norm{\bv^{(0)}-\bar{\bh}^{(0)}}}+ \alpha\left(1-\frac{\selclients}{\numclients}\right)\frac{1}{\numclients}\sumin \norm{\G f_i(\bw^{(0)})}. \label{eq_proof:thm:\algoname_4}
\end{align}
Substituting the bound on $R^{(1)}$ from \eqref{eq_proof:thm:\algoname_4} in \eqref{eq_proof:thm:\algoname_3}, and using $t = T$ we get
\begin{align*}
    R^{(T)} \leq & f(\bw^{(0)}) - \sum_{t=0}^{(t-1)}\left(\frac{\lrs}{4}\norm{\G f(\bwt)} + 6 \lrs \lrc^2 L^2 (\tau-1) \lb \sigma^2 + 4 \tau \sigma_g^2 \rb + \frac{10 \lrs^2 L \sigma^2}{\selclients \tau} \right)\\
    & \hspace{10pt} + 2\lrs^2 L \Eg{\xi^{(0)},\set^{(0)}}{\norm{\bv^{(0)}-\bar{\bh}^{(0)}}}+ \frac{8 \numclients \lrs^2 L}{\selclients^2} \left( 1-\frac{\selclients}{\numclients}\right)\frac{1}{\numclients}\sumin \norm{\G f(\bw^{(0)})} 
\end{align*}
Rearranging the terms, we get
\begin{align*}
    \min_{t \in [0,T-1]} \mbe \norm{\G f(\bwt)} & \leq \frac{1}{T} \sumtT \mbe \norm{\G f(\bwt)} \nn \\
    &\leq \frac{4(f(\bw^{(0)}) - f^{*})}{\lrs T} + \frac{40 \lrs L \sigma^2}{\selclients \tau} + 24\lrc^2L^2(\tau-1)\sigma^2 + 96\lrc^2L^2\tau(\tau-1)\sigma_g^2\\ 
    & \hspace{10pt} + \frac{1}{T} \lb 8 \lrs L \Eg{\xi^{(0)},\set^{(0)}}{\norm{\bv^{(0)}-\bar{\bh}^{(0)}}}+ \frac{32 \numclients \lrs L}{\selclients^2} \left( 1-\frac{\selclients}{\numclients}\right)\frac{1}{\numclients}\sumin \norm{\G f(\bw^{(0)})}  \rb \\
    & = \mathcal{O} \left( \frac{(f(\bw^{(0)})-f^*)}{\lrs T} \right) + \mathcal{O}\left(\frac{\lrs L \sigma^2}{\selclients \tau} \right) + \mathcal{O}( \lrc^2L^2(\tau-1)\sigma^2 + \lrc^2 L^2 \tau(\tau-1)\sigma_g^2)   
\end{align*}
\end{proof}

\subsection{Proofs of the Intermediate Lemmas}
\label{app:\algoname_int_results_proofs}

\begin{proof}[Proof of Lemma \ref{:algoname_f_decay_one_step}]
Using $L$-smoothness (Assumption 1) of $f$,
\begin{align}
    f(\bwtp) & \leq f(\bwt) - \lrs \lan \G f(\bwt), \bvt \ran + \frac{\lrs^2 L}{2} \norm{\bvt}. \nn
\end{align}
Taking expectation only over the randomness in the $t$-th round: due to client sampling (inherent in $\set^{(t)}$) and due to stochastic gradients (inherent in $\xi^{(t)} \triangleq \{ \xiitk \}_{i,k}$), we get
\begin{align}
    \Eg{\set^{(t)}, \xi^{(t)}}{f(\bwtp)} & \leq f(\bwt) - \lrs \Ex_{\set^{(t)}, \xi^{(t)}} \lan \G f(\bwt),\bvt \ran + \frac{\lrs^2 L}{2} \Ex_{\set^{(t)}, \xi^{(t)}} \norm{\bvt} \nn \\
    & \overset{(a)}{=} f(\bwt) - \lrs \mbe_{\xi^{(t)}} \lan \G f(\bwt), \bar{\bh}^{(t)} \ran + \frac{\lrs^2 L}{2} \Ex_{\set^{(t)}, \xi^{(t)}} \norm{\bvt} \nn \\
    & = f(\bwt) - \frac{\lrs}{2} \left[ \norm{\G f(\bwt)} + \mbe_{\xi^{(t)}} \norm{\bar{\bh}^{(t)}} - \mbe_{\xi^{(t)}} \norm{\G f(\bwt) - \bar{\bh}^{(t)}} \right]+ \frac{\lrs^2 L}{2}\Ex_{\set^{(t)}, \xi^{(t)}} \norm{\bvt}, \nn
\end{align}
where $(a)$ follows since
\begin{align*}
    \mbe_{\set^{(t)}, \xi^{(t)}} \lb \bvt \rb &= \mbe_{\set^{(t)}, \xi^{(t)}} \lb \frac{1}{\set^{(t)}} \sum_{i \in \mathcal{S}^{(t)}} \deltai \rb - \mbe_{\set^{(t)}} \lb \frac{1}{\set^{(t)}} \sum_{i \in \mathcal{S}^{(t)}} \byit \rb + \byt \\
    &= \frac{1}{\numclients} \sumin \mbe_{\xi^{(t)}} \lb \deltai \rb - \byt + \byt \tag{uniform sampling of clients} \\
    &= \frac{1}{\numclients} \sumin \mbe_{\xi^{(t)}} \lb \frac{1}{\tau} \sumkt \G f_i(\bwitk, \xiitk) \rb \\
    &= \mbe \lb \bar{\bh}^{(t)} \rb.
\end{align*}
\end{proof}

\begin{proof}[Proof of Lemma \ref{:algoname_vt_ht_err}]

\begin{align}
    & \mbe_{\set^{(t)}, \xi^{(t)}} \norm{\bvt-\bar{\bh}^{(t)}} \nn \\
    &= \mbe_{\set^{(t)}, \xi^{(t)}} \norm{\frac{1}{\selclients}\sum_{i \in \set^{(t)}} \left(\deltai - \byit + \frac{1}{\numclients} \sum_{j=1}^{\numclients} \byjt - \bar{\bh}^{(t)}\right)} \tag{Server update direction in \texttt{\algoname}} \\
    & = \mbe_{\set^{(t)}, \xi^{(t)}} \norm{\frac{1}{\selclients}\sum_{i \in \set^{(t)}}\left(\deltai - \bhit + \bhit - \byit + \frac{1}{\numclients}\sum_{j=1}^{\numclients}\byjt - \bar{\bh}^{(t)}\right)} \nn \\
    & \overset{(a)}{=} \mbe_{\set^{(t)}, \xi^{(t)}} \norm{\frac{1}{\selclients}\sum_{i \in \set^{(t)}}\left(\deltai -\bhit\right)} + \mbe_{\set^{(t)}, \xi^{(t)}} \norm{\frac{1}{\selclients}\sum_{i \in \set^{(t)}}\left(\bhit - \byit + \frac{1}{\numclients}\sum_{j=1}^{\numclients}\byjt - \bar{\bh}^{(t)}\right)} \nn \\
    & \overset{(b)}{=} \frac{1}{\selclients^2} \mbe_{\set^{(t)}, \xi^{(t)}} \lb \sum_{i \in \set^{(t)}} \norm{\deltai -\bhit} \rb + \mbe_{\set^{(t)}, \xi^{(t)}} \norm{\frac{1}{\selclients}\sum_{i \in \set^{(t)}}\left(\bhit - \byit + \frac{1}{\numclients}\sum_{j=1}^{\numclients}\byjt - \bar{\bh}^{(t)}\right)} \nn \\
    & \overset{(c)}{\leq} \frac{\sigma^2}{\tau \selclients} + \underbrace{ \mbe_{\set^{(t)}, \xi^{(t)}} \norm{\frac{1}{\selclients}\sum_{i \in \set^{(t)}}\left(\bhit - \byit + \frac{1}{\numclients}\sum_{j=1}^{\numclients}\byjt - \bar{\bh}^{(t)}\right)}}_{T_1}. \label{eq_proof::algoname_vt_ht_err_1}
\end{align}
where $(a)$ follows from the following reasoning.
\begin{align*}
    & \mbe_{\set^{(t)}, \xi^{(t)}} \lan \frac{1}{\selclients}\sum_{i \in \set^{(t)}}\left( \deltai - \bhit \right), \frac{1}{\selclients}\sum_{i \in \set^{(t)}}\left( \bhit - \byit + \frac{1}{\numclients}\sum_{j=1}^{\numclients}\byjt - \bar{\bh}^{(t)}\right) \ran \\
    &= \frac{1}{\selclients^2} \mbe_{\set^{(t)}} \lb \sum_{i \in \set^{(t)}} \mbe_{\xi^{(t)}} \lan \deltai - \bhit, \bhit - \byit + \byt - \bar{\bh}^{(t)} \ran + \sum_{i \in \set^{(t)}} \sum_{\substack{j \in \set^{(t)} \\ i \neq j}} \mbe_{\xi^{(t)}} \lan \deltai - \bhit, \bhjt - \byjt + \byt - \bar{\bh}^{(t)} \ran \rb \\
    &= \frac{1}{\selclients^2} \mbe_{\set^{(t)}} \lb \sum_{i \in \set^{(t)}} \mbe_{\xi^{(t)}} \lan \deltai - \bhit, \bhit - \byit + \byt - \bar{\bh}^{(t)} \ran \rb \tag{Assumption 2; independence of stochastic gradients across clients} \\
    &= \frac{1}{\selclients^2} \mbe_{\set^{(t)}} \lb \sum_{i \in \set^{(t)}} \mbe_{\xi^{(t)}} \lan \deltai - \bhit, \bhit - \bar{\bh}^{(t)} \ran \rb \tag{since $\{ \byit \}$ are independent of $\set^{(t)}, \xi^{(t)}$} \\
    &= \frac{1}{(\tau \selclients)^2} \mbe_{\set^{(t)}} \lb \sum_{i \in \set^{(t)}} \mbe_{\xi^{(t)}} \lan \sumkt \lp \G f_i(\bwitk, \xiitk) - \G f_i(\bwitk) \rp, \sumjt \G f_i(\bwitj) - \frac{1}{\numclients} \sum_{\ell=1}^\numclients \sumjt \G f_\ell (\bw_\ell^{(t,j)}) \ran \rb \\
    &= \frac{1}{(\tau \selclients)^2} \mbe_{\set^{(t)}} \lb \sum_{i \in \set^{(t)}} \mbe_{\xi^{(t)}} \lb \sumkt \lan \G f_i(\bwitk, \xiitk) - \G f_i(\bwitk), \G f_i(\bwitk) - \frac{1}{\numclients} \sum_{\ell=1}^\numclients \G f_\ell (\bw_\ell^{(t,k)}) \ran \rb \rb \\
    & \quad + \frac{1}{(\tau \selclients)^2} \mbe_{\set^{(t)}} \lb \sum_{i \in \set^{(t)}} \mbe_{\xi^{(t)}} \lb \sumkt \sum_{j \neq k} \lan \G f_i(\bwitk, \xiitk) - \G f_i(\bwitk), \G f_i(\bwitj) - \frac{1}{\numclients} \sum_{\ell=1}^\numclients \G f_\ell (\bw_\ell^{(t,j)}) \ran \rb \rb \\
    &= \frac{1}{(\tau \selclients)^2} \mbe_{\set^{(t)}} \lb \sum_{i \in \set^{(t)}} \mbe_{\xi^{(t)}} \lb \sumkt \lan \mbe \lb \G f_i(\bwitk, \xiitk) - \G f_i(\bwitk) \vert \bwitk \rb, \G f_i(\bwitk) - \frac{1}{\numclients} \sum_{\ell=1}^\numclients \G f_\ell (\bw_\ell^{(t,k)}) \ran \rb \rb \\
    & \quad + \frac{2}{(\tau \selclients)^2} \mbe_{\set^{(t)}} \lb \sum_{i \in \set^{(t)}} \mbe_{\xi^{(t)}} \lb \sum_{j < k} \lan \mbe \lb \G f_i(\bwitk, \xiitk) - \G f_i(\bwitk) \vert \bwitj \rb, \G f_i(\bwitj) - \frac{1}{\numclients} \sum_{\ell=1}^\numclients \G f_\ell (\bw_\ell^{(t,j)}) \ran \rb \rb \\
    &= 0.
\end{align*}
% \ps{We can even have equality instead of the inequality and removes the 2's, since $\mbe_{\xi^{(t)}} \lb \deltai \rb = \bhit$}
% Note that $\bhit$ is also random in this case i.e, $\E[\bhit] = \E[\deltai]$ so the cross-term will not necessarily be zero.

Further, $(b)$ follows since $\Eg{\xi^{(t)}} {\langle \deltai-\bhit, \deltaj-\bhjt \rangle} = 0$
for $i \neq j$. 
% and $\Eg{\xi^{(t)}}{\norm{\deltai - \bhit}} \leq \frac{\sigma^2}{\tau}$.
Finally, $(c)$ follows from the following reasoning.
\begin{align*}
    \mbe \norm{\deltai -\bhit} &= \frac{1}{\tau^2} \mbe \norm{\sumkt \lp \G f_i(\bwitk, \xiitk) - \G f_i(\bwitk) \rp} \\
    &= \frac{1}{\tau^2} \mbe \Bigg[ \sumkt \norm{\G f_i(\bwitk, \xiitk) - \G f_i(\bwitk)} \\
    & \qquad + 2 \sum_{j < k} \lan \mbe \lb \G f_i(\bwitk, \xiitk) - \G f_i(\bwitk) \vert \bwitj \rb, \G f_i(\bwitj, \xiitj) - \G f_i(\bwitj) \ran \Bigg] \\
    & \leq \frac{\sigma^2}{\tau}. \tag{Assumption 2}
\end{align*}

Next, we bound $T_1$ in \eqref{eq_proof::algoname_vt_ht_err_1}.
\begin{align}
    & \mbe_{\set^{(t)}, \xi^{(t)}}{\norm{\frac{1}{\selclients}\sum_{j \in \set^{(t)}}\left(\bhit - \byit + \byt -\bar{\bh}^{(t)}\right)}} \nn \\
    % & =\Eg{t}{\norm{\left(\bhit - \bh_i^{(t-1)}\right) - \left(\bar{\bh}^{(t)} - \bar{h}^{(t-1)}\right) +\left( \bh_i^{(t-1)} - \by_i^{(t-1)} + y^{(t-1)} -\bar{h}^{(t-1)}\right)}}\\
    & = \mbe_{\set^{(t)}, \xi^{(t)}}{\norm{\frac{1}{\selclients}\sum_{i \in \set^{(t)}}\left(\bhit - \G f_i(\bw^{(t-1)}) - (\bar{\bh}^{(t)} - \G f(\bw^{(t-1)})) +\left( \G f_i(\bw^{(t-1)}) - \byit + \byt -\G f(\bw^{(t-1)})\right)\right)}} \nn \\
    % & = 2 \Eg{t}{\norm{\left(\bhit - \bh_i^{(t-1)}\right) - \left(\bar{\bh}^{(t)} -\bar{h}^{(t-1)}\right)}}+ 2 \Eg{t}{\norm{h_i^{(t-1)} - \by_i^{(t-1)} + \frac{1}{\numclients}\sum_{j=1}^{\numclients}y_j^{(t-1)} -\bar{h}^{(t-1)}}}\\
    & \leq 2 \mbe_{\set^{(t)}, \xi^{(t)}}{\norm{\frac{1}{\selclients}\sum_{i \in \set^{(t)}}\left(\bhit -\G f_i(\bw^{(t-1)}) - \left(\bar{\bh}^{(t)} -\G f(\bw^{(t-1)})\right)\right)}} \nn \\
    & \qquad + 2 \Eg{\set^{(t)}}{\norm{\frac{1}{\selclients}\sum_{i \in \set^{(t)}}\left(\G f_i(\bw^{(t-1)})  - \byit + \byt -\G f(\bw^{(t-1)})\right)}} \nonumber\\
    & = \frac{2(\numclients - \selclients)}{\selclients (\numclients - 1) \numclients} \sumin \Eg{\xi^{(t)}}{\norm{\bhit - \G f_i(\bw^{(t-1)})-\bar{\bh}^{(t)} +\G f(\bw^{(t-1)})}} \nn \\
    & \qquad + \frac{2(\numclients - \selclients)}{\selclients (\numclients - 1) \numclients} \sumin \norm{\G f_i(\bw^{(t-1)}) - \byit + \byt -\G f(\bw^{(t-1)})} \tag{Lemma \ref{lem:sample_WR}} \\
    & \overset{(d)}{\leq} \frac{2(\numclients - \selclients)}{\selclients (\numclients - 1) \numclients} \lb \sumin \Eg{\xi^{(t)}}{\norm{\bhit - \G f_i(\bw^{(t-1)})}}
    + \sumin \norm{\G f_i(\bw^{(t-1)}) - \byit} \rb \tag{$\because \text{Var}(X) \leq \E{X^2}$} \\
    & = \frac{2(\numclients - \selclients)}{\selclients (\numclients - 1) \numclients} \lb \sumin\Eg{\xi^{(t)}}{\norm{\bhit - \G f_i(\bwt) + \G f_i(\bwt)-\G f_i(\bw^{(t-1)})}} +  \sumin \norm{\G f_i(\bw^{(t-1)}) - \byit} \rb \nn \\
    & \leq \frac{2(\numclients - \selclients)}{\selclients (\numclients - 1) \numclients} \lb 2 \sumin \Eg{\xi^{(t)}}{\norm{\bhit - \G f_i(\bwt)}} + 2 \numclients \lrs^2 L^2 \norm{\bv^{(t-1)}} + \sumin \norm{\G f_i(\bw^{(t-1)}) - \byit} \rb. \label{eq_proof::algoname_vt_ht_err_2}
\end{align}
Finally, substituting \eqref{eq_proof::algoname_vt_ht_err_2} in \eqref{eq_proof::algoname_vt_ht_err_1}, we get the result in the lemma.
\end{proof}

\begin{proof}[Proof of Lemma \ref{:algoname_grad_yt_err}]
\begin{align}
    & \Eg{\set^{(t)}, \xi^{(t)}}{\frac{1}{\numclients}\sum_{j=1}^{\numclients} \norm{\G f_j(\bwt) - \byjtp}} \nn \\
    & = \frac{1}{\numclients}\sum_{j=1}^{\numclients} \mbe_{\set^{(t)}, \xi^{(t)}} \norm{\G f_j(\bwt) - \byjtp} \nn \\
    & = \frac{1}{\numclients}\sum_{j=1}^{\numclients}\left[\frac{\selclients}{\numclients} \Eg{\xi^{(t)}}{\norm{\G f_j(\bwt) - \deltaj}} + \left(1-\frac{\selclients}{\numclients}\right)\norm{\G f_j(\bwt) - \byjt}\right] \label{:algoname_grad_yt_err_1} \\
    & = \frac{1}{\numclients}\sum_{j=1}^{\numclients}\left[\frac{\selclients}{\numclients} \Eg{\xi^{(t)}}{\norm{\G f_j(\bwt) - \deltaj}} + \left(1-\frac{\selclients}{\numclients}\right)\norm{\G f_j(\bwt) - \G f_j(\bw^{(t-1)}) + \G f_j(\bw^{(t-1)}) - \byjt}\right] \nonumber \\
    & \leq \frac{1}{\numclients}\sum_{j=1}^{\numclients} \left[ \frac{\selclients \sigma^2}{\numclients \tau} + \frac{\selclients}{\numclients} \mbe_{\xi^{(t)}} \norm{\G f_j(\bwt) - \bhjt} + \left(1-\frac{\selclients}{\numclients}\right)\norm{\G f_j(\bwt) - \G f_j(\bw^{(t-1)}) + \G f_j(\bw^{(t-1)}) - \byjt }\right] \nonumber \\
    & \leq \frac{\selclients}{\numclients} \left[ \frac{\sigma^2}{\tau} + \frac{1}{\numclients} \sumjn \mbe_{\xi^{(t)}} \norm{\G f_j(\bwt) - \bhjt} \right] \nn \\
    & + \left(1-\frac{\selclients}{\numclients}\right) \lb \left(1+\frac{1}{\beta}\right) \lrs^2 L^2\norm{\bv^{(t-1)}} + (1+\beta) \frac{1}{\numclients} \sum_{j=1}^{\numclients} \norm{{\G f_j(\bw^{(t-1)}) - \byjt}} \rb. \tag{$\beta$ is a positive constant}
\end{align}
\end{proof}

% \newpage
\section{Convergence Result for \texttt{\clusteralgoname} (Theorem 3)}
\label{app:clusteralgoname}

In this section we prove the convergence result for \texttt{\clusteralgoname} in Theorem 3, and provide the complexity and communication guarantees.

We organize this section as follows. First, in \ref{app:clusteralgoname_int_results} we present some intermediate results, which we use to prove the main theorem. Next, in \ref{app:clusteralgoname_thm_proof}, we present the proof of Theorem 3, which is followed by the proofs of the intermediate results in \ref{app:clusteralgoname_int_results_proofs}.

\begin{algorithm}
\caption{\texttt{\clusteralgoname} }
\label{clusteralgoname}
\begin{algorithmic}[1]
\State \textbf{Input:} initial model $\bw^{(0)}$, server learning rate $\lrss$, client learning rate $\eta$, local SGD steps $\tau$, $\lrs = \lrss \lrc \tau $, number of rounds $T$, number of clusters $K$, initial cluster states $\by_k^{(0)} = \mathbf{0}$ for all $k \in [K]$, cluster identities $c_i \in [K]$ for all $i \in [\numclients]$, cluster sets $\mathcal{C}_k = \{ i: c_i = k\} \text{ for all } k \in [K]$

\For {$t = 1,2,\dots, T$}
\State Sample $\set^{(t)} \subseteq [\numclients]$ uniformly without replacement
\For{$i \in \set^{(t)}$}
\State $\deltai \gets \texttt{LocalSGD}(i,\bw^{(t)},\tau,\eta)$
\EndFor
\State // At Server:
\State {\small$\bvt = \frac{1}{|\set^{(t)}|}\sum_{i \in \set^{(t)}}\brac{\deltai - \by_{c_i}^{(t)}} + \frac{1}{\numclients}\sum_{j=1}^\numclients \by_{c_j}^{(t)}$}
\State $\bw^{(t+1)} = \bw^{(t)} - \lrs \bvt$
\State //State update
\For {$k \in [\numclusters]$}
\State $\by_k^{(t+1)} = 
    \begin{cases}
        \dfrac{\sum_{ i \in \set^{(t)} \cap \mathcal{C}_k} \deltai}{|\set^{(t)} \cap \mathcal{C}_k|} & \text{ if } |\set^{(t)} \cap \mathcal{C}_k| \neq 0\\
        \by_k^{(t)} & \text{ otherwise}
    \end{cases}$
\EndFor
\EndFor

\Procedure{\texttt{LocalSGD}}{$i,\bw^{(t)},\tau, \eta$}
  \State Set $\bw_i^{(t,0)} = \bw^{(t)}$
  \For{$k = 0,1\dots,\tau-1$}
  \State Compute stochastic gradient $\nabla f_i(\bw_i^{(t,k)},\xi_i^{(t,k)})$
  \State $\bw_i^{(t,k+1)} = \bw_i^{(t,k)} - \lrc \nabla f_i(\bw_i^{(t,k)}, \xi_i^{(t,k)})$
  \EndFor
  \State Return $(\bw^{(t)} - \bw_i^{(t,\tau)})/\lrc \tau$
\EndProcedure
\end{algorithmic}
\end{algorithm}

\subsection{Intermediate Lemmas} \label{app:clusteralgoname_int_results}
The proof of \texttt{\clusteralgoname} follows closely the proof of \texttt{\algoname}. We borrow Lemma \ref{:algoname_f_decay_one_step} from Section \ref{app:\algoname}, and the next lemma is analogous to Lemma \ref{:algoname_vt_ht_err}.

\begin{lem}
\label{lem:clusteralgoname_vt_ht_err}
Suppose the function $f$ satisfies Assumption 1, and the stochastic oracles at the clients satisfy Assumptions 2.
Then, the iterates $\{ \bwt \}_t$ generated by \texttt{\algoname} satisfy
\begin{align*}
    \mbe_{\set^{(t)}, \xi^{(t)}} \norm{\bvt-\bar{\bh}^{(t)}} & \leq \frac{\sigma^2}{\selclients \tau} + \frac{4 (\numclients - \selclients)}{\selclients (\numclients - 1)} \lb \frac{1}{\numclients}\sumin\Eg{\xi^{(t)}}{\norm{\bhit - \G f_i(\bwt)}} + \lrs^2 L^2 \norm{\bv^{(t-1)}} \rb \nn \\
    & \quad + \frac{2 (\numclients - \selclients)}{\selclients (\numclients - 1)} \frac{1}{\numclients}\sumin \norm{\G f_i(\bw^{(t-1)}) - \by_{c_i}^{(t)}}.
\end{align*}
\end{lem}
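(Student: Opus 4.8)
The argument is a line-by-line analogue of the proof of Lemma~\ref{:algoname_vt_ht_err} for \texttt{\algoname}, with every occurrence of the per-client state $\byit$ replaced by the shared cluster state $\by_{c_i}^{(t)}$. Two structural facts make this substitution cost-free: (i) each cluster state $\by_k^{(t)}$ is a deterministic function of the randomness in rounds $0,\dots,t-1$ only, hence independent of $\{\set^{(t)},\xi^{(t)}\}$; and (ii) the client-average of the cluster states is exactly the ``population mean'' appearing in the \texttt{\algoname} proof, since $\frac{1}{\numclients}\sum_{j=1}^{\numclients}\big(\G f_j(\bw^{(t-1)})-\by_{c_j}^{(t)}\big)=\G f(\bw^{(t-1)})-\frac{1}{\numclients}\sum_{j=1}^{\numclients}\by_{c_j}^{(t)}$ (the index $c_j$ ranges over all clients as $j$ does). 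Together with $\Eg{\set^{(t)}}{\frac{1}{\selclients}\sum_{i\in\set^{(t)}}\by_{c_i}^{(t)}}=\frac{1}{\numclients}\sum_{j}\by_{c_j}^{(t)}$ (established in the main text), $\bvt$ stays conditionally unbiased for $\bar{\bh}^{(t)}$.

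Concretely I would proceed in three steps. \emph{Step 1 (peel off the stochastic-gradient noise):} write $\bvt-\bar{\bh}^{(t)}=\frac{1}{\selclients}\sum_{i\in\set^{(t)}}\big(\deltai-\by_{c_i}^{(t)}+\tfrac{1}{\numclients}\sum_j\by_{c_j}^{(t)}-\bar{\bh}^{(t)}\big)$, insert $\pm\bhit$ inside the sum, and expand the squared norm; the cross term between $\frac{1}{\selclients}\sum_{i\in\set^{(t)}}(\deltai-\bhit)$ and the rest vanishes because $\mbe_{\xi^{(t)}}[\deltai-\bhit]=\mathbf{0}$ (tower property over the $\tau$ local steps, Assumption~2), stochastic gradients are independent across clients, and the cluster states carry no round-$t$ randomness --- exactly the cancellation in Lemma~\ref{:algoname_vt_ht_err}. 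The noise term contributes $\sigma^2/(\selclients\tau)$ via $\mbe\norm{\deltai-\bhit}\le\sigma^2/\tau$. \emph{Step 2 (without-replacement sampling):} for the remainder $T_1:=\mbe\norm{\frac{1}{\selclients}\sum_{i\in\set^{(t)}}\big(\bhit-\by_{c_i}^{(t)}+\tfrac{1}{\numclients}\sum_j\by_{c_j}^{(t)}-\bar{\bh}^{(t)}\big)}$, insert $\pm\G f_i(\bw^{(t-1)})$ and $\pm\G f(\bw^{(t-1)})$ and use $\norm{a+b}\le 2\norm{a}+2\norm{b}$ to split $T_1$ into a ``gradient-drift'' piece centered at $\bhit-\G f_i(\bw^{(t-1)})$ and a ``state-staleness'' piece centered at $\G f_i(\bw^{(t-1)})-\by_{c_i}^{(t)}$; each is a sampled-mean-minus-population-mean, so Lemma~\ref{lem:sample_WR} extracts the factor $\tfrac{1}{\selclients}\tfrac{\numclients-\selclients}{\numclients-1}$, and dropping the nonpositive centering correction (variance $\le$ second moment) removes the subtracted means. \emph{Step 3 (re-anchor at $\bwt$):} decompose $\bhit-\G f_i(\bw^{(t-1)})=(\bhit-\G f_i(\bwt))+(\G f_i(\bwt)-\G f_i(\bw^{(t-1)}))$, bound the second term by $L^2\norm{\bwt-\bw^{(t-1)}}=\lrs^2L^2\norm{\bv^{(t-1)}}$ using $L$-smoothness and the server update $\bwt-\bw^{(t-1)}=-\lrs\bv^{(t-1)}$, and collect constants to obtain the three groups in the statement plus the residual $\sigma^2/(\selclients\tau)$ from Step~1.

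I do not expect a genuinely new obstacle relative to \texttt{\algoname}: the clustering enters only through \emph{which} stored vector is attached to each sampled client, and since cluster-state updates depend only on past rounds, the independence and conditional-unbiasedness structure is untouched. The one point warranting care is re-checking that $\frac{1}{\numclients}\sum_j\by_{c_j}^{(t)}$ is exactly the client-average of the $\{\by_{c_i}^{(t)}\}_{i\in[\numclients]}$ so that Lemma~\ref{lem:sample_WR} applies verbatim in Step~2; the longest (but purely mechanical) part is transcribing the zero-cross-term computation of Step~1 with cluster states in place. Note that the actual benefit of clustering --- replacing $\frac{1}{\numclients}\sum_i\norm{\G f_i(\bw^{(t-1)})-\by_{c_i}^{(t)}}$ by a quantity governed by the within-cluster variance $\sigma_K^2$ of Assumption~\ref{cluster_var_assump} --- is not done here; it is deferred to a separate staleness lemma inside the proof of Theorem~\ref{thm:clusteralgoname}, for which one also tracks $\bv^{(t-1)}$ (rather than bounding $\norm{\bwt-\bw^{(t-1)}}$ by gradients directly) so that it can be absorbed into a Lyapunov potential.
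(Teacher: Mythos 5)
Your proposal is correct and follows essentially the same route as the paper: the paper's own proof of this lemma is literally "analogous to the proof of Lemma~\ref{:algoname_vt_ht_err}" with $\byit$ replaced by $\by_{c_i}^{(t)}$, and your three steps (cross-term cancellation via the martingale/unbiasedness structure, Lemma~\ref{lem:sample_WR} for without-replacement sampling applied to $\bx_i = \G f_i(\bw^{(t-1)}) - \by_{c_i}^{(t)}$, and re-anchoring at $\bwt$ via smoothness and the server update) reproduce that argument exactly. Your two structural observations — that cluster states carry no round-$t$ randomness and that $\frac{1}{\numclients}\sum_j \by_{c_j}^{(t)}$ is precisely the population mean needed for the sampling lemma — are exactly what makes the substitution cost-free, and you correctly note that the $\sigma_K^2$ gain appears only in the separate staleness lemma.
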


\begin{lem}
\label{lem:clusteralgoname_grad_yt_err}
Suppose the function $f$ satisfies Assumption 1, and the stochastic oracles at the clients satisfy Assumptions 2, 3.
Then, the iterates $\{ \bwt \}_t$ generated by \texttt{\algoname} satisfy
\begin{align*}
    & \Eg{\xi^{(t)}, \set^{(t)}}{\frac{1}{\numclients}\sumjn \norm{\nabla f_j(\bw^{(t)}) - \by_{c_j}^{(t+1)}}} \leq 4 (1-p) \left[ \sigma_K^2 + \frac{\sigma^2}{\tau} + \frac{1}{\numclients}\sumjn \mbe_t \norm{\nabla f_j(\bw^{(t)}) - \bh_j^{(t)}} \right]\\
    & \qquad \qquad + p \lb \left( 1 + \frac{1}{\beta} \right) \lrs^2 L^2 \norm{\bv^{(t-1)}} + (1+\beta) \frac{1}{\numclients} \sum_{j=1}^{\numclients} \norm{{\nabla f_j(\bw^{(t-1)}) - \by_{c_j}^{(t)}}} \rb. \nonumber
\end{align*}
for any positive scalar $\beta$.
Note that keeping $r=1$ (which implies $\sigma_K^2 = 0$) we recover our earlier result in Lemma \ref{:algoname_grad_yt_err} (upto multiplicative constants).
\end{lem}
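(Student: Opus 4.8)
The plan is to follow the proof of Lemma~\ref{:algoname_grad_yt_err} but to partition the clients by cluster and use Assumption~\ref{cluster_var_assump} to absorb the intra-cluster error into $\sigma_K^2$. Throughout, condition on the history up to and including round $t-1$, so that $\bwt$, $\bw^{(t-1)}$, $\bv^{(t-1)}$ and every cluster state $\by_k^{(t)}$ are deterministic. Write $S_k \triangleq \set^{(t)} \cap \Cc_k$ and, on $\{S_k \neq \emptyset\}$, $\hat{\by}_k \triangleq \frac{1}{|S_k|}\sum_{i \in S_k}\deltai$, so that the cluster-state update gives $\by_k^{(t+1)} = \hat{\by}_k$ on $\{S_k \neq \emptyset\}$ and $\by_k^{(t+1)} = \by_k^{(t)}$ otherwise. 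Since every cluster has exactly $r$ clients and $\numclients = r\numclusters$, uniform sampling without replacement gives $\Pr[S_k = \emptyset] = p = \binom{\numclients - r}{\selclients}/\binom{\numclients}{\selclients}$ for each $k$. Grouping $\sum_j$ by cluster and conditioning on whether each cluster gets refreshed,
\begin{align*}
    \frac{1}{\numclients}\sumjn \mbe\,\norm{\G f_j(\bwt) - \by_{c_j}^{(t+1)}}
    &= \underbrace{\frac{p}{\numclients}\sum_{k=1}^{\numclusters}\sum_{j \in \Cc_k}\norm{\G f_j(\bwt) - \by_k^{(t)}}}_{\text{stale clusters}} \\
    &\quad + \underbrace{\frac{1-p}{\numclients}\sum_{k=1}^{\numclusters}\mbe\!\left[\,\sum_{j \in \Cc_k}\norm{\G f_j(\bwt) - \hat{\by}_k}\, \mid \, S_k \neq \emptyset\,\right]}_{\text{refreshed clusters}},
\end{align*}
and it remains to bound the two pieces.

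For the stale piece I would insert $\G f_j(\bw^{(t-1)})$, apply Young's inequality with parameter $\beta$, and use $L$-smoothness together with the server update $\bwt = \bw^{(t-1)} - \lrs \bv^{(t-1)}$ to get $\norm{\G f_j(\bwt) - \G f_j(\bw^{(t-1)})} \le L^2 \norm{\bwt - \bw^{(t-1)}} = \lrs^2 L^2 \norm{\bv^{(t-1)}}$. Averaging over $j \in \Cc_k$ and over $k$, and using $\by_k^{(t)} = \by_{c_j}^{(t)}$ for $j \in \Cc_k$, reproduces exactly the $p\big[(1 + 1/\beta)\lrs^2 L^2 \norm{\bv^{(t-1)}} + (1+\beta)\frac{1}{\numclients}\sumjn \norm{\G f_j(\bw^{(t-1)}) - \by_{c_j}^{(t)}}\big]$ term of the statement.

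The refreshed piece is the crux. Set $\bar{\bg}_k \triangleq \frac{1}{r}\sum_{i \in \Cc_k}\G f_i(\bwt)$; since $\sum_{j \in \Cc_k}(\G f_j(\bwt) - \bar{\bg}_k) = \mathbf{0}$, the bias--variance identity gives $\sum_{j \in \Cc_k}\norm{\G f_j(\bwt) - \hat{\by}_k} = \sum_{j \in \Cc_k}\norm{\G f_j(\bwt) - \bar{\bg}_k} + r\norm{\bar{\bg}_k - \hat{\by}_k}$, and the first summand is $\le r\sigma_K^2$ by Assumption~\ref{cluster_var_assump}. For $\norm{\bar{\bg}_k - \hat{\by}_k}$ I would split $\bar{\bg}_k - \hat{\by}_k$ into the within-cluster sampling error $\bar{\bg}_k - \frac{1}{|S_k|}\sum_{i \in S_k}\G f_i(\bwt)$, the local-step bias $\frac{1}{|S_k|}\sum_{i \in S_k}(\G f_i(\bwt) - \bhit)$, and the stochastic-gradient noise $\frac{1}{|S_k|}\sum_{i \in S_k}(\bhit - \deltai)$, and bound the squared norm of the sum by $3$ times the sum of the squared norms: the first term is $\le \frac{1}{|S_k|}\sum_{i \in S_k}\norm{\G f_i(\bwt) - \bar{\bg}_k} \le \sigma_K^2$ (Jensen and Assumption~\ref{cluster_var_assump}), the second is $\le \frac{1}{|S_k|}\sum_{i \in S_k}\norm{\G f_i(\bwt) - \bhit}$ (Jensen), and the third has $\xi^{(t)}$-conditional expectation $\le \frac{\sigma^2}{|S_k|\tau} \le \frac{\sigma^2}{\tau}$ (cross-client independence and the per-client bound $\mbe_{\xi^{(t)}}\norm{\deltai - \bhit} \le \sigma^2/\tau$ already used in the \texttt{FedAvg}/\texttt{\algoname} proofs). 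The step that makes everything collapse is that, conditioned on $|S_k| = m \ge 1$, $S_k$ is a uniformly random $m$-subset of $\Cc_k$, independent of the within-round stochastic-gradient randomness, so the conditional expectation of an average over $S_k$ of any per-client quantity equals its full average over $\Cc_k$; summing these per-cluster averages over $k$ and using $\sum_k r = \numclients$ turns them into the global averages appearing in the bound. The heterogeneity contributions then add up to $(1+3)(1-p)\sigma_K^2 = 4(1-p)\sigma_K^2$, and loosening the constant $3$ to $4$ on the other two pieces gives the claimed $4(1-p)\big[\sigma_K^2 + \frac{\sigma^2}{\tau} + \frac{1}{\numclients}\sumjn \mbe_t\,\norm{\G f_j(\bwt) - \bhjt}\big]$; with $r = 1$ this forces $\sigma_K^2 = 0$ and $p = 1 - \selclients/\numclients$, recovering Lemma~\ref{:algoname_grad_yt_err} up to constants.

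I expect the main obstacle to be the nested conditioning in the refreshed piece: one must condition first on $\set^{(t)}$ (to know which clusters are nonempty and their sizes) and then on the within-round SGD filtration (so the noise terms are controllable and $\mbe_{\xi^{(t)}}\norm{\deltai - \bhit} \le \sigma^2/\tau$ applies), while justifying the reduction that an average over $S_k$ equals an average over $\Cc_k$ in conditional expectation --- i.e., that $\G f_i(\bwt)$, $\bhit$ and the noise second moments are exchangeable across $i \in \Cc_k$ and, given $|S_k|$, independent of which clients actually form $S_k$. Once that is set up, the rest is the same sum-of-squares and Young's-inequality bookkeeping already carried out for \texttt{\algoname}.
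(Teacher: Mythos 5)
Your proposal is correct and follows essentially the same route as the paper's proof: split each cluster's contribution according to whether it is refreshed in round $t$, apply Young's inequality with parameter $\beta$ together with $L$-smoothness and the server update to the stale clusters, and reduce the refreshed clusters to Assumption 4 plus the per-client bound $\mbe_{\xi^{(t)}}\norm{\deltai-\bhit}\le\sigma^2/\tau$ via the fact that, in expectation, an average over the sampled subset $\set^{(t)}\cap\Cc_k$ equals the average over all of $\Cc_k$. The only differences are cosmetic: you justify that last fact by the conditional uniformity of $\set^{(t)}\cap\Cc_k$ given its size, whereas the paper evaluates the hypergeometric sum $\sum_{z}\tfrac{r}{z}\binom{r-1}{z-1}\binom{\numclients-r}{\selclients-z}/\binom{\numclients}{\selclients}=1-p$ explicitly, and you organize the refreshed-cluster algebra as a bias--variance identity around the cluster-mean gradient $\bar{\bg}_k$ rather than the paper's pairwise Jensen/Young split, which if anything yields slightly cleaner constants.
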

We also use the bound on $\frac{1}{\numclients} \sumjn \mbe_{\xi^{(t)}} \norm{\G f_j(\bwt) - \bhjt}$ from Lemma \ref{lem:FedAvg_grad_ht_err} in the previous section.

\subsection{Proof of Theorem 3}
\label{app:clusteralgoname_thm_proof}
For the sake of completeness, first we state the complete theorem statement.

\begin{thm*}
Suppose the function $f$ satisfies Assumption 1, and the individual client functions satisfy Assumptions 2, 4. Further, the client learning rate $\lrc$, and the server learning rate $\lrss$ are chosen such that $\lrc \leq \frac{1}{10 L \tau}$, $\lrss \lrc \leq \min \lcb \frac{\sqrt{\selclients} (1-p)}{8 L \tau}, \frac{\selclients}{16 \tau L}, \frac{1}{4 L \tau} \rcb$.
Then, the iterates $\{ \bwt \}_t$ generated by \texttt{\clusteralgoname} satisfy
\begin{align*}
    \min_{t \in [T]} \mbe \norm{\G f(\bwt)} & \leq \underbrace{\mco \lp \frac{f(\bw^{(0)}) - f^*}{\lrss \lrc \tau T} \rp}_{\text{Effect of initialization}} + \underbrace{\mco \lp \frac{\lrss \lrc L \sigma^2}{\selclients} + \lrc^2 L^2 (\tau - 1) \sigma^2 \rp}_{\text{Stochastic Gradient Error}} \\
    & \qquad + \underbrace{\mathcal{O}\left(\frac{\lrss \lrc \tau L \sigma_K^2}{\selclients} \frac{(\numclients - \selclients)}{(\numclients - 1)} \right)}_{\text{ Error due clustering}} + \underbrace{\mco \lp \lrc^2 L^2 \tau (\tau - 1) \sigma_g^2 \rp}_{\text{Client Drift Error}}
\end{align*}
where $f^* = \argmin_\mathbf{x} f(\mathbf{x})$.
\end{thm*}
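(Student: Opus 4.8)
The plan is to mirror the convergence argument for \texttt{\algoname} (Theorem 2), replacing the per-client staleness bookkeeping by per-cluster bookkeeping. Three ingredients are needed. First, the one-step descent bound of Lemma~\ref{:algoname_f_decay_one_step} applies verbatim: it only uses $L$-smoothness of $f$ together with the unbiasedness $\mbe_{\set^{(t)},\xi^{(t)}}[\bvt] = \bar{\bh}^{(t)}$, which still holds because $\mbe_{\set^{(t)}}[\tfrac{1}{\selclients}\sum_{i\in\set^{(t)}}\by_{c_i}^{(t)}] = \tfrac{1}{\numclients}\sumin \by_{c_i}^{(t)}$. Second, Lemma~\ref{lem:clusteralgoname_vt_ht_err} bounds $\mbe\norm{\bvt-\bar{\bh}^{(t)}}$ by the stochastic-gradient term $\sigma^2/(\selclients\tau)$, a term proportional to $\tfrac{(\numclients-\selclients)}{\selclients(\numclients-1)}$ times the cluster-state error $\tfrac{1}{\numclients}\sumin\norm{\G f_i(\bw^{(t-1)}) - \by_{c_i}^{(t)}}$, plus a feedback term proportional to $\lrs^2 L^2\norm{\bv^{(t-1)}}$. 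Third, Lemma~\ref{lem:clusteralgoname_grad_yt_err} provides a recursion for the cluster-state error $\tfrac{1}{\numclients}\sumjn\norm{\G f_j(\bwt)-\by_{c_j}^{(t+1)}}$ with two features new relative to \texttt{\algoname}: a constant forcing term $4(1-p)\sigma_K^2$, and a "refresh/contraction" probability governed by $p = \binom{\numclients-r}{\selclients}/\binom{\numclients}{\selclients}$ (the probability that no client of a given cluster is sampled) in place of $1-\selclients/\numclients$.

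First I would define the Lyapunov function $R^{(t+1)} = \mbe\big[f(\bwtp) + (\delta - \tfrac{\lrs^2 L}{2})\norm{\bvt} + \alpha\,\tfrac{1}{\numclients}\sumjn\norm{\G f_j(\bwt) - \by_{c_j}^{(t+1)}}\big]$, exactly as in the \texttt{\algoname} proof but with cluster states. Substituting Lemmas~\ref{:algoname_f_decay_one_step}, \ref{lem:clusteralgoname_vt_ht_err} and \ref{lem:clusteralgoname_grad_yt_err} and collecting terms, $R^{(t+1)}$ is bounded by $R^{(t)} - \tfrac{\lrs}{2}\norm{\G f(\bwt)}$ plus: a coefficient times $C^{(t)} \triangleq \tfrac{1}{\numclients}\sumjn\mbe_{\xi^{(t)}}\norm{\G f_j(\bwt)-\bhjt}$, a coefficient times $\sigma^2/\tau$, a coefficient times $\norm{\bv^{(t-1)}}$, a coefficient times the previous cluster-state error, and — new — a coefficient times $\sigma_K^2$. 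I would then choose $\beta$ so that $p(1+\beta)$ is bounded away from $1$ (feasible whenever $1-p$ is not too small), set $\alpha \sim \tfrac{\delta}{1-p(1+\beta)}\cdot\tfrac{(\numclients-\selclients)}{\selclients(\numclients-1)}$ so the cluster-state-error coefficient is at most $\alpha$, and $\delta = \Theta(\lrs^2 L)$ so the $\norm{\bv^{(t-1)}}$ coefficient is at most $\delta - \tfrac{\lrs^2 L}{2}$; the sufficient conditions for these two inequalities translate into $\lrss\lrc \le \min\{\sqrt{\selclients}(1-p)/(8L\tau),\ \selclients/(16\tau L),\ 1/(4L\tau)\}$. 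The appearance of the factor $(1-p)$ here (rather than $\selclients/\numclients$ as in \texttt{\algoname}) is precisely because a cluster state refreshes whenever \emph{any} clustermate participates, which both relaxes the learning-rate ceiling and drives the contraction of the state recursion.

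With the coefficients chosen so that the $\norm{\bv^{(t-1)}}$ and cluster-state-error terms telescope against $R^{(t)}$, I would unroll from $t=1$ to $T$, bound $R^{(1)}$ using the zero initialization $\by_k^{(0)}=\mathbf 0$ (so the initial state error is $\tfrac{1}{\numclients}\sumin\norm{\G f_i(\bw^{(0)})}$, contributing only an $\bigO{1/T}$ term), substitute $C^{(t)}$ via Lemma~\ref{lem:FedAvg_grad_ht_err} (which re-introduces the $\lrc^2 L^2(\tau-1)\sigma^2$ and $\lrc^2 L^2\tau(\tau-1)\sigma_g^2$ terms and a $\norm{\G f(\bwt)}$ term absorbed by the leading negative term once $\lrc\le 1/(10L\tau)$), and rearrange. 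After being weighted by $\alpha \sim \lrs^2 L\,\tfrac{(\numclients-\selclients)}{\selclients^2(\numclients-1)}$ and then divided through by a factor $\Theta(\lrs T)$ in the final rearrangement, the $\sigma_K^2$ forcing term from Lemma~\ref{lem:clusteralgoname_grad_yt_err} yields exactly the claimed cluster-heterogeneity term $\bigO{\tfrac{\lrss\lrc\tau L\sigma_K^2}{\selclients}\tfrac{(\numclients-\selclients)}{(\numclients-1)}}$, while the remaining terms reproduce the initialization, stochastic-gradient, and client-drift terms.

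The main obstacle is not the Lyapunov bookkeeping — which is essentially the \texttt{\algoname} computation with $\selclients/\numclients \mapsto 1-p$ and an extra additive $\sigma_K^2$ carried along — but establishing Lemmas~\ref{lem:clusteralgoname_vt_ht_err} and \ref{lem:clusteralgoname_grad_yt_err} themselves. Unlike \texttt{\algoname}, where each state is refreshed with probability exactly $\selclients/\numclients$, here the event $\{\set^{(t)}\cap\Cc_k\neq\emptyset\}$ and the conditional cluster average $\tfrac{1}{|\set^{(t)}\cap\Cc_k|}\sum_{i\in\set^{(t)}\cap\Cc_k}\deltai$ both depend intricately on the without-replacement sampling, and the within-cluster heterogeneity $\sigma_K$ enters precisely through the gap between this conditional cluster average and the individual gradients $\G f_i$. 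Getting the clean contraction factor $p$ and the $4(1-p)$ prefactor on $\sigma_K^2 + \sigma^2/\tau + C^{(t)}$ in Lemma~\ref{lem:clusteralgoname_grad_yt_err}, and the correct $\tfrac{(\numclients-\selclients)}{\selclients(\numclients-1)}$ prefactor on the cluster-state error in Lemma~\ref{lem:clusteralgoname_vt_ht_err}, is the delicate combinatorial part; once those hold, the rest follows the \texttt{\algoname} template.
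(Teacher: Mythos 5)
Your proposal follows essentially the same route as the paper: the identical Lyapunov function, the borrowed one-step descent lemma (justified by the unbiasedness of the cluster-state correction), the two cluster-specific lemmas with contraction factor $p$ and forcing term $(1-p)\sigma_K^2$, the choice $\beta$ with $p(1+\beta)$ bounded away from $1$, $\delta=\Theta(\lrs^2 L)$, $\alpha \sim \tfrac{\delta}{\selclients(1-p(1+\beta))}\tfrac{(\numclients-\selclients)}{(\numclients-1)}$, and the same unrolling and $R^{(1)}$ bound. The only blemish is the later restatement $\alpha\sim \lrs^2 L\tfrac{(\numclients-\selclients)}{\selclients^2(\numclients-1)}$, which carries over the \texttt{\algoname} scaling rather than the $\tfrac{\lrs^2 L}{\selclients(1-p)}\tfrac{(\numclients-\selclients)}{(\numclients-1)}$ value implied by your own (correct) earlier choice, but this does not affect the argument.
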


\begin{coro}
% \label{corro_1}
Setting $\lrc = \frac{1}{\sqrt{T}\tau L}$ and $\lrss = \sqrt{\tau \selclients}$, \texttt{\clusteralgoname} converges to a stationary point of the global objective $f(\bw)$ at a rate given by,
\begin{align*}
    &\min_{t \in \{0, \hdots, T-1 \}} \mbe \norm{\nabla  f(\bw^{(t)})} \leq \underbrace{\bigO{\frac{1}{\sqrt{\selclients \tau T}}}}_{\text{stochastic gradient error}} + \underbrace{\bigO{\frac{(\numclients - \selclients)}{(\numclients - 1)} \sqrt{\frac{\tau}{\selclients T}}}}_{\text{partial participation error}}+ \underbrace{\bigO{\frac{1}{T}}}_{\text{client drift error}}    
\end{align*}
\end{coro}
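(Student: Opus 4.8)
The plan is to re-run the Lyapunov-function argument used in the proof of \Cref{thm:\algoname}, with the per-client state error replaced by the cluster-state error. Concretely, I would define the potential
\begin{align*}
    R^{(t+1)} \triangleq \mathbb{E}\!\left[ f(\bwtp) + \left(\delta - \tfrac{\lrs^2 L}{2}\right)\norm{\bvt} + \alpha\,\tfrac{1}{\numclients}\sumjn \norm{\G f_j(\bwt) - \by_{c_j}^{(t+1)}} \right],
\end{align*}
for constants $\alpha>0$ and $\tfrac{\lrs^2 L}{2}\le \delta \le \tfrac{\lrs}{2}$ (so that $\lrs = \lrss\lrc\tau \le 1/L$ is needed), to be fixed later. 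Lemma \ref{:algoname_f_decay_one_step} applies verbatim here: its only use of the algorithm is the unbiasedness $\Eg{\set^{(t)},\xi^{(t)}}{\bvt} = \bar{\bh}^{(t)}$, which still holds for \texttt{\clusteralgoname} since $\Eg{\set^{(t)}}{\tfrac{1}{|\set^{(t)}|}\sum_{i\in\set^{(t)}}\by_{c_i}^{(t)}} = \tfrac{1}{\numclients}\sumjn\by_{c_j}^{(t)}$. So the first step is to substitute Lemma \ref{:algoname_f_decay_one_step} into the definition of $R^{(t+1)}$, then plug in the \texttt{\clusteralgoname}-specific bounds of Lemma \ref{lem:clusteralgoname_vt_ht_err} (for $\norm{\bvt-\bar{\bh}^{(t)}}$) and Lemma \ref{lem:clusteralgoname_grad_yt_err} (for the cluster-state error at round $t+1$), and control $C^{(t)} \triangleq \tfrac{1}{\numclients}\sumjn\Eg{\xi^{(t)}}{\norm{\G f_j(\bwt)-\bhjt}}$ via Lemma \ref{lem:FedAvg_grad_ht_err}.

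The second step is to collect coefficients and choose $\alpha,\delta,\beta$. The structural difference from \Cref{thm:\algoname} is that the ``fresh-state'' weight is now $1-p$ and the ``stale-state'' weight is $p = \binom{\numclients-r}{\selclients}/\binom{\numclients}{\selclients}$ (the probability that no client of a fixed cluster is sampled), replacing $\selclients/\numclients$ and $1-\selclients/\numclients$. To make the recursion telescope I would take $\beta = \tfrac{1-p}{2p}$, so that $p(1+\beta) = \tfrac{1+p}{2}$ and $1-p(1+\beta) = \tfrac{1-p}{2}$; this forces the choice $\alpha = \tfrac{4\delta(\numclients-\selclients)}{\selclients(\numclients-1)(1-p)}$ so that the coefficient of the round-$t$ cluster-state error is $\le \alpha$, and it introduces $1 + \tfrac{1}{\beta} = \tfrac{1+p}{1-p} \le \tfrac{2}{1-p}$ into the coefficient of $\norm{\bv^{(t-1)}}$. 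Requiring that coefficient to be at most $\delta - \tfrac{\lrs^2 L}{2}$ (with $\delta = 2\lrs^2 L$) is precisely where the constraint $\lrss\lrc \le \tfrac{\sqrt{\selclients}(1-p)}{8L\tau}$ enters; the remaining constraints $\lrss\lrc\le\tfrac{\selclients}{16\tau L}$ and $\lrss\lrc\le\tfrac{1}{4L\tau}$ come from folding the $C^{(t)}$ coefficient $\tfrac{\lrs}{2}+\tfrac{4\delta(\numclients-\selclients)}{\selclients(\numclients-1)}+4\alpha(1-p)$ into a clean $3\lrs$ and from $\delta\le\tfrac{\lrs}{2}$ (i.e.\ $\lrs\le\tfrac{1}{4L}$), respectively, while $\lrc\le\tfrac{1}{10L\tau}$ is inherited from Lemma \ref{lem:FedAvg_grad_ht_err}.

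With these choices one obtains a one-step contraction
\begin{align*}
    R^{(t+1)} \le R^{(t)} - \tfrac{\lrs}{4}\norm{\G f(\bwt)} + \bigO{\lrs\lrc^2 L^2(\tau-1)(\sigma^2 + \tau\sigma_g^2)} + \bigO{\tfrac{\lrs^2 L \sigma^2}{\selclients\tau}} + \bigO{\tfrac{\lrs^2 L^2 \sigma_K^2 (\numclients-\selclients)}{\selclients(\numclients-1)}},
\end{align*}
where the last term is $4\alpha(1-p)\sigma_K^2$ after simplification. I would then telescope over $t=1,\dots,T$, bound $R^{(1)}$ exactly as in the \texttt{\algoname} proof (using $\by_k^{(0)}=\mathbf{0}$, so that $\norm{\bv^{(0)}-\bar{\bh}^{(0)}}$ and $\tfrac{1}{\numclients}\sumin\norm{\G f_i(\bw^{(0)})}$ are constants absorbed into the $\tfrac{1}{T}$ term), divide by $\tfrac{\lrs T}{4}$, and use $\lrs = \lrss\lrc\tau$ to read off the stated bound (the $\sigma_K^2$ term picking up the extra $\tau$ from $\lrs$, giving $\bigO{\lrss\lrc\tau L\sigma_K^2(\numclients-\selclients)/(\selclients(\numclients-1))}$). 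The corollary follows by plugging in $\lrc = \tfrac{1}{\sqrt{T}\tau L}$ and $\lrss = \sqrt{\tau\selclients}$; the quantity $1-p$ then affects only constants and the learning-rate thresholds.

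The main obstacle I expect is precisely step two: verifying that a \emph{single} choice of $(\alpha,\delta,\beta)$ simultaneously makes the coefficient of $\norm{\bv^{(t-1)}}$ at most $\delta - \tfrac{\lrs^2 L}{2}$ and the coefficient of the round-$t$ cluster-state error at most $\alpha$, under exactly the learning-rate bounds claimed, while keeping the multiplier of $\sigma_K^2$ at order $\tfrac{\lrss\lrc\tau L(\numclients-\selclients)}{\selclients(\numclients-1)}$. Because $p$ here is a ratio of binomial coefficients in $r$ rather than the transparent $\selclients/\numclients$ of \Cref{thm:\algoname}, these inequalities are somewhat less clean; the safest route is to carry $1-p$ symbolically throughout — dividing by it only in $\alpha$ and in $1+\tfrac{1}{\beta}$, both controlled once $\lrs$ is small enough — and only invoke an explicit lower bound on $1-p$ (of order $r\selclients/\numclients$) at the very end if a rate expressed in $r$ is desired.
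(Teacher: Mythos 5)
Your proposal follows the paper's proof of Theorem 3 essentially verbatim: the same Lyapunov function, the same three lemmas, the same choice $\beta=\tfrac{1-p}{2p}$ (the paper writes it as $\tfrac{1}{2p}-\tfrac{1}{2}$), the same $\alpha=\tfrac{4\delta(\numclients-\selclients)}{\selclients(1-p)(\numclients-1)}$ and $\delta=2\lrs^2 L$, and the same provenance for each learning-rate constraint, after which the corollary is read off by substituting $\lrc=\tfrac{1}{\sqrt{T}\tau L}$ and $\lrss=\sqrt{\tau\selclients}$ into the theorem's bound. The only slip is the stray $L^2$ (rather than $L$) in the $\sigma_K^2$ coefficient of your one-step contraction, which does not affect the final rate you state.
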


% Therefore we see a expected trade-off when using clustering: our error floor increases but we can potentially keep a larger learning rate implying faster convergence.
% \ps{Need some explanation here.}

\begin{proof}
The proof is analogous to the proof of Theorem 2 in Section \ref{app:\algoname}, with $1- \frac{\selclients}{\numclients}$ replaced by $p = \frac{\binom{\numclients - r}{\selclients}}{\binom{\numclients}{\selclients}}$.
We use the same Lyapunov function defined in \eqref{eq:lyapunov_defn}, with $\frac{\lrs^2 L}{2} \leq \delta \leq \frac{\lrs}{2}$.
Using Lemma \ref{:algoname_f_decay_one_step}, we get
\begin{align}
    R^{(t+1)} &\leq \mathbb{E}\left[f(\bwt) -\frac{\lrs}{2} \norm{\G f(\bwt)} + \frac{\lrs}{2} A^{(t)} + \delta \mbe_{\set^{(t)}, \xi^{(t)}} \norm{\bvt-\bar{\bh}^{(t)}} \right. \nn \\
    & \qquad \left. + \alpha\frac{1}{\numclients}\sum_{j=1}^{\numclients} \mbe_{\set^{(t)}, \xi^{(t)}} \norm{\G f_j(\bwt) - \by_{c_j}^{(t+1)}} \right], \label{eq_proof:thm:clusteralgoname_1}
\end{align}
where $A^{(t)} \triangleq \frac{1}{\numclients} \sumjn \Eg{\xi^{(t)}}{\norm{\G f_j(\bwt) - \bhjt}}$. Substituting the bounds from Lemma \ref{lem:clusteralgoname_vt_ht_err} and Lemma \ref{lem:clusteralgoname_grad_yt_err} in \eqref{eq_proof:thm:clusteralgoname_1} we get,
\begin{align}
    R^{(t+1)} & \leq \E{f(\bwt) -\frac{\lrs}{2}\norm{\G f(\bwt)}+ \left(\frac{\lrs}{2} + \frac{4\delta}{\selclients} \frac{(\numclients - \selclients)}{(\numclients - 1)} + 4 \alpha (1-p) \right) A^{(t)} + \frac{ \delta \sigma^2}{\selclients \tau} + 4 \alpha (1 - p) \lp  \frac{\sigma^2}{\tau} + \sigma^2_K \rp} \nn \\
    & \quad + \left(\frac{4\delta \lrs^2 L^2}{\selclients} \frac{(\numclients - \selclients)}{(\numclients - 1)} + \alpha p \left( 1 + \frac{1}{\beta} \right) \lrs^2 L^2 \right) \mbe \norm{\bv^{(t-1)}} \nn \\
    & \quad + \left( \frac{2\delta}{\selclients} \frac{(\numclients - \selclients)}{(\numclients - 1)} + \alpha p (1 + \beta) \right) \frac{1}{\numclients} \sum_{j=1}^{\numclients} \mbe \norm{\G f_j(\bw^{(t-1)}) - \by_{c_j}^{(t)} }. \label{eq_proof:thm:clusteralgoname_2}
\end{align}

\paragraph{Choice of $\alpha, \delta$.}
Our goal is now to find a suitable $\delta$ and $\alpha$ such that,
\begin{align*}
    \left(\frac{4\delta \lrs^2 L^2}{\selclients} \frac{(\numclients - \selclients)}{(\numclients - 1)} + \alpha p \left(1+\frac{1}{\beta}\right)\lrs^2 L^2\right) & \leq \delta - \frac{\lrs^2 L}{2}, \nn \\
    \left(\frac{2\delta}{\selclients} \frac{(\numclients - \selclients)}{(\numclients - 1)}  + \alpha p (1+\beta)\right) & \leq \alpha
\end{align*}
We define $A = p \left(1+\frac{1}{\beta} \right)$ and $B = p (1+\beta)$. The resulting condition on $\alpha$ is
\begin{align*}
    \alpha \geq \frac{2\delta}{\selclients (1-B)} \frac{(\numclients - \selclients)}{(\numclients - 1)} , \qquad \beta \leq \frac{1}{p} - 1.
\end{align*}
We set $\alpha = \frac{2\delta}{\selclients (1-B)} \frac{(\numclients - \selclients)}{(\numclients - 1)} $, and our condition on $\delta$ then reduces to,
\begin{align*}
    \delta \geq \frac{\lrs^2 L/2}{1 - \frac{4 \lrs^2 L^2}{\selclients} \frac{(\numclients - \selclients)}{(\numclients - 1)} - \frac{2 A \lrs^2 L^2}{\selclients (1-B)} \frac{(\numclients - \selclients)}{(\numclients - 1)} }
\end{align*}
We want $\lrs$ such that,
\begin{align*}
    \frac{4 \lrs^2 L^2}{\selclients} \frac{(\numclients - \selclients)}{(\numclients - 1)} + \frac{2 A \lrs^2 L^2}{\selclients (1-B)} \frac{(\numclients - \selclients)}{(\numclients - 1)} \leq \frac{1}{2}
\end{align*}
A sufficient condition for this is 
\begin{align}
    \frac{4 \lrs^2 L^2}{\selclients} \leq \frac{1}{4}, \qquad \frac{2 A \lrs^2 L^2}{\selclients (1-B)} \leq \frac{1}{4}. \label{eq:cond_lrs_4}
\end{align}
% \ps{This choice of $\beta$ won't work for full client participation. Need another condition for that simpler case.}
For $\beta = \frac{1}{2p} - \frac{1}{2}$, $B = \frac{p}{2} + \frac{1}{2}$ and $A \leq \frac{2}{1-p}$. Hence, a sufficient condition for \eqref{eq:cond_lrs_4} to be satisfied is
\begin{align*}
    \lrs \leq \frac{\sqrt{\selclients}(1-p)}{8 L} \quad \Rightarrow \quad \lrss \lrc \leq \frac{\sqrt{\selclients}(1-p)}{8 L \tau}
\end{align*}
With \eqref{eq:cond_lrs_4} we have $\delta \geq \lrs^2 L$. We set $\delta = 2\lrs^2 L$ which gives us $\alpha = \frac{8 \lrs^2 L}{\selclients (1-p)} \frac{(\numclients - \selclients)}{(\numclients - 1)} $. Since $\delta \leq \frac{\lrs}{2}$, we also need $\lrs \leq \frac{1}{4L}$.

With this choice of $\alpha, \delta$, from \eqref{eq_proof:thm:clusteralgoname_2} we get
\begin{align}
    R^{(t+1)} &\leq R^{(t)} - \frac{\lrs}{2} \mbe \norm{\G f(\bwt)} + \left(\frac{\lrs}{2} + \frac{4\delta}{\selclients} + 4 \alpha (1-p) \right) A^{(t)} + \frac{ \delta \sigma^2}{\selclients \tau} + 4 \alpha (1 - p) \lp  \frac{\sigma^2}{\tau} + \sigma^2_K \rp \nn \\
    &\leq R^{(t)} - \frac{\lrs}{2}\norm{\G f(\bwt)} +  3\lrs A^{(t)} + \frac{40 \lrs^2 L}{\selclients} \frac{\sigma^2}{\tau} + \frac{32 \lrs^2 L}{\selclients} \frac{(\numclients - \selclients)}{(\numclients - 1)} \sigma_K^2,
\end{align}
where we use the condition that $\lrs \leq \frac{\selclients}{16 L}$.
Further, we can bound $A^{(t)} = \frac{1}{\numclients}\sumjn\Eg{\xi^{(t)}}{\norm{\G f_j(\bwt) - \bhjt}}$ using Lemma \ref{lem:FedAvg_grad_ht_err}, which gives us
\begin{align*}
    R^{(t+1)} & \leq R^{(t)} - \frac{\lrs}{2} \lp 1 - 48 \lrc^2 L^2 \tau (\tau-1) \rp \norm{\G f(\bwt)} + 6 \lrs \lrc^2 L^2 (\tau-1) \lb \sigma^2 + 4 \tau \sigma_g^2 \rb \\
    & \quad + \frac{40 \lrs^2 L}{\selclients} \frac{\sigma^2}{\tau} + \frac{32 \lrs^2 L}{\selclients} \frac{(\numclients - \selclients)}{(\numclients - 1)} \sigma_K^2.
\end{align*}
Using the condition on $\lrc$ that $\lrc \leq \frac{1}{10 L \tau}$, and unrolling the recursion we get,
\begin{align}
    R^{(t)} &\leq R^{(1)}+  \sum_{t=1}^{(t-1)}\left(-\frac{\lrs}{4}\norm{\G f(\bwt)} + 6 \lrs \lrc^2 L^2 (\tau-1) \lb \sigma^2 + 4 \tau \sigma_g^2 \rb + \frac{40 \lrs^2 L}{\selclients} \frac{\sigma^2}{\tau} + \frac{32 \lrs^2 L}{\selclients} \frac{(\numclients - \selclients)}{(\numclients - 1)} \sigma_K^2 \right). \label{eq_proof:thm:clusteralgoname_3}
\end{align}
Next, we bound $R^{(1)}$.
Using \eqref{eq_proof:thm:clusteralgoname_1} and \eqref{eq_proof:lem:clusteralgoname_grad_yt_err_4} we can bound $R^{(1)}$ as follows,
\begin{align}
    R^{(1)} \leq & f(\bw^{(0)}) - \frac{\lrs}{2} \norm{\G f(\bw^{(0)})} + \left( \frac{\lrs}{2} + 4 \alpha (1-p) \right) A^{(0)} + 4 \alpha (1-p) \lp \frac{\sigma^2}{\tau} + \sigma_K^2 \rp \nn \\
    & \hspace{10pt} + \delta \Eg{\xi^{(0)}, \set^{(0)}}{\norm{\bv^{(0)} - \bar{\bh}^{(0)}}}+ \alpha p \frac{1}{\numclients} \sumin \norm{\G f_i(\bw^{(0)})} \nn \\
    & \leq f(\bw^{(0)}) -\frac{\lrs}{4}\norm{\G f(\bw^{(0)})} + 4 \lrs \lrc^2 L^2 (\tau-1) \lb \sigma^2 + 4 \tau \sigma_g^2 \rb + \frac{32 \lrs^2 L}{\selclients} \frac{(\numclients - \selclients)}{(\numclients - 1)} \lp \frac{\sigma^2}{\tau} + \sigma_K^2 \rp \nn \\
    & \hspace{10pt} + \delta \Eg{\xi^{(0)},\set^{(0)}}{\norm{\bv^{(0)}-\bar{\bh}^{(0)}}}+ \alpha p \frac{1}{\numclients}\sumin \norm{\G f_i(\bw^{(0)})}. \label{eq_proof:thm:clusteralgoname_4}
\end{align}
Substituting the bound on $R^{(1)}$ from \eqref{eq_proof:thm:clusteralgoname_4} in \eqref{eq_proof:thm:clusteralgoname_3}, and using $t = T$ we get
\begin{align*}
    R^{(T)} \leq & f(\bw^{(0)}) - \sum_{t=0}^{(T-1)}\left(\frac{\lrs}{4}\norm{\G f(\bwt)} + 6 \lrs \lrc^2 L^2 (\tau-1) \lb \sigma^2 + 4 \tau \sigma_g^2 \rb + \frac{40 \lrs^2 L}{\selclients} \frac{\sigma^2}{\tau} + \frac{32 \lrs^2 L}{\selclients} \frac{(\numclients - \selclients)}{(\numclients - 1)} \sigma_K^2 \right) \\
    & \hspace{10pt} + 2\lrs^2 L \Eg{\xi^{(0)},\set^{(0)}}{\norm{\bv^{(0)}-\bar{\bh}^{(0)}}}+ \frac{8 p \lrs^2 L}{\selclients (1-p)} \frac{(\numclients - \selclients)}{(\numclients - 1)} \frac{1}{\numclients}\sumin \norm{\G f(\bw^{(0)})} 
\end{align*}
Rearranging the terms, we get
\begin{align*}
    & \min_{t \in [0,T-1]} \mbe \norm{\G f(\bwt)} \leq \frac{1}{T} \sumtT \mbe \norm{\G f(\bwt)} \nn \\
    & \leq \frac{4(f(\bw^{(0)}) - f^{*})}{\lrs T} + \frac{160 \lrs L}{\selclients} \frac{\sigma^2}{\tau} + \frac{128 \lrs L}{\selclients} \frac{(\numclients - \selclients)}{(\numclients - 1)} \sigma_K^2 + 24\lrc^2L^2(\tau-1)\sigma^2 + 96\lrc^2L^2\tau(\tau-1)\sigma_g^2\\ 
    & \hspace{10pt} + \frac{1}{T} \lb 8 \lrs L \Eg{\xi^{(0)},\set^{(0)}}{\norm{\bv^{(0)}-\bar{\bh}^{(0)}}}+ \frac{32 p \lrs L}{\selclients (1-p)} \frac{(\numclients - \selclients)}{(\numclients - 1)} \frac{1}{\numclients}\sumin \norm{\G f(\bw^{(0)})}  \rb \\
    & = \mathcal{O} \left( \frac{f(\bw^{(0)})-f^*}{\lrs T} \right) + \mathcal{O}\left(\frac{\lrs L \sigma^2}{\selclients \tau} \right) + \mathcal{O}\left(\frac{\lrs L \sigma_K^2}{\selclients} \frac{(\numclients - \selclients)}{(\numclients - 1)} \right) + \mathcal{O}( \lrc^2L^2(\tau-1)\sigma^2 + \lrc^2 L^2 \tau(\tau-1)\sigma_g^2),
\end{align*}
which concludes the proof.
\end{proof}

% \newpage
\subsection{Proofs of the Intermediate Lemmas}
\label{app:clusteralgoname_int_results_proofs}

\begin{proof}[Proof of Lemma \ref{lem:clusteralgoname_vt_ht_err}]
The proof is analogous to proof of Lemma \ref{lem:clusteralgoname_vt_ht_err}. The only difference being that in \eqref{eq_proof::algoname_vt_ht_err_2}, we do not bound the term $\frac{\numclients - \selclients}{\numclients - 1}$ with $1$.
\end{proof}

\begin{proof}[Proof of Lemma \ref{lem:clusteralgoname_grad_yt_err}]
\begin{align}
    \Eg{\xi^{(t)},\set^{(t)}}{\frac{1}{\numclients} \sum_{j=1}^{\numclients} \norm{\nabla f_j(\bw^{(t)}) - \by_{c_j}^{(t+1)}}} & = \frac{1}{\numclients}\sum_{j=1}^{\numclients} \Eg{\xi^{(t)},\set^{(t)}}{\norm{\nabla f_j(\bw^{(t)}) - \by_{c_j}^{(t+1)}}}.
    % & \leq p\left[2\sigma^2_K + \frac{4\sigma^2}{\tau} + \frac{4}{\numclients}\sumjn\Eg{t}{\norm{\nabla f_j(\bw^{(t)}) - \bh_j^{(t)}}} \right]\\
    % & + \left(1-p\right)\left(1+\frac{1}{\beta}\right)\lrs^2\tau^2L^2\norm{v^{(t-1)}} + \left(1-p\right)(1+\beta)\frac{1}{\numclients}\sum_{j=1}^{\numclients}\norm{{\nabla f_j(\bw^{(t-1)}) - \by_j^{(t)}}} \nonumber
    \label{eq_proof:lem:clusteralgoname_grad_yt_err}
\end{align}
% where $p = 1-\frac{\binom{n-r}{\selclients}}{\binom{\numclients}{\selclients}}$

Let $\Cc_k^{(t)} = \{ i: c_i=k \text{ and } i \in \set^{(t)}\}$, i.e., the set of sampled clients which belong to the $k$-th cluster. For a specific cluster $c_j \in [K]$
\begin{align*}
    & \Eg{\set^{(t)}}{\norm{\nabla f_j(\bw^{(t)}) - \by_{c_j}^{(t+1)}}} \nn \\
    & = \Eg{\Cc_{c_j}^{(t)}}{ \mathbb{I} \lp |\Cc_{c_j}^{(t)}|\neq 0 \rp \norm{\nabla f_j(\bw^{(t)}) - \frac{\sum_{l \in \Cc_{c_j}^{(t)}}  \Delta_l^{(t)}}{|\Cc_{c_j}^{(t)}|}}+ \mathbb{I} \lp |\Cc_{c_j}^{(t)}| = 0 \rp\norm{\nabla f_j(\bw^{(t)}) - \by_{c_j}^{(t)}}} \tag{Cluster center update in \texttt{\clusteralgoname}} \\
    &= \Eg{\Cc_{c_j}^{(t)}}{ \mathbb{I} \lp |\Cc_{c_j}^{(t)}|\neq 0 \rp \norm{\nabla f_j(\bw^{(t)}) -  \frac{1}{|\Cc_{c_j}^{(t)}|} \sum_{l \in \Cc_{c_j}^{(t)}} \lp \Delta_l^{(t)} - \nabla f_l(\bw^{(t)}) + \nabla f_l(\bw^{(t)}) \rp}}\\
    & \hspace{10pt} + \Eg{\Cc_{c_j}^{(t)}}{\mathbb{I} \lp |\Cc_{c_j}^{(t)}| = 0 \rp\norm{\nabla f_j(\bw^{(t)}) - \by_{c_j}^{(t)}}}\\
    & \leq \Eg{\Cc_{c_j}^{(t)}}{ \mathbb{I} \lp |\Cc_{c_j}^{(t)}|\neq 0 \rp \left( \frac{2}{|\Cc_{c_j}^{(t)}|} \sum_{l \in C_{c_j}^{(t)}} \norm{\nabla f_j (\bw^{(t)}) - \nabla f_l (\bw^{(t)})} + \frac{2}{|\Cc_{c_j}^{(t)}|} \sum_{l \in \Cc_{c_j}^{(t)}} \norm{\nabla f_l(\bw^{(t)}) - \Delta_l^{(t)}} \right)} \\
    & \hspace{10pt} + \Eg{\Cc_{c_j}^{(t)}}{\mathbb{I} \lp |\Cc_{c_j}^{(t)}| = 0 \rp\norm{\nabla f_j(\bw^{(t)}) - \by_{c_j}^{(t)}}} \tag{Jensen's inequality; Young's inequality} \\
    &\leq\Eg{\Cc_{c_j}^{(t)}}{ \mathbb{I} \lp |\Cc_{c_j}^{(t)}|\neq 0 \rp\left(4\sigma_K^2+\frac{2}{|\Cc_{c_j}^{(t)}|}\sum_{l \in \Cc_{c_j}^{(t)}}\norm{\nabla f_l(\bw^{(t)})-\Delta_l^{(t)}}\right)}\\
    &\hspace{10pt} + \Eg{\Cc_{c_j}^{(t)}}{\mathbb{I} \lp |\Cc_{c_j}^{(t)}| = 0 \rp\norm{\nabla f_j(\bw^{(t)}) - \by_{c_j}^{(t)}}} \tag{Assumption []}
\end{align*}
Substituting in \eqref{eq_proof:lem:clusteralgoname_grad_yt_err} we get
\begin{align}
    & \sumin \Eg{\set^{(t)}}{\norm{\nabla f_j(\bw^{(t)}) - \by_{c_j}^{(t+1)}}} \nn \nn \\
    & \leq \sum_{k=1}^K  \left(4r\Eg{\Cc_k^{(t)}}{\mathbb{I}(|\Cc_k^{(t)}|\neq 0)}\sigma_K^2 + 2 \Eg{\Cc_k^{(t)}}{\mathbb{I}(|\Cc_k^{(t)}| \neq 0) \frac{r}{|\Cc_k^{(t)}|}\sum_{l \in \Cc_k^{(t)}}\norm{\nabla f_l(\bw^{(t)})-\Delta_l^{(t)}}} \right) \nn \\
    & \hspace{10pt} + \sumjn \Eg{\Cc_{c_j}^{(t)}}{\mathbb{I}(\Cc_{c_j}^{(t)} = 0)} \norm{\nabla f_j(\bw^{(t)}) - \by_{c_j}^{(t)}} \tag{$| \Cc_k^{(t)} | \leq r$} \nn \\
    & = \sum_{k=1}^{K} \left( 4 r (1-p) \sigma_K^2 + 2 \Eg{\Cc_k^{(t)}}{\mathbb{I}(|\Cc_k^{(t)}| \neq 0) \frac{r}{|\Cc_k^{(t)}|} \sum_{l \in \Cc_k^{(t)}} \norm{\nabla f_l(\bw^{(t)})-\Delta_l^{(t)}}} \right) \nn \\
    & \hspace{10pt} + p\sumjn\norm{\nabla f_j(\bw^{(t)})-\by_{c_j}^{(t)}}, \label{eq_proof:lem:clusteralgoname_grad_yt_err_2}
\end{align}
where $p = \Eg{\Cc_{c_j}^{(t)}}{\mathbb{I}(\Cc_{c_j}^{(t)} = 0)} = \frac{\binom{\numclients - r}{\selclients}}{\binom{\numclients}{r}}$ is the probability that no client from a particular cluster is sampled in $\set^{(t)}$ (same for all $j$ since we assumed equal number of devices in each cluster). Note that,
\begin{align}
    \Eg{\Cc_k^{(t)}}{\mathbb{I}(|\Cc_k^{(t)}| \neq 0) \frac{r}{|\Cc_k^{(t)}|} \sum_{l \in \Cc_k^{(t)}} \norm{\nabla f_l(\bw^{(t)})-\Delta_l^{(t)}}} &= \Eg{\Cc_k^{(t)}}{ \frac{r}{|\Cc_k^{(t)}|} \sum_{l \in \Cc_k} \mathbb{I}(|\Cc_k^{(t)}| \neq 0, l \in \Cc_k^{(t)}) \norm{\nabla f_l(\bw^{(t)})-\Delta_l^{(t)}}}\\
    & = \sum_{l \in \Cc_k} \norm{\nabla f_l(\bw^{(t)}) - \Delta_l^{(t)}}\sum_{z=1}^r \frac{r}{z}  \mathbb{P}(|\Cc_k^{(t)}| = z, l \in \Cc_k^{(t)})   \nn \\
    & = \sum_{l \in \Cc_k} \norm{\nabla f_l(\bw^{(t)}) - \Delta_l^{(t)}}\sum_{z=1}^r \frac{r}{z}\frac{\binom{r-1}{z-1}\binom{\numclients - r}{\selclients-z}}{\binom{\numclients}{\selclients}} \nn \\
    & = \sum_{l \in \Cc_k}\norm{\nabla f_l(\bw^{(t)}) - \Delta_l^{(t)}}\sum_{z=1}^r \frac{\binom{r}{z}\binom{\numclients-r}{\selclients-z}}{\binom{\numclients}{\selclients}} \nn \\
    & = (1-p)\sum_{l \in \Cc_k}\norm{\nabla f_l(\bw^{(t)}) - \Delta_l^{(t)}}. \label{eq_proof:lem:clusteralgoname_grad_yt_err_3}
\end{align}

Substituting the bounds from \eqref{eq_proof:lem:clusteralgoname_grad_yt_err_2}, \eqref{eq_proof:lem:clusteralgoname_grad_yt_err_3} in \eqref{eq_proof:lem:clusteralgoname_grad_yt_err}, we get
\begin{align}
    & \frac{1}{\numclients}\sumjn \Eg{\xi^{(t)}, \set^{(t)}}{\norm{\nabla f_j(\bw^{(t)})-\by_{c_j}^{(t+1)}}} \nn \\
    & \leq 4(1-p)\sigma_K^2 + 2(1-p)\frac{1}{\numclients}\sumjn\Eg{\xi^{(t)}}{\norm{\nabla f_j(\bw^{(t)})-\Delta_j^{(t)}}} + p\frac{1}{\numclients}\sumjn\norm{\nabla f_j(\bw^{(t)})-\by_{c_j}^{(t)}} \label{eq_proof:lem:clusteralgoname_grad_yt_err_4}\\
    &\leq (1-p)\left[4\sigma_K^2 + \frac{4\sigma^2}{\tau} + \frac{4}{\numclients}\sumjn\Eg{t}{\norm{\nabla f_j(\bw^{(t)}) - \bh_j^{(t)}}} \right] \nn \\
    & \quad + p\left(1+\frac{1}{\beta}\right) \lrs^2 L^2 \norm{\bv^{(t-1)}} + p (1 + \beta) \frac{1}{\numclients} \sum_{j=1}^{\numclients} \norm{{\nabla f_j(\bw^{(t-1)}) - \by_{c_j}^{(t)}}}, \nonumber
\end{align}
for any positive constant $\beta$.
\end{proof}

% \end{document}

\end{document}